\DeclarePairedDelimiter{\norm}{\lVert}{\rVert}
\theoremstyle{definition}
\newtheorem{theorem}{Theorem}
\newtheorem{assumption}{Assumption}
\newtheorem{proposition}{Proposition}
\newtheorem{lemma}{Lemma}
\newtheorem{remark}{Remark}
\title{
	Hybrid Feedback for Autonomous Navigation in Environments with Arbitrary Convex Obstacles 
}
\author{Mayur Sawant, Soulaimane Berkane, Ilia Polushin and Abdelhamid Tayebi  
	\thanks{This work was supported by the National Sciences and Engineering Research Council of Canada (NSERC), under the grants RGPIN-2020-06270, RGPIN-2020-0644 and RGPIN-2020-04759.}
	\thanks{M. Sawant, I. Polushin and A. Tayebi are with the Department of Electrical and Computer Engineering, Western University, London, ON N6A 3K7, Canada. (e-mail: {\tt\small msawant2, ipolushi, atayebi@uwo.ca}). S. Berkane is with the Department of Computer Science and Engineering, University of Quebec in Outaouais, 101 St-Jean Bosco, Gatineau, QC, J8X 3X7, Canada (e-mail: {\tt\small soulaimane.berkane@uqo.ca}). A. Tayebi and S. Berkane are also with the Department of Electrical Engineering, Lakehead University, Thunder Bay, ON P7B 5E1, Canada. (e-mail: {\tt\small atayebi, sberkane@lakeheadu.ca}).}%
	%\thanks{A. Tayebi is also with the Department of Electrical Engineering, Lakehead University, Thunder Bay, ON P7B 5E1, Canada. (e-mail: {\tt\small atayebi@lakeheadu.ca}).}
}%
\begin{document}

	\maketitle

%%%%%%%%%%%%%%%%%%%%%%%%%%%%%%%%%%%%%%%%%%%%%%%%%%%%%%%%%%%%%%%%%%%%%%%%%%%%%%%%

%%%%%%%%%%%%%%%%%%%%%%%%%%%%%%%%%%%%%%%%%%%%%%%%%%%%%%%%%%%%%%%%%%%%%%%%%%%%%%%%
\begin{abstract}
We develop an autonomous navigation algorithm for a robot operating in two-dimensional environments cluttered with obstacles having arbitrary convex shapes. The proposed navigation approach relies on a  hybrid feedback to guarantee global asymptotic stabilization of the robot towards a predefined target location while ensuring the forward invariance of the obstacle-free workspace. The main idea consists in designing an appropriate switching strategy between the \textit{move-to-target} mode and the \textit{obstacle-avoidance} mode based on the proximity of the robot with respect to the nearest obstacle. The proposed hybrid controller generates continuous velocity input trajectories when the robot is initialized away from the boundaries of the unsafe regions. Finally, we provide an algorithmic procedure for the sensor-based implementation of the proposed hybrid controller and validate its effectiveness through some simulation results.
\end{abstract}
\section{Introduction}
The development of autonomous navigation techniques in realistic environments is one of the major trends in mobile robotics. One of the widely explored techniques in that regard is the artificial potential fields (APF) \cite{khatib1986real}. A combination of an attractive field which pulls the robot towards the target location and a repulsive field which pushes the robot away from the obstacle boundaries is used to generate a collision-free path between the initial and final locations. However, this approach suffers from the presence of undesired local minima. To mitigate this issue, in \cite{koditschek1990robot}, the authors proposed a navigation function (NF)-based approach which, provided a proper parameter tuning, ensures almost global convergence of the robot towards the target location. The NF-based approaches are directly applicable to sphere world environments \cite{rimon1992exact}, \cite{verginis2021adaptive} or environments that contain sufficiently curved obstacles \cite{filippidis2012navigation}. To make it applicable for environments consisting of more general convex and star-shaped obstacles, one can utilize the diffeomorphic mappings provided in \cite{rimon1992exact}, \cite{li2018navigation}, to transform a given environment into a sphere world. However, in order to perform these diffeomorphic mappings, the robot must have global knowledge about the environment, which makes the NF-based mobile robot navigation schemes less attractive in practical applications. 

Another research work \cite{paternain2017navigation}, extended the NF-based approach to environments consisting of convex obstacles wherein the authors provided a sufficient condition on the eccentricity of the obstacles to ensure almost global convergence in the local neighbourhood of the \textit{a priori} unknown target location. However, this approach is limited to obstacles which are neither too flat nor too close to the target. In \cite{kumar2019navigation}, for the case of ellipsoidal worlds, the authors removed the flatness limitation in \cite{paternain2017navigation}, by providing a controller design which locally transforms the region near the obstacle into a spherical region by using the Hessian information. In \cite{loizou2011navigation}, the authors provided a methodology for the design of a harmonic potential-based NF, with almost global convergence guarantees, for autonomous navigation in \textit{a priori} known environments which are diffeomorphic to the point world. The proposed NF is \textit{correct-by-construction} \textit{i.e.}, it does not have undesired local minima by design. This work was extended in \cite{loizou2021correct}, for unknown environments with a sensor-based robot navigation approach. However, similar to \cite{paternain2017navigation}, the shape of the obstacles is assumed to become known when the robot visits their respective neighbourhood. In the research work \cite{arslan2019sensor}, the authors proposed a purely reactive power diagram-based approach for robots operating in environments cluttered with unknown but sufficiently separated and strongly convex obstacles while ensuring almost global asymptotic stabilization towards the target location. This approach has been extended in \cite{vasilopoulos2018reactive}, for partially known non-convex environments, wherein it is assumed that the robot has the geometrical information of the non-convex obstacles but not their locations in the workspace.

However, using the approaches discussed above, one can at best provide almost global convergence guarantees. The appearance of undesired equilibria is unavoidable when considering continuous time-invariant vector fields \cite{koditschek1990robot}. Recently, in \cite{vrohidis2018prescribed}, for a single robot navigation in the sphere world, the authors proposed a time-varying vector field planner which utilizes the prescribed performance control technique to impose predetermined convergence to some neighbourhood of the target location while avoiding collisions, from all initial conditions. In \cite{sanfelice2006robust} and \cite{casau2019hybrid}, hybrid control techniques are used to ensure robust global asymptotic stabilization in $\mathbb{R}^2$ of the robot towards the target location while avoiding collision with a single spherical obstacle. The approach in \cite{sanfelice2006robust} has been extended in \cite{poveda2018hybrid}, to steer a group of planar robots in formation towards the source of an unknown but measurable signal, while avoiding a single obstacle. In \cite{braun2018unsafe}, the authors proposed a hybrid control law to globally asymptotically stabilize a class of linear systems while avoiding neighbourhoods of unsafe isolated points.

 In other works such as \cite{matveev2011method}, \cite{berkane2021obstacle}, the proposed hybrid control techniques allow the robot to operate either in the \textit{obstacle-avoidance} mode when it is in close proximity of an obstacle or in the \textit{move-to-target} mode when it is away from the obstacles. The strategies used in these research works are similar to the point robot path planning algorithms referred to as the bug algorithms \cite{lumelsky1986dynamic}. For some special obstacle arrangements, when the robot instead of converging to the predefined target location, retraces a previously followed path, these algorithms terminate the path planning process establishing the failure to converge to the target location due to the presence of a closed trajectory around the target location. The authors in \cite{matveev2011method}, \cite{berkane2021obstacle}, remove these special scenarios by restricting the possible inter-obstacle arrangements. For example, in \cite[Assumption 10]{matveev2011method}, the authors assume that if there is a blocking obstacle $\mathcal{O}_b$ for a given obstacle $\mathcal{O}_a$, then every point on the boundary of the blocking obstacle $\mathcal{O}_b$, which can be connected to the target via a line segment without intersecting the interior of the obstacle $\mathcal{O}_b$, must lie closer to the target location than any point on the given obstacle $\mathcal{O}_a$, in the sense of the Euclidean norm. In \cite[Theorem 2]{berkane2021obstacle}, for the case of ellipsoidal worlds, the authors require the obstacles to be sufficiently pairwise disjoint, see \cite[Definition 2]{berkane2021obstacle}.
%  \begin{figure}[H]
%     \centering
%     \includegraphics[width = 0.7\linewidth]{Images/assumptions_other_papers.pdf}
%     \caption{Illustrations of the restrictions imposed on the inter-obstacle arrangements in the research works: Case (a) \cite{matveev2011method}, (b) \cite{berkane2021obstacle}.}
%     \label{imposed_restrictions}
% \end{figure}

In this paper, we propose a hybrid controller that allows to steer a holonomic planar robot, modelled as a single integrator, to reach a predefined target location while avoiding convex obstacles. The proposed controller, enjoying global asymptotic stability guarantees, operates in the \textit{move-to-target} mode when the robot is away from the obstacles and in the \textit{obstacle-avoidance} mode when it is in close proximity to an obstacle. The main contributions of the proposed research work are as follows:
% \footnote{Since the present work exhibit Zeno behaviour, need to rethink about the robustness aspect of the proposed solution. {\color{red} There are two types of robustness for this kind of hybrid feedback: robustness to existence of solutions and robustness of asymptotic stability. I suggest you read carefully \cite{sanfelice2006robust} and the examples therein. Note that solutions fail to exist if noise leads you to believe that the state is outside $\mathcal{F}\cup\mathcal{J}$ and noise can also keep you switching (though solution exists) and prevents converging to $\mathcal{A}$. The robustness margins for these two types might be different. My suggestion is to completely remove this robustness claim or have a rigorous robustness analysis in the paper. }}  
 \begin{enumerate}
     \item \textit{Global asymptotic stability:} The proposed autonomous navigation solution provides global asymptotic stability guarantees for robots operating in environments with convex obstacles of arbitrary shapes. Note that the few existing results in the literature achieving such strong stability results are of hybrid type and are restricted to elliptically-shaped convex obstacles \cite{berkane2021obstacle}.  
     \item \textit{Arbitrarily-shaped convex obstacles:} The proposed hybrid feedback controller is applicable to environments consisting of convex obstacles with arbitrary shapes. Compared to this, the recently developed separating hyperplane-based approach is restricted to smooth obstacles which satisfy some curvature conditions \cite[Assumption 2]{arslan2019sensor}. Similarly, in \cite{berkane2021obstacle}, the obstacles are assumed to be ellipsoidal.
     \item \textit{Arbitrary inter-obstacle arrangements:} There are no restrictions on the inter-obstacle arrangements such as those in \cite[Assumption 10]{matveev2011method}, \cite[Theorem 2]{berkane2021obstacle}, except for the widely used mild ones stated in Assumption 1, \textit{i.e.}, the robot can pass in between any two obstacles while maintaining a positive distance.
     \item \textit{Continuous vector field:} The proposed hybrid controller generates continuous velocity input trajectories as long as the robot is initialized away from the boundaries of the unsafe regions. This is a very interesting feature for practical implementations that distinguishes our approach with respect to the hybrid approach of \cite{berkane2021obstacle}.      
     \item \textit{Applicable in \text{a priori} unknown environments:} The proposed obstacle avoidance approach can be implemented using only range scanners ({\it e.g.,} LiDAR)  without \textit{a priori} global knowledge of the environment (sensor-based technique). 
 \end{enumerate}

The remainder of paper is organized as follows. In Section \ref{sec:preliminaries}, we provide the notations and some preliminaries that will be used throughout the paper. The problem formulation is given in Section \ref{sec:problem_statement}, and the proposed hybrid control algorithm is presented in Section \ref{sec:hybrid_controller_design}. The stability and safety guarantees of the proposed navigation control scheme are provided in Section \ref{sec:stability}. A sensor-based implementation of the proposed obstacle avoidance algorithm, using 2D range scanners (LiDAR), is given in Section \ref{sensor-based-implementation}. Simulation results are given in Section \ref{section:simulation} to illustrate the effectiveness of the algorithm, and the paper is wrapped up with some concluding remarks in Section \ref{sec:conclusion}.

\section{Notations and Preliminaries}\label{sec:preliminaries}
\subsection{Notations} The sets of real, non-negative real and natural numbers are denoted by $\mathbb{R}$, $\mathbb{R}_{\geq}$ and $\mathbb{N}$, respectively. We identify vectors using bold lowercase letters. The Euclidean norm of a vector $\mathbf{p}\in\mathbb{R}^n$ is denoted by $\norm{\mathbf{p}}$, and an Euclidean ball of radius $r>0$ centered at $\mathbf{p}$ is represented by $\mathcal{B}_r(\mathbf{p}) = \{\mathbf{q}\in\mathbb{R}^n|\norm{\mathbf{q} - \mathbf{p}} \leq r\}$. Let $\mathbf{p}, \mathbf{q}\in\mathbb{R}^2$, then $\alpha_s(\mathbf{p}, \mathbf{q})$ denotes the angle measured from the vector $\mathbf{p}$ to $\mathbf{q}$. In this case, measurement in the counter-clockwise direction is considered positive. 

For two sets $\mathcal{A}, \mathcal{B}\subset\mathbb{R}^n$, the relative complement of $\mathcal{B}$ with respect to $\mathcal{A}$ is denoted by $\mathcal{A}\backslash\mathcal{B} =\{\mathbf{a}\in\mathcal{A}|\mathbf{a}\notin \mathcal{B}\}$. Given a set $\mathcal{A}$, the symbols $\partial\mathcal{A}, \mathcal{A}^{\circ}$, $\mathcal{A}^c$ and $\bar{\mathcal{A}}$ represent the boundary, interior, complement and the closure of the set $\mathcal{A}$, respectively, where $\partial\mathcal{A} = \bar{\mathcal{A}}\backslash\mathcal{A}^{\circ}$. Let $\mathcal{V}\subset\mathbb{R}^2$ be a continuous curve with two end points $\mathbf{v}_1$ and $\mathbf{v}_2$, then $\mathcal{V}^{\bullet} = \mathcal{V}\backslash\{\mathbf{v}_1, \mathbf{v}_2\}$ denotes the curve $\mathcal{V}$ without the end points. Given two sets $\mathcal{A}\subset\mathbb{R}^n$ and $\mathcal{B}\subset\mathbb{R}^n$, the Minkowski sum of $\mathcal{A}$ and $\mathcal{B}$ is denoted by, $\mathcal{A} \oplus\mathcal{B} = \{\mathbf{a} + \mathbf{b}|\mathbf{a}\in\mathcal{A}, \mathbf{b}\in\mathcal{B}\}$ and if $\mathcal{A}$ and $\mathcal{B}$ are convex then $\mathcal{A}\oplus\mathcal{B}$ is also convex \cite[Theorem 3.1]{rockafellar1970convex}. Given a set $\mathcal{A}$, its dilated version with $r \geq 0$, is represented by $\mathcal{D}_r(\mathcal{A}) = \mathcal{A} \oplus\mathcal{B}_r(\mathbf{0})$.

\subsection{Projection maps} 

\subsubsection{Projection on a set}
The Euclidean distance of a point $\mathbf{q}\in\mathbb{R}^n$ to a closed set $\mathcal{V}\subset\mathbb{R}^n$ is given by
\begin{equation}
    d(\mathbf{q}, \mathcal{V})  = \underset{\mathbf{a}\in\mathcal{V}}{\text{min }}\norm{\mathbf{q} - \mathbf{a}}.
\end{equation}
We denote all points in the set $\mathcal{V}$ which are closest to $\mathbf{q}$, in the sense of the Euclidean norm, as $\Pi(\mathbf{q}, \mathcal{V})$ such that
\begin{equation}
    \Pi(\mathbf{q}, \mathcal{V})  = \underset{\mathbf{a}\in\mathcal{V}}{\text{arg min }}\norm{\mathbf{q} - \mathbf{a}},    \label{projection_on_a_set}
\end{equation}
where $\Pi(\mathbf{q}, \mathcal{V})$ is referred to as the projection of $\mathbf{q}$ on $\mathcal{V}$, and if the set $\mathcal{V}$ is convex, then $\Pi(\mathbf{q}, \mathcal{V})$ is a singleton \cite[Section 8.1]{boyd2004convex}.

\subsubsection{Orthogonal rotation}
The orthogonal rotation of a non-zero vector $\mathbf{p}\in\mathbb{R}^2\backslash\{\mathbf{0}\}$, represented by $\nu_z(\mathbf{p})$ is defined as
\begin{equation}
    \nu_z(\mathbf{p}) = \begin{bmatrix}0 & z\\ -z & 0\end{bmatrix}\mathbf{p}, \; z\in\{-1, 1\},
\end{equation}
where $z = 1$ corresponds to the clockwise rotation while $z = -1$ denotes the counter-clockwise rotation of the vector $\mathbf{p}$, respectively.

\subsection{Geometric subsets of $\mathbb{R}^n$}
\subsubsection{Line} Let $\mathbf{q}\in\mathbb{R}^n\backslash\{\mathbf{0}\}$, and $\mathbf{p}\in\mathbb{R}^n$, then a line passing through $\mathbf{p}$ in the direction of $\mathbf{q}$ is defined as
    \begin{equation}
        \mathcal{L}(\mathbf{p}, \mathbf{q}) := \{\mathbf{x}\in\mathbb{R}^n|\mathbf{x} = \mathbf{p} + \lambda\mathbf{q}, \lambda \in \mathbb{R}\}\label{equation_of_line}.
    \end{equation}
    If $\lambda \geq 0$ (respectively, $\lambda > 0$), then we get the positive half-line $\mathcal{L}_{\geq}(\mathbf{p}, \mathbf{q})$ (respectively, $\mathcal{L}_{>}(\mathbf{p}, \mathbf{q})$). Similarly, we define the negative half-lines for $\lambda \leq 0$ and $\lambda < 0$ as $\mathcal{L}_{\leq}(\mathbf{p}, \mathbf{q})$ and $\mathcal{L}_{<}(\mathbf{p}, \mathbf{q})$, respectively.
 
\subsubsection{Line segment} Let $\mathbf{p}\in\mathbb{R}^n$ and $\mathbf{q}\in\mathbb{R}^n$, then a line segment joining $\mathbf{p}$ and $\mathbf{q}$ is given by
\begin{equation}
    \mathcal{L}_s(\mathbf{p}, \mathbf{q}) := \{\mathbf{x}\in\mathbb{R}^n|\mathbf{x} = \lambda \mathbf{p} + (1 - \lambda) \mathbf{q}, \lambda \in[0, 1]\}.\label{line_segment}
\end{equation}
    
\subsubsection{Hyperplane} Given $\mathbf{p}\in\mathbb{R}^n$, and $\mathbf{q}\in\mathbb{R}^n\backslash\{\mathbf{0}\}$, a hyperplane passing through $\mathbf{p}$ and orthogonal to $\mathbf{q}$ is given by
    \begin{align}
    \mathcal{P}(\mathbf{p}, \mathbf{q}) := \{\mathbf{x}\in\mathbb{R}^n| \mathbf{q}^\intercal(\mathbf{x} - \mathbf{p}) = 0\}.\label{equation_of_hyperplane}
    \end{align}
    The hyperplane divides the Euclidean space $\mathbb{R}^n$ into two half-spaces \textit{i.e.}, a closed positive half-space $\mathcal{P}_{\geq}(\mathbf{p}, \mathbf{q})$ and a closed negative half-space $\mathcal{P}_{\leq}(\mathbf{p}, \mathbf{q})$ which are obtained by substituting `$=$' with `$\geq$' and `$\leq$' respectively, in the right-hand side of \eqref{equation_of_hyperplane}. We also use the notations $\mathcal{P}_{>}(\mathbf{p}, \mathbf{q})$ and $\mathcal{P}_{<}(\mathbf{p} ,\mathbf{q})$ to denote the open positive and the open negative half-spaces such that $\mathcal{P}_{>}(\mathbf{p}, \mathbf{q}) = \mathcal{P}_{\geq}(\mathbf{p}, \mathbf{q})\backslash\mathcal{P}(\mathbf{p}, \mathbf{q})$ and $\mathcal{P}_{<}(\mathbf{p} ,\mathbf{q})= \mathcal{P}_{\leq}(\mathbf{p}, \mathbf{q})\backslash\mathcal{P}(\mathbf{p}, \mathbf{q})$.
\subsubsection{Supporting hyperplane \cite{boyd2004convex}} Given a closed convex set $\mathcal{A}\subset\mathbb{R}^n$, $\mathbf{p}\in\partial\mathcal{A}$ and $\mathbf{q}\in\mathbb{R}^n\backslash\{ \mathbf{0}\}$, a hyperplane $\mathcal{P}(\mathbf{p}, \mathbf{q})$ is a supporting hyperplane to $\mathcal{A}$ at point $\mathbf{p}$, if
\begin{align}
\mathbf{q}^{\intercal}(\mathbf{x} - \mathbf{p}) \leq 0, \;\forall \mathbf{x}\in\mathcal{A}.\label{supporting_hyperplane_condition}
\end{align}
In this case, the vector $\mathbf{q}$ is normal to the set $\mathcal{A}$ at $\mathbf{p}$, the supporting hyperplane $\mathcal{P}(\mathbf{p}, \mathbf{q})$ is tangent to $\mathcal{A}$ at $\mathbf{p}$, and the negative half-space $\mathcal{P}_{\leq}(\mathbf{p}, \mathbf{q})$ contains $\mathcal{A}.$
    
\subsubsection{Convex cone \cite{boyd2004convex}} Given $\mathbf{p}_1\in\mathbb{R}^2\backslash\mathbf{0}$, and $ \mathbf{p}_2\in\mathbb{R}^2\backslash\mathbf{0}$, a convex cone $\mathcal{C}(\mathbf{p}_1, \mathbf{p}_2)$ with its vertex at the origin is defined as
    \begin{equation}
        \mathcal{C}(\mathbf{p}_1, \mathbf{p}_2) :=  \{\mathbf{x}\in\mathbb{R}^2|\mathbf{x} = \lambda_1\mathbf{p}_1 + \lambda_2\mathbf{p}_2, \forall\lambda_1 \geq 0, \forall\lambda_2 \geq 0\}.
    \end{equation}
 %   In this case, $\partial\mathcal{C}(\mathbf{p}_1, \mathbf{p}_2) = \mathcal{L}_{\geq}(\mathbf{0}, \mathbf{p}_1)\cup\mathcal{L}_{\geq}(\mathbf{0}, \mathbf{p}_2).$

The next Lemma provides a property of the projection \eqref{projection_on_a_set} such that given a closed convex set $\mathcal{V}$ and a point $\mathbf{q}\in\mathbb{R}^n$ outside the set $\mathcal{V}$, the vector joining the projection of $\mathbf{q}$ on $\mathcal{V}$ with the point $\mathbf{q}$ is always normal to the dilated versions of the set $\mathcal{V}$, $\mathcal{D}_{r}(\mathcal{V})$ where $r\in[0, d(\mathbf{q}, \mathcal{V})].$
\begin{lemma}
Let $\mathcal{V}\subset\mathbb{R}^n$ be a closed convex set and $\mathbf{q}\in\mathbb{R}^n\backslash\mathcal{V}$. Then $(\mathbf{q} - \Pi(\mathbf{q}, \mathcal{V}))$ is normal to $\mathcal{D}_r(\mathcal{V})$ at $\Pi(\mathbf{q}, \mathcal{D}_r(\mathcal{V}))$ and $\mathcal{P}(\Pi(\mathbf{q}, \mathcal{D}_r(\mathcal{V})), (\mathbf{q} - \Pi(\mathbf{q}, \mathcal{V})))$ is a supporting hyperplane to $\mathcal{D}_r(\mathcal{V})$ at $\Pi(\mathbf{q}, \mathcal{D}_r(\mathcal{V}))$ for $r\in[0, d(\mathbf{q}, \mathcal{V})]$.
\label{lemma:normal_to_dilated_obstacle}
\end{lemma}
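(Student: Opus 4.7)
The plan is to first identify $\Pi(\mathbf{q},\mathcal{D}_r(\mathcal{V}))$ explicitly, and then verify the supporting hyperplane condition \eqref{supporting_hyperplane_condition} directly at that point with the claimed normal direction. Writing $\mathbf{p}:=\Pi(\mathbf{q},\mathcal{V})$ and $\mathbf{n}:=\mathbf{q}-\mathbf{p}$, I note that $\mathcal{B}_r(\mathbf{0})$ is closed and convex, so the dilated set $\mathcal{D}_r(\mathcal{V})$ is closed and convex as well, and in particular its projection map is single-valued. The standard variational characterization of projection onto a closed convex set gives $\mathbf{n}^\intercal(\mathbf{x}-\mathbf{p})\leq 0$ for every $\mathbf{x}\in\mathcal{V}$, so $\mathbf{n}$ is already normal to $\mathcal{V}$ at $\mathbf{p}$; the task is to propagate this normality out to $\mathcal{D}_r(\mathcal{V})$.

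Next I would introduce the candidate $\mathbf{p}_r := \mathbf{p} + (r/\norm{\mathbf{n}})\mathbf{n}$ and prove $\mathbf{p}_r = \Pi(\mathbf{q},\mathcal{D}_r(\mathcal{V}))$. Membership $\mathbf{p}_r\in\mathcal{D}_r(\mathcal{V})$ is immediate from $\mathbf{p}\in\mathcal{V}$ and $\norm{\mathbf{p}_r - \mathbf{p}} = r$, using $\norm{\mathbf{n}} = d(\mathbf{q},\mathcal{V})\geq r$. A direct calculation yields $\norm{\mathbf{q}-\mathbf{p}_r} = \norm{\mathbf{n}} - r$. To show this is the minimum distance, take any $\mathbf{y}=\mathbf{a}+\mathbf{b}\in\mathcal{D}_r(\mathcal{V})$ with $\mathbf{a}\in\mathcal{V}$ and $\norm{\mathbf{b}}\leq r$, and apply the reverse triangle inequality:
\begin{align*}
\norm{\mathbf{q}-\mathbf{y}} \geq \norm{\mathbf{q}-\mathbf{a}} - \norm{\mathbf{b}} \geq d(\mathbf{q},\mathcal{V}) - r = \norm{\mathbf{q}-\mathbf{p}_r}.
\end{align*}
Uniqueness of the projection onto the closed convex set $\mathcal{D}_r(\mathcal{V})$ then pins down $\Pi(\mathbf{q},\mathcal{D}_r(\mathcal{V})) = \mathbf{p}_r$.

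To conclude, I would verify \eqref{supporting_hyperplane_condition} with base point $\mathbf{p}_r$ and normal $\mathbf{n}$. For any $\mathbf{x}=\mathbf{a}+\mathbf{b}\in\mathcal{D}_r(\mathcal{V})$ as above, expanding gives $\mathbf{n}^\intercal(\mathbf{x}-\mathbf{p}_r) = \mathbf{n}^\intercal(\mathbf{a}-\mathbf{p}) + \mathbf{n}^\intercal\mathbf{b} - r\norm{\mathbf{n}}$. The first summand is $\leq 0$ by normality of $\mathbf{n}$ to $\mathcal{V}$ at $\mathbf{p}$, and Cauchy--Schwarz yields $\mathbf{n}^\intercal\mathbf{b} \leq \norm{\mathbf{n}}\norm{\mathbf{b}} \leq r\norm{\mathbf{n}}$, so the entire expression is $\leq 0$. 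This delivers both stated conclusions simultaneously. The only delicate case is the boundary value $r = d(\mathbf{q},\mathcal{V})$, where $\mathbf{q}=\mathbf{p}_r\in\partial\mathcal{D}_r(\mathcal{V})$ and $\norm{\mathbf{q}-\mathbf{p}_r}=0$; however, $\mathbf{n}\neq\mathbf{0}$ since $\mathbf{q}\notin\mathcal{V}$, so the supporting hyperplane remains well-defined and the same inequality argument goes through. I expect no serious obstacle beyond correctly identifying $\mathbf{p}_r$ and handling this edge case; the heart of the proof is the single triangle inequality/Cauchy--Schwarz computation above.
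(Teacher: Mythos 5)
Your proof is correct, and it takes a genuinely more self-contained route than the paper's. Both arguments hinge on the same geometric fact---that $\Pi(\mathbf{q},\mathcal{D}_r(\mathcal{V}))$ lies on the segment from $\Pi(\mathbf{q},\mathcal{V})$ to $\mathbf{q}$---but you establish it by exhibiting the explicit candidate $\mathbf{p}_r=\mathbf{p}+(r/\norm{\mathbf{n}})\mathbf{n}$ and showing it attains the minimum distance $d(\mathbf{q},\mathcal{V})-r$ via the reverse triangle inequality over the Minkowski decomposition $\mathbf{y}=\mathbf{a}+\mathbf{b}$, whereas the paper projects $\mathbf{q}$ onto the ball $\mathcal{B}_r(\Pi(\mathbf{q},\mathcal{V}))$ and rules out any other minimizer by contradiction. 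More substantively, the paper then simply invokes the general fact that the vector from an external point to its projection onto a closed convex set is normal there (a ``touching ball'' argument backed by a citation) and transfers this to $\mathbf{q}-\Pi(\mathbf{q},\mathcal{V})$ by collinearity; you instead verify the supporting-hyperplane inequality \eqref{supporting_hyperplane_condition} from scratch, splitting $\mathbf{n}^\intercal(\mathbf{x}-\mathbf{p}_r)$ into a term controlled by the variational characterization of $\Pi(\cdot,\mathcal{V})$ and a term controlled by Cauchy--Schwarz. Your version buys explicitness and avoids any appeal to the unstated normality-of-projection fact; the paper's is shorter because it leans on cited results. One pedantic remark: closedness of $\mathcal{D}_r(\mathcal{V})$ does not follow merely from both summands being closed (Minkowski sums of closed sets need not be closed); it follows because $\mathcal{B}_r(\mathbf{0})$ is compact. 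This does not damage your argument, since your direct computation already identifies $\mathbf{p}_r$ as a minimizer and strict convexity of the norm (or the standard uniqueness result, once closedness is justified via compactness) pins it down as the unique one.
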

\begin{proof}
See Appendix \ref{proof:normal_lemma}.
\end{proof}
\subsection{Hybrid system framework}
\label{sec:hybrid_threory}
A hybrid dynamical system \cite{goedel2012hybrid} is represented using differential and difference inclusions for the state $\mathbf{\xi}\in\mathbb{R}^n$ as follows:
\begin{align}
    \begin{cases}\begin{matrix}\mathbf{\dot{\xi}} \in \mathbf{F}(\mathbf{\xi}) , & \mathbf{\xi} \in \mathcal{F}, \\
    \mathbf{\xi}^{+}\in \mathbf{J}(\mathbf{\xi}), & \mathbf{\xi}\in\mathcal{J},\end{matrix}\end{cases}\label{hybrid_system_general_model}
\end{align}
where the \textit{flow map} $\mathbf{F}:\mathbb{R}^n\rightrightarrows\mathbb{R}^n$ is the differential inclusion which governs the continuous evolution when $\mathbf{\xi}$ belongs to the \textit{flow set} $\mathcal{F}\subseteq\mathbb{R}^n$, where the symbol `$\rightrightarrows$' represents set-valued mapping. The \textit{jump map} $\mathbf{J}:\mathbb{R}^n\rightrightarrows\mathbb{R}^n$ is the difference inclusion that governs the discrete evolution when $\mathbf{\xi}$ belongs to the \textit{jump set} $\mathcal{J}\subseteq\mathbb{R}^n$. The hybrid system \eqref{hybrid_system_general_model} is defined by its data and denoted as $\mathcal{H} = (\mathcal{F}, \mathbf{F}, \mathcal{J}, \mathbf{J}).$

A subset $\mathbb{T}\subset\mathbb{R}_{\geq}\times\mathbb{N}$ is a \textit{hybrid time domain} if it is a union of a finite or infinite sequence of intervals $[t_j, t_{j + 1}]\times \{j\},$ where the last interval (if existent) is possibly of the form $[t_j, T)$ with $T$ finite or $T = +\infty$. The ordering of points on each hybrid time domain is such that $(t, j)\preceq(t^{\prime}, j^{\prime})$ if $t < t^{\prime},$ or $t = t^{\prime}$ and $j \leq j^{\prime}$. A \textit{hybrid solution} $\phi$ is maximal if it cannot be extended, and complete if its domain dom $\phi$ (which is a hybrid time domain) is unbounded.

\section{Problem Formulation}
\label{sec:problem_statement}We consider a disk-shaped robot with radius $r \geq 0$, operating in a two dimensional Euclidean space $\mathcal{W}\subseteq\mathbb{R}^2$ as shown in Fig. \ref{list_of_dilated_regions}. The workspace contains a finite number of compact convex obstacles $\mathcal{O}_i\subset\mathcal{W}, i\in\{1, \ldots, b\} := \mathbb{I}$, where $b \in\mathbb{N}$ is the total number of obstacles. The task is to reach a predefined obstacle-free target location from any obstacle-free region while avoiding collisions. Without loss of generality, consider the origin $\mathbf{0}$ as the target location. Throughout this work we will make the following feasibility assumptions.
\begin{assumption}The minimum separation between any pair of obstacles should be greater than $2r$ \textit{i.e.,} for all $i, j\in\mathbb{I}, i\ne j$, one has
$
d(\mathcal{O}_i, \mathcal{O}_j):=\underset{\mathbf{p}\in\mathcal{O}_i, \mathbf{q}\in\mathcal{O}_j}{\text{ min }}\norm{\mathbf{p} - \mathbf{q}}>  2r.
$
\label{assumption:robot_pass_through}
\end{assumption}

According to Assumption \ref{assumption:robot_pass_through} and the compactness of the obstacles, there exists a minimum separating distance between any pair of obstacles $\bar{r} = \underset{i, j\in\mathbb{I}, i\ne j}{\min}d(\mathcal{O}_i, \mathcal{O}_j) > 2r$. Moreover, for collision-free navigation we require $d(\mathbf{0}, \mathcal{O}_{\mathcal{W}}) - r > 0$, where $\mathcal{O}_{\mathcal{W}} := \bigcup_{i\in\mathbb{I}}\mathcal{O}_i$. We define a positive real $\bar{r}_s$ as
\begin{equation}
    \bar{r}_s = \underset{}{\min}\left\{\frac{\bar{r}}{2} - r, d(\mathbf{0}, \mathcal{O}_{\mathcal{W}}) -r\right\}.\label{choice_of_epsilon_d}
\end{equation}
We then pick an arbitrarily small value $r_s\in(0, \bar{r}_s)$ as the minimum distance that the robot should maintain with respect to any obstacle.
 %as the maximum safety margin that the robot has with respect to the obstacles.

The obstacle-free workspace is then defined as
\begin{equation}
    \mathcal{W}_0 := \mathcal{W}\backslash\bigcup_{i\in\mathbb{I}}\big(\mathcal{O}_i\big)^{\circ}.\nonumber
\end{equation}
Given $y\geq0$, an eroded version of the obstacle-free workspace, $\mathcal{W}_y$ is defined as
\begin{equation}
    \mathcal{W}_y := \mathcal{W}\backslash\bigcup_{i\in\mathbb{I}}\big(\mathcal{D}_{y}(\mathcal{O}_i)\big)^{\circ} \subset\mathcal{W}_0.\label{eroded_workspace}
\end{equation}
Hence, $\mathcal{W}_{r_a}$ with $r_a = r + r_s$ is a free workspace with respect to the center of the robot \textit{i.e.}, $\mathbf{x} \in\mathcal{W}_{r_a} \iff\mathcal{B}_{r_a}(\mathbf{x})\subset\mathcal{W}_0$. The robot is governed by a single integrator dynamics
\begin{equation}
    \mathbf{\dot{x}} = \mathbf{u},\label{single_integrator_control_law}
\end{equation}
where $\mathbf{u}\in\mathbb{R}^2$ is the control input. Given a target location in the interior of the obstacle-free workspace \textit{i.e.}, $\mathbf{0}\in\big(\mathcal{W}_{r_a}\big)^{\circ}$, we aim to design a feedback control law such that:
\begin{enumerate}
    \item the obstacle-free space $\mathcal{W}_{r_a}$ is forward invariant,
    \item the target location $\mathbf{x} = \mathbf{0}$ is a globally asymptotically stable equilibrium for the closed-loop system.
\end{enumerate}
%Here, $r_s$ is the minimum safety distance that the robot should maintain with respect to the obstacles' boundaries in order to ensure collision-free navigation. The extra minimum separation of $2\epsilon_d$ in addition to $2r_a$ between any pair of obstacles is crucial for the design of continuously differentiable robot trajectories which are initialized away from the boundary of the unsafe region as it will be discussed later in Section \ref{sec:control_law}.

\begin{figure}
    \centering
    \includegraphics[width = 0.8\linewidth]{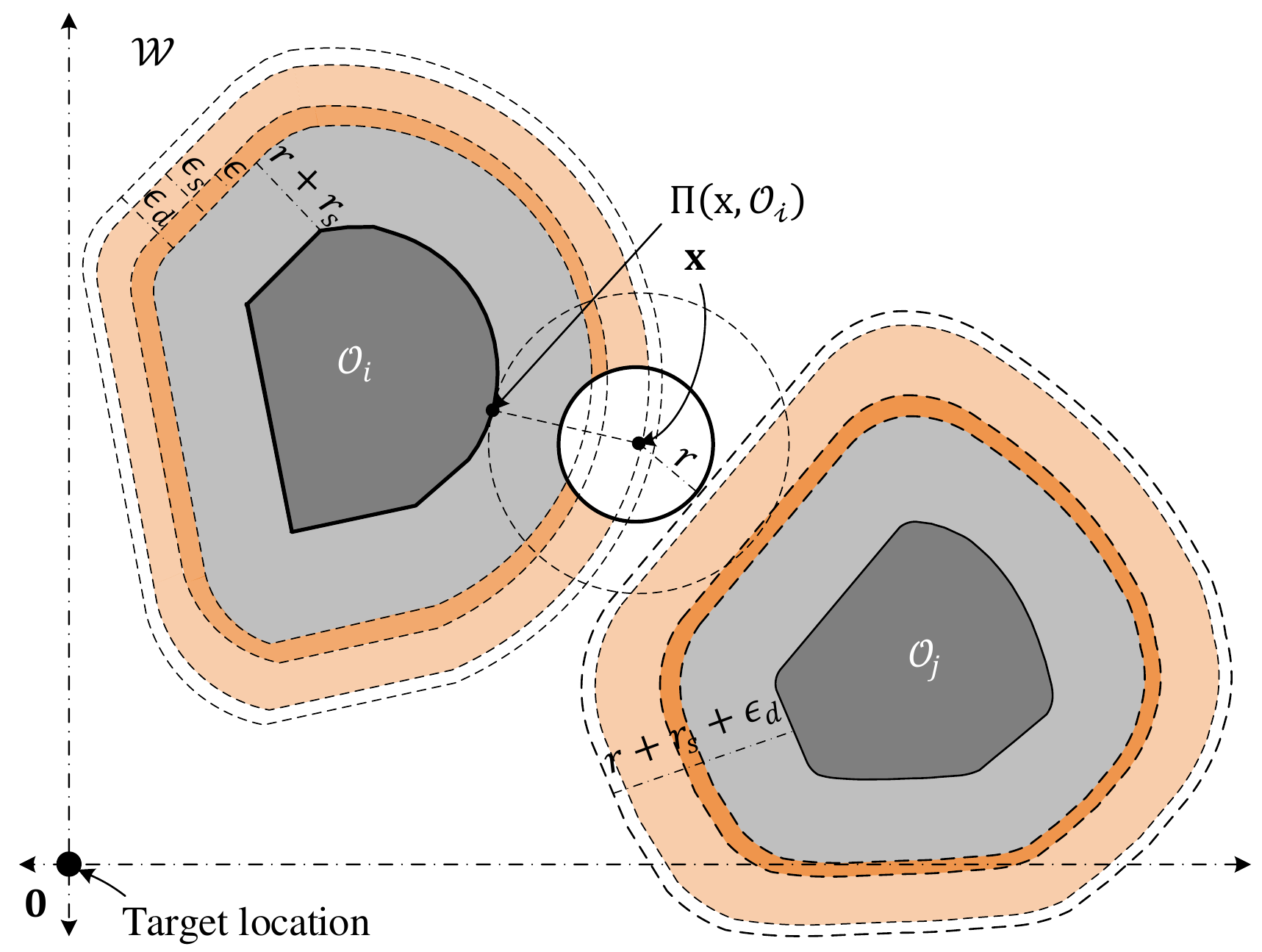}
    \caption{An illustration of the valid workspace $\mathcal{W}$ with two convex obstacles separated by the distance greater than or equal to $2(r_a + \epsilon_d)$. The $\epsilon_d-$neighbourhood of dilated obstacles is further partitioned into three region such that $\epsilon_d > \epsilon_s > \epsilon > 0.$ }
    \label{list_of_dilated_regions}
\end{figure}

\section{Hybrid Control for Obstacle Avoidance}\label{sec:hybrid_controller_design}
In the proposed scheme, similar to \cite{ berkane2021obstacle}, depending upon the value of a mode indicator variable $m\in\{-1, 0, 1\}:=\mathbb{M}$, the robot operates in two different modes, namely the \textit{move-to-target} mode $(m = 0)$ when it is away from the obstacles and the \textit{obstacle-avoidance} mode $(m \in\{-1, 1\})$ when it is in the close proximity of any obstacle. In the \textit{move-to-target} mode, the robot moves straight towards the target whereas during the \textit{obstacle-avoidance} mode the robot moves around the nearest obstacle, either in the clockwise direction $(m = 1)$ or in the counter-clockwise direction $(m = -1)$. We allow the robot to exit the \textit{obstacle-avoidance} mode whenever it can move straight towards the target without reducing its proximity from the nearest obstacle. We ensure that when the robot switches between the two modes, the change in the velocity vector remains continuous.

Now, notice that, if the robot were to arbitrarily choose between the clockwise and counter-clockwise
motions when it switches from the \textit{move-to-target} mode to the \textit{obstacle-avoidance} mode, then for some inter-obstacle arrangements the robot might get trapped in a closed trajectory around the target location. To that end, we propose a switching strategy which allows the robot to decide between the clockwise motion and the counter-clockwise motion based on its location and a line segment joining its initial location and the target location. When the robot operates in the \textit{obstacle-avoidance} mode, it always attempts to go towards this line, which in turn ensures that the robot does not get trapped in a closed trajectory around the target and eventually converges towards it.
Our proposed hybrid controller takes the form $\mathbf{u}(\mathbf{x}, m)$  with
\begin{equation}
    \underbrace{\begin{matrix}
    \dot{m}
    \end{matrix} \begin{matrix*}[l]\;=0,\end{matrix*}}_{(\mathbf{x}, m)\in\mathcal{F}_{\mathcal{W}}}\quad\quad\underbrace{\begin{matrix}
    m^+
    \end{matrix} = \mathbf{L}(\mathbf{x}, m)}_{(\mathbf{x}, m)\in\mathcal{J}_{\mathcal{W}}},\label{proposed_hybrid_controller_1}
\end{equation}
The sets $\mathcal{F}_{\mathcal{W}}$ and $\mathcal{J}_{\mathcal{W}}$ represent the flow and the jump sets, respectively. Next we provide a geometric construction of $\mathcal{F}_{\mathcal{W}}$ and $\mathcal{J}_{\mathcal{W}}$ which will be followed by an explicit design of $\mathbf{u}(\mathbf{x}, m)$ and $\mathbf{L}(\mathbf{x}, m).$ 

\subsection{Geometric construction of the flow and jump sets}

We define an $\epsilon_d-$neighbourhood around the dilated obstacle $\mathcal{D}_{r_a}(\mathcal{O}_i), i\in\mathbb{I}$, where $\epsilon_d\in(0,\bar{r}_s - r_s)$.\footnote{Note that the value of the parameter $\epsilon_d$ can be small. Since this parameter adds a safety margin around the robot's body, one can choose a sufficiently small value while compensating for the measurement noise.\label{footnote:smallepsilon}} %such that
%\begin{equation}
%    \bar{\epsilon}_d = \underset{i, j\in\mathbb{I}, i\ne j}{\min}\left\{\frac{d(\mathcal{O}_i, \mathcal{O}_j)}{2} - r_a, d(\mathbf{0}, \mathcal{O}_i) - r_a\right\}.\label{choice_of_epsilon_d}
%\end{equation}
For an obstacle $\mathcal{O}_i, i\in\mathbb{I}$, the compact tubular neighbourhood $\mathcal{T}(\mathcal{O}_i):=\mathcal{D}_{r_a + \epsilon_d}(\mathcal{O}_i)\backslash\big(\mathcal{D}_{r_a}(\mathcal{O}_i)\big)^{\circ}$ is partitioned into several sub-regions, as shown in Fig. \ref{Partitions_of_local_neighbourhood}, which then will be used to construct the jump and the flow sets used in \eqref{proposed_hybrid_controller_1}.
\begin{figure}
    \centering
    \includegraphics[width = 0.8\linewidth]{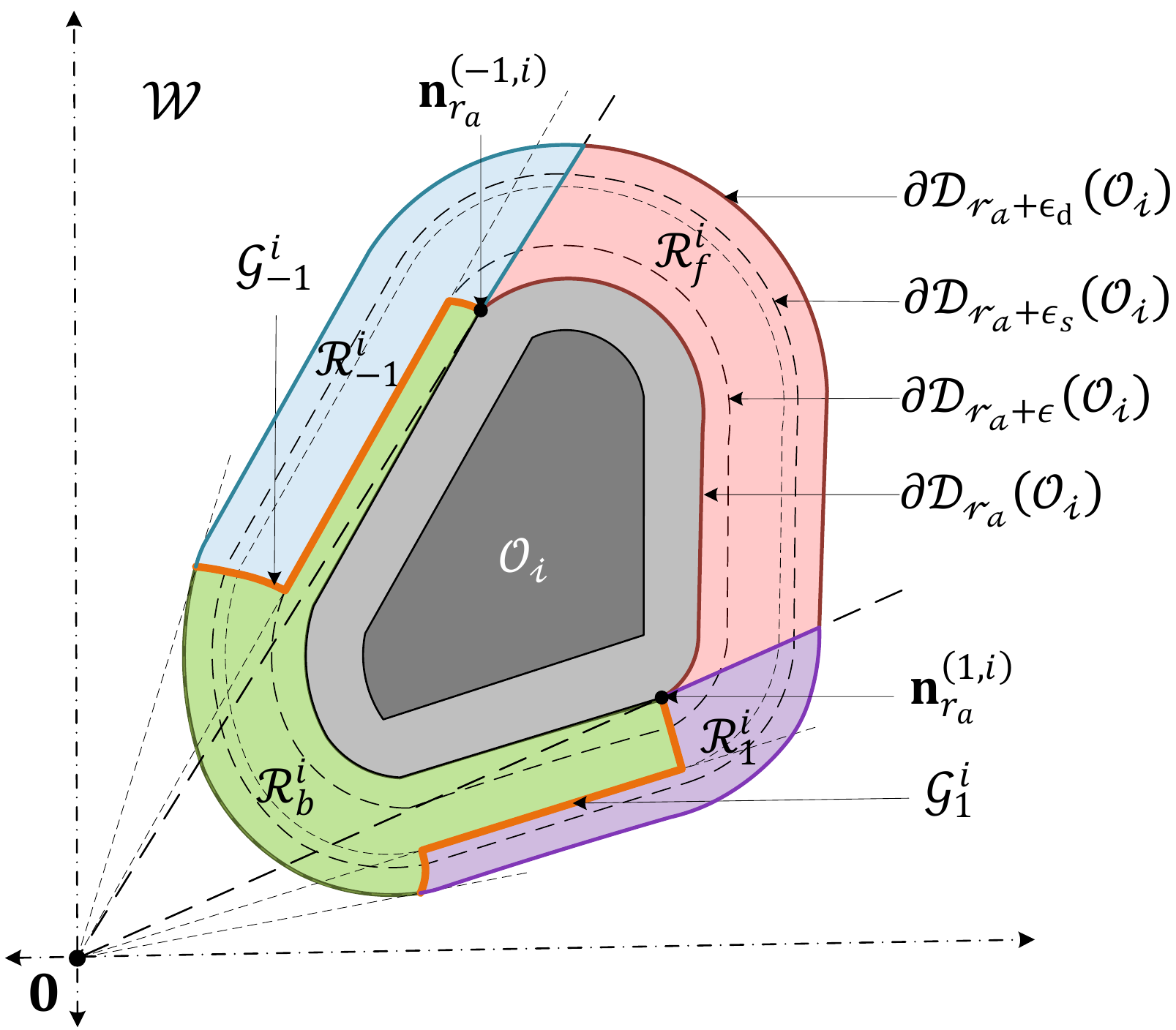}
    \caption{The partitions of the local neighbourhood of the obstacle $\mathcal{O}_i, \;i\in\mathbb{I}$ based on the location of the target.}
    \label{Partitions_of_local_neighbourhood}
\end{figure}
\subsubsection{Back region} $\mathcal{R}_b^i$ is defined as
\begin{equation}
        \mathcal{R}_b^i = \{\mathbf{q}\in \mathcal{T}(\mathcal{O}_i)|\mathbf{q}^{\intercal}(\mathbf{q} - \Pi(\mathbf{q}, \mathcal{O}_i)) \leq 0\}.\label{back_region}
\end{equation}
The back region $\mathcal{R}_b^i$ is a closed connected subset of the region $\mathcal{T}(\mathcal{O}_i)$ such that for all $\mathbf{q}\in\mathcal{R}_b^i$ the angle measured from the vector $\mathbf{q}$ to the vector $\mathbf{q} - \Pi(\mathbf{q}, \mathcal{O}_i)$ satisfies $\alpha_s(\mathbf{q}, (\mathbf{q} - \Pi(\mathbf{q}, \mathcal{O}_i))) \in[\frac{\pi}{2}, \frac{3\pi}{2}]$.

\subsubsection{Gates} $\mathcal{G}_m^i,\;m\in\{-1, 1\},$ defined as
\begin{equation}\mathcal{G}_{m}^i = \{\mathbf{q}\in \mathcal{T}(\mathcal{O}_i)|\alpha_s(\mathbf{q}, (\mathbf{q} - \Pi(\mathbf{q}, \mathcal{O}_i))) = \frac{-m\pi}{2}\},\label{gate_region}
\end{equation}
are the regions where the vectors $\mathbf{q}$ and $(\mathbf{q} - \Pi(\mathbf{q}, \mathcal{O}_i))$ are orthogonal to each other. In Proposition \ref{proposition:continuous_control}, it is shown that, while operating in the \textit{obstacle-avoidance} mode, away from the boundary of the respective obstacle's unsafe region, if the robot switches to the \textit{move-to-target} mode in the gate region, then the control input trajectories remain continuous. 

\subsubsection{Front region} $\mathcal{R}_f^i$ is defined as
\begin{equation}
    \begin{aligned}
        \mathcal{R}_f^i = \{&\mathbf{q}\in\mathcal{T}(\mathcal{O}_i)|\mathcal{L}_s(\mathbf{q}, \mathbf{0})\cap\left(\mathcal{D}_{r_a}(\mathcal{O}_i)\right)^{\circ}\ne\emptyset\}.\label{front_region}
    \end{aligned}
\end{equation}
If the robot, located in the front region $\mathcal{R}_f^i$, moves straight towards the origin, it will eventually enter in the unsafe region related to the obstacle $\mathcal{O}_i$. To avoid this, in this region the robot should switch to the \textit{obstacle-avoidance} mode.

\subsubsection{Side regions}
$\mathcal{R}_{m}^i$, $m\in\{-1, 1\}$, are constructed as
\begin{equation}
    \begin{aligned}
        \mathcal{R}_{1}^i  = \{&\mathbf{q}\in\mathcal{T}(\mathcal{O}_i)\backslash\big(\mathcal{R}_b^i\cup\mathcal{R}_f^i\big)|\\&\alpha_s(\mathbf{q}, \mathbf{q} - \Pi(\mathbf{q}, \mathcal{O}_i)) \in [-\pi/2, 0]\},\\
        \mathcal{R}_{-1}^i  = \{&\mathbf{q}\in\mathcal{T}(\mathcal{O}_i)\backslash\big(\mathcal{R}_b^i\cup\mathcal{R}_f^i\big)|\\&\alpha_s(\mathbf{q}, \mathbf{q} - \Pi(\mathbf{q}, \mathcal{O}_i)) \in [0, \pi/2]\}.
    \end{aligned}
\end{equation}
Since the intersection of the interior of the conic hull \cite[Section 2.1.5]{boyd2004convex} for the dilated obstacle $\mathcal{D}_{r_a}(\mathcal{O}_i)$, having its vertex at the origin, with the side regions is empty, for $\mathbf{x}\in \mathcal{R}_{m}^i, m\in\{-1, 1\}$, the robot can move straight towards the target location, see Fig. \ref{Partitions_of_local_neighbourhood}.

\begin{remark}
\label{remark:back_region_is_closed_connected_subset}

For an arbitrary compact convex obstacle $\mathcal{O}_i, i\in\mathbb{I},$ the projection of a point $\mathbf{q}\notin\mathcal{O}_i$ onto the obstacle $\mathcal{O}_i$ \textit{i.e.}, $\Pi(\mathbf{q}, \mathcal{O}_i)$ is continuous with respect to $\mathbf{q}$. As a result, the respective back region $\mathcal{R}_b^i$ defined in \eqref{back_region}, is a compact connected subset of the $\epsilon_d-$neighbourhood of the obstacle $\mathcal{O}_i$. Hence, one has for all $\mathbf{q}\in\mathcal{T}(\mathcal{O}_i)\backslash\mathcal{R}_b^i$
 \begin{equation}
     \mathbf{q}^\intercal(\mathbf{q} - \Pi(\mathbf{q}, \mathcal{O}_i)) > 0.\label{in_the_neighbourhood_other_than_back_region}
 \end{equation}
\end{remark}

This concludes the partitioning of the $\epsilon_d-$neighbourhood of the dilated obstacle $\mathcal{D}_{r_a}(\mathcal{O}_i)$ for some $i\in\mathbb{I}$. Similar regions are defined for all remaining obstacles. Next, we utilize these regions to define the flow and the jump sets for each mode of operation.

\subsubsection{Flow and jump sets (\textit{move-to-target} mode)} 
In this mode, the robot moves straight towards the origin. Consider Fig. \ref{jump_and_flow_sets} for a visual representation. As discussed earlier, the robot moving straight towards the origin should switch to the \textit{obstacle-avoidance} mode whenever it enters $\mathcal{R}_f^i,\;\forall i\in\mathbb{I}$, otherwise it will collide with the obstacle $\mathcal{O}_i$. Hence, we define the jump set $\mathcal{J}_0^i$ for each obstacle $i\in\mathbb{I}$ as
\begin{equation}
    \mathcal{J}_0^i:= \mathcal{D}_{r_a +\epsilon_s}(\mathcal{O}_i)\cap\bar{\mathcal{R}}_f^i,\label{jumpset_individual_obstacle}
\end{equation}
where $\epsilon_s\in(0, \epsilon_d)$. 

However, if the center of the robot is located in the back or side regions of an obstacle $\mathcal{O}_i, i\in\mathbb{I}$, the robot can navigate safely with respect to the obstacle $\mathcal{O}_i$ towards the target in the \textit{move-to-target} mode.
Hence, the flow set of the \textit{move-to-target} mode $\mathcal{F}_0^i$ for each obstacle $\mathcal{O}_i, i \in\mathbb{I}$, defined as
\begin{equation}
    \mathcal{F}_0^i:=\left(\mathcal{W}_{r_a}\backslash\big(\mathcal{D}_{r_a + \epsilon_s}(\mathcal{O}_i)\big)^{\circ}\right)\cup\mathcal{R}_{-1}^i\cup\mathcal{R}_{1}^i\cup\mathcal{R}_b^i,\label{flowset_individual_obstacle}
\end{equation}
includes the union of the back and side regions of the respective obstacle. Inspired by \cite[(17)]{berkane2021obstacle}, taking all obstacles into consideration, the flow and jump sets for the \textit{move-to-target} mode $m = 0$ are defined as
\begin{align}
    \mathcal{F}_0 :=\bigcap_{i\in\mathbb{I}}\mathcal{F}_0^i, \quad \mathcal{J}_0:=\bigcup_{i \in\mathbb{I}}\mathcal{J}_0^i.\label{stabilization_mode_jumpflow_set_final}
\end{align}
Next, we define the flow and jump sets for the \textit{obstacle-avoidance} mode.
\subsubsection{Flow and jump sets (\textit{obstacle-avoidance} mode)} This mode is activated only if the robot enters in the $\epsilon_d-$neighbourhood of some obstacle $\mathcal{O}_i, \;i\in\mathbb{I}$, which according to Assumption \ref{assumption:robot_pass_through} can only be valid for at most one obstacle at any given time. We now consider the construction of the flow and jump sets for the \textit{obstacle-avoidance} mode $(m\in\{-1, 1\})$ with a specific obstacle $\mathcal{O}_i$, as shown in Fig. \ref{jump_and_flow_sets}. Here the mode indicator variable $m = -1$ and $m = 1$, prompts the robot to move in the counter-clockwise and clockwise directions with respect to the obstacle's boundary $\partial\mathcal{O}_i$, respectively. Hence for $m\in\{-1, 1\}$ the flow sets are constructed as follows:
\begin{equation}
\begin{aligned}
    &\mathcal{F}_{m}^i :=  \mathcal{R}_{m}^i\cup\overline{\mathcal{ER}_f^i\backslash\mathcal{R}_b^i},\label{avoidance_flow_set_individual}
\end{aligned}
\end{equation}
where the set $\mathcal{ER}_f^i$ is defined as
\begin{equation}
    \begin{aligned}
        \mathcal{ER}_f^i = \{&\mathbf{q}\in \mathcal{T}(\mathcal{O}_i)|\mathcal{L}_s(\mathbf{q}, \mathbf{0})\cap\left(\mathcal{D}_{r_a + \epsilon}(\mathcal{O}_i)\right)^{\circ}\ne\emptyset\}.\label{enlarged_front_region}
    \end{aligned}
\end{equation}
where $\epsilon \in(0, \epsilon_s)$. 
%where the points $\mathbf{n}_{r_a + \epsilon}^{(m, i)}, m\in\{-1, 1\}$ are defined according to \eqref{point_intersection_definition}.
The jump set $\mathcal{J}_{m}^i$ of the respective mode, which includes the relative complement of the set $\mathcal{W}_{r_a}$ with respect to the interior of the flow set $\big(\mathcal{F}_{m}^i\big)^{\circ}$, is defined as
\begin{equation}
\begin{aligned}
    \mathcal{J}_{m}^i :=& \left(\mathcal{W}_{r_a}\backslash(\mathcal{D}_{r_a+\epsilon_d}(\mathcal{O}_i))^{\circ}\right)\cup\mathcal{R}_b^i\cup\left(\mathcal{R}_{-m}^i\backslash\mathcal{ER}_f^i\right).\label{individual_jump_set_avoidance_mode}
    \end{aligned}
\end{equation}

Finally, taking all the obstacles into consideration, the flow and jump sets for the \textit{obstacle-avoidance} mode are defined as
\begin{equation}
\begin{aligned}
    \mathcal{F}_{m}&:=\bigcup_{i\in\mathbb{I}}\mathcal{F}_{m}^i,\quad\mathcal{J}_{m}:= \bigcap_{i\in\mathbb{I}}\mathcal{J}_{m}^i\label{avoidance_final_set},
\end{aligned}
\end{equation}
where $m\in\{-1, 1\}$. Finally the flow set $\mathcal{F}_{\mathcal{W}}$ and the jump set $\mathcal{J}_{\mathcal{W}}$ in \eqref{proposed_hybrid_controller_1} are defined as
\begin{align}
    \mathcal{F}_{\mathcal{W}}:=\bigcup_{m\in\mathbb{M}}(\mathcal{F}_m\times\{m\}),\; \mathcal{J}_{\mathcal{W}}:=\bigcup_{m\in\mathbb{M}}(\mathcal{J}_m\times\{m\}),\label{composite_jump_and_flow_sets}
\end{align}
with $\mathcal{F}_m, \mathcal{J}_m$ defined in \eqref{stabilization_mode_jumpflow_set_final} for $m = 0$ and in \eqref{avoidance_final_set} for $m\in\{-1, 1\}$. Next we provide the formalism for the proposed hybrid control input $\mathbf{u}(\mathbf{x}, m)$.

\begin{figure}[]
    \centering
    \includegraphics[width = 1\linewidth]{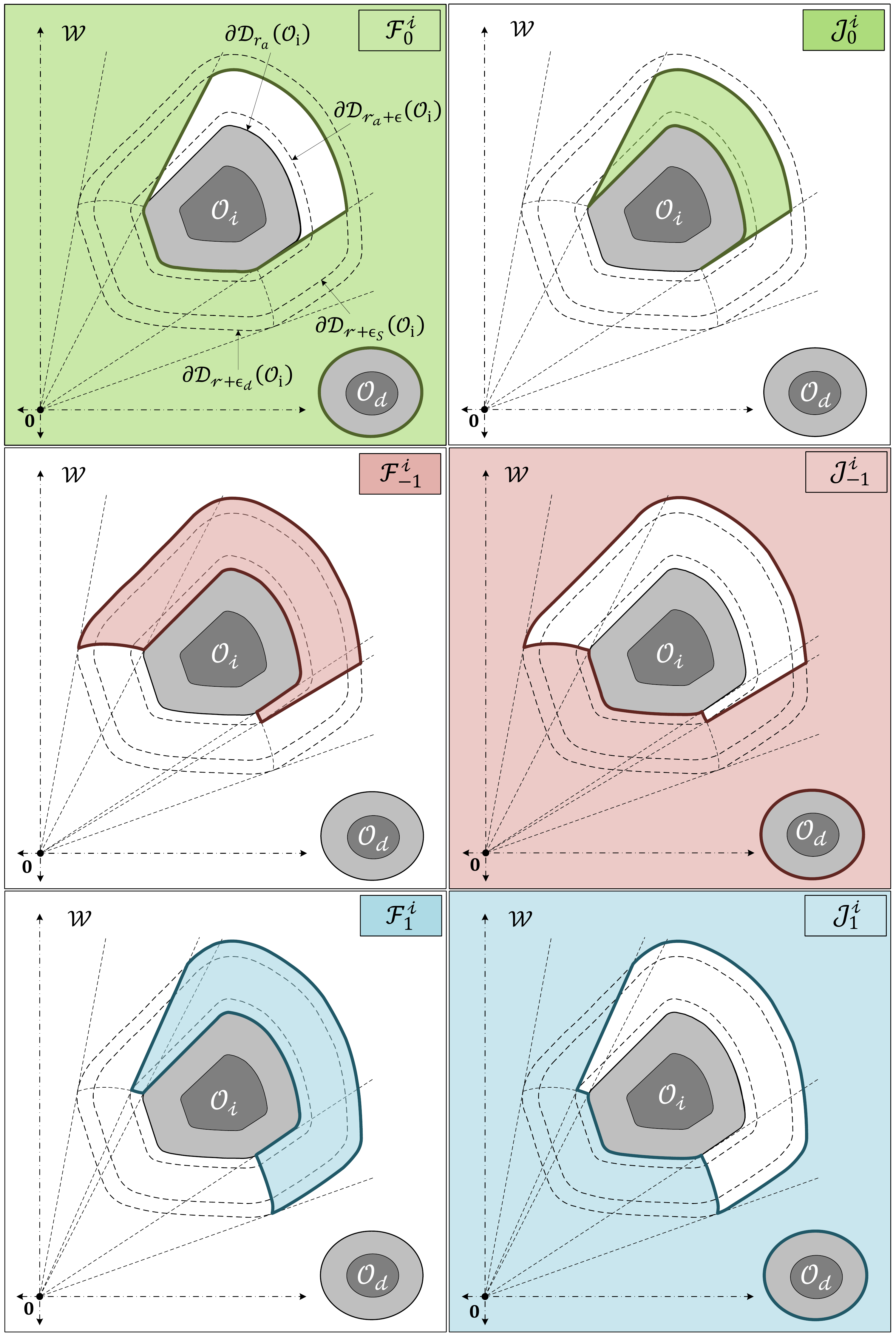}
    \caption{Geometric representations of the flow and jump sets for different modes of operation related to obstacle $\mathcal{O}_i,\; i\in\mathbb{I}$. The top figures $(m = 0)$ illustrate the case where the robot operates in the \textit{move-to-target} mode and moves straight toward the target location. The middle figures $(m = -1)$ illustrate the case where the robot, operating in the \textit{obstacle-avoidance} mode, moves in the counter-clockwise direction with respect to $\partial\mathcal{O}_i$. The bottom figures $(m = 1)$ illustrate the case where the robot, operating in the \textit{obstacle-avoidance} mode, moves in the clockwise direction with respect to $\partial\mathcal{O}_i$. }
    \label{jump_and_flow_sets}
\end{figure}

\subsection{Hybrid control input $\mathbf{u}(\mathbf{x}, m)$}
\label{sec:control_law}
The proposed hybrid control law $\mathbf{u}(\mathbf{x}, m)$ is given as
\begin{subequations}
\begin{align}
    &\mathbf{u}(\mathbf{x}, m) = -\gamma\kappa(\mathbf{x}, m)\mathbf{x} + \gamma[1 - \kappa(\mathbf{x}, m)]\mathbf{v}(\mathbf{x}, m),\label{fxim}\\
    &\hspace{2cm}\underbrace{\begin{matrix}
    \dot{m}
    \end{matrix} \begin{matrix*}[l]\;=0,\end{matrix*}}_{(\mathbf{x}, m)\in\mathcal{F}_{\mathcal{W}}}\quad\quad\underbrace{\begin{matrix}
    m^+
    \end{matrix} = \mathbf{L}(\mathbf{x}, m)}_{(\mathbf{x}, m)\in\mathcal{J}_{\mathcal{W}}},\label{update_law_control}
    \end{align}
    \label{proposed_hybrid_controller_2}
\end{subequations}where $\gamma > 0$, and $\mathbf{x}\in\mathcal{W}_{r_a}$ is the location of the center of the robot. The discrete variable $m\in\mathbb{M}$ is the mode indicator. The formalism for the update law $\mathbf{L}(\mathbf{x}, m)\in\mathbb{M}$ in \eqref{update_law_control} will be provided later in \eqref{update_law_for_m}-\eqref{direction_decision}. The flow set $\mathcal{F}_{\mathcal{W}}$ and the jump set $\mathcal{J}_{\mathcal{W}}$ are defined in \eqref{composite_jump_and_flow_sets}.  

Next we provide the construction of the vector $\mathbf{v}(\mathbf{x}, m)\in\mathbb{R}^2$ along with the scalar function $\kappa(\mathbf{x}, m)\in[0, 1]$, used in \eqref{fxim}. The vector $\mathbf{v}(\mathbf{x}, m)\in\mathbb{R}^2$ is defined as
\begin{align}
    \mathbf{v}(\mathbf{x}, m) = \begin{bmatrix}0 & m\\ -m & 0 \end{bmatrix} \frac{\mathbf{x} - \Pi(\mathbf{x}, \mathcal{O}_{\mathcal{W}})}{\norm{\mathbf{x} - \Pi(\mathbf{x}, \mathcal{O}_{\mathcal{W}})}} \norm{\mathbf{x}},\label{definition_of_vim}
\end{align}
where $\mathcal{O}_{\mathcal{W}} := \bigcup_{i\in\mathbb{I}}\mathcal{O}_i$, and $\Pi(\mathbf{x}, \mathcal{O}_{\mathcal{W}})$ is the projection of the center of the robot on the set $\mathcal{O}_{\mathcal{W}}$ in the sense of the Euclidean norm. It should be noted that, if the center of the robot is present within the $(r_a+{\epsilon}_d)-$neighbourhood of any obstacle, let us say $\mathcal{O}_k, k\in\mathbb{I}$, where ${\epsilon}_d\in(0, \bar{r}_s - r_s)$, then the projection $\Pi(\mathbf{x}, \mathcal{O}_{\mathcal{W}})$ is unique and equals $\Pi(\mathbf{x}, \mathcal{O}_k)$. In this case, the rotational vector $\mathbf{v}(\mathbf{x}, m)$ allows the robot to revolve around the obstacle $\mathcal{O}_k$. The direction of the rotation depends on the value of the mode indicator variable. When $m = 1$ the robot moves in the clockwise direction, whereas if $m = -1$ it moves in the counter-clockwise direction with respect to the boundary of the obstacle $\mathcal{O}_k$.

The scalar function $\kappa : \mathcal{W}_{r_a}\times\mathbb{M}\to[0, 1]$ defined as 
\begin{align}
    \kappa(\mathbf{x}, m) = 1 + m^2\big[\eta(\varrho(\mathbf{x})) - 1\big],\label{kappa_function_formulation}
\end{align}
allows for a continuous transition between the stabilizing vector $-\mathbf{x}$ and the rotational vector $\mathbf{v}(\mathbf{x}, m)$ whenever the robot operates in the \textit{obstacle-avoidance} mode \textit{i.e.}, $m \in\{-1, 1\}$, depending on its proximity with respect to the boundary of the set $\mathcal{O}_{\mathcal{W}}$. In \eqref{kappa_function_formulation}, the scalar function $\varrho(\mathbf{x})$ evaluates the proximity of the robot with the unsafe region $\mathcal{O}_{\mathcal{W}}$, and is given by
\begin{align}
    \varrho(\mathbf{x}) = d(\mathbf{x}, \mathcal{O}_{\mathcal{W}}) - r_a.\label{proximity_function}
\end{align}
According to \eqref{kappa_function_formulation}, whenever $m\in\{-1, 1\}$, the value of $\kappa(\mathbf{x}, m)$ equals to the value of a non-decreasing continuous scalar function $\eta :\mathbb{R}\to[0, 1]$, defined as
\begin{align}
        \eta(\varrho(\mathbf{x})) &= \begin{cases}\begin{matrix*}[l]1, & \varrho(\mathbf{x})\geq \epsilon_s,\\
     \frac{\varrho(\mathbf{x}) - \epsilon}{\epsilon_s - \epsilon},
      &\epsilon \leq \varrho(\mathbf{x}) \leq \epsilon_s,\\
      0, & \varrho(\mathbf{x})\leq \epsilon.\end{matrix*}
     \end{cases}\label{beta_function_definition}
\end{align}
The scalar function $\eta$ is constructed such that for the robot operating in the \textit{obstacle-avoidance} mode, within the ${\epsilon}_d-$neighbourhood of an obstacle, let us say $\mathcal{O}_k$, as the robot approaches the $ \epsilon-$neighbourhood of the obstacle $\mathcal{O}_k$, \textit{i.e.}, $\mathcal{D}_{r_a + \epsilon}(\mathcal{O}_k),$ the influence of the stabilizing control vector in \eqref{fxim} decreases, whereas the contribution from the rotational control vector increases. 

It is evident from \eqref{kappa_function_formulation}-\eqref{beta_function_definition}, that for the robot operating in the \textit{obstacle-avoidance} mode, the rotational part of the control law $\mathbf{u}$ in \eqref{fxim}, is active only if the robot is within the $\epsilon_s-$neighbourhood of any obstacle, which according to Assumption \ref{assumption:robot_pass_through} can only be valid for at most one obstacle at any instant of time. 

At the gate region \eqref{gate_region}, the vector $\mathbf{v}(\mathbf{x}, m)$, defined in \eqref{definition_of_vim}, becomes equivalent to the vector $-\mathbf{x}$ transforming the control input vector $\mathbf{u}$ in \eqref{fxim} to $-\gamma\mathbf{x}$, which equals to the stabilization control vector used in the \textit{move-to-target} mode. Hence, if the robot operating in the \textit{obstacle-avoidance} mode, switches to the \textit{move-to-target} mode at the gate region, then the proposed hybrid control law \eqref{proposed_hybrid_controller_2} ensures the continuity of the control input trajectories.

According to the definition of the \textit{move-to-target} mode jump set $\mathcal{J}_0\times\{0\}$, the robot operating in the \textit{move-to-target} mode in the $\epsilon_d-$neighbourhood of an obstacle $\mathcal{O}_i, i\in\mathbb{I}$, can enter in the \textit{obstacle-avoidance} mode via the boundary of the $\epsilon_s-$neighbourhood of the obstacle $\mathcal{O}_i$. In this case, the continuous switching function $\eta$ \eqref{beta_function_definition}, used in the definition of the scalar function $\kappa$ \eqref{kappa_function_formulation}, maintains the continuity of the control input trajectories.

This concludes the design of the hybrid control input $\mathbf{u}(\mathbf{x}, m)$ in \eqref{fxim}. Next, we provide the formalism for the update law $\mathbf{L}$ used in \eqref{proposed_hybrid_controller_2}.

\subsection{Mode selection map $\mathbf{L}(x,m)$}\label{sec:update_law}
The update law $\mathbf{L}(\mathbf{x}, m)$, used in \eqref{proposed_hybrid_controller_2}, allows the robot to update the value of the mode indicator variable $m\in\mathbb{M}$ when the state $(\mathbf{x}, m)$ belongs to the jump set $\mathcal{J}$ defined in \eqref{composite_jump_and_flow_sets}, is given as 
\begin{equation}
    \mathbf{L}(\mathbf{x}, m) = \begin{cases}\begin{matrix*}[l]0,&(\mathbf{x}, m)\in\mathcal{J}_m\times\{m\}, m\in\{-1, 1\},\\
    \mathbf{M}(\mathbf{x}),&(\mathbf{x}, m)\in\mathcal{J}_{0}\times\{0\}.\end{matrix*}\end{cases}\label{update_law_for_m}
\end{equation}
When the robot enters in the jump set of the \textit{obstacle-avoidance} mode, the value of the mode indicator variable switches to $0$. The mapping $\mathbf{M}$, which is based on the current location of the center of the robot $\mathbf{x}$ with respect to the hyperplane $\mathcal{P}(\mathbf{0}, \mathbf{s})$ is defined as
\begin{align}
    \mathbf{M}(\mathbf{x}) := \begin{cases}\begin{matrix*}[l]\{-1\}, & \mathbf{x}^\intercal\mathbf{s} < 0,\\
    \{-1, 1\}, & \mathbf{x}^\intercal\mathbf{s} = 0,\\
    \{1\}, &\mathbf{x}^\intercal\mathbf{s} > 0,\end{matrix*}\end{cases}\label{direction_decision}
\end{align}
where $\mathbf{s} \in\mathbb{R}^2\backslash\{\mathbf{0}\}$ is an arbitrary non-zero constant vector. The switching strategy in \eqref{direction_decision}, allows the robot to choose between the clockwise and counter-clockwise motions based on its current location whenever the state $(\mathbf{x}, m)$ belongs to the jump set of the \textit{move-to-target} mode, $\mathcal{J}_0\times\{0\}$. This strategy is crucial to establish global asymptotic convergence of the robot towards the target location, irrespective of the arrangements of the obstacles as it is going to be stated later in Theorem \ref{theorem:global_asymptotic_convergence}. 

It is shown that the robot governed by the proposed hybrid controller, belonging to the region $\mathcal{P}_{>}(\mathbf{0}, \mathbf{s})$ or $\mathcal{P}_{<}(\mathbf{0}, \mathbf{s})$, away from the boundary of the unsafe region at some hybrid time instant $(t_0, j_0)$, with $m(t_0, j_0) = 0$, cannot cross or intersect the half-line $\mathcal{L}_{>}(\mathbf{0}, \nu_{-1}(\mathbf{s}))$ for all $\forall (t, j)\succeq(t_0, j_0)$, which ensures that, while operating in environments satisfying Assumption \ref{assumption:robot_pass_through}, the robot can never revolve around the target location. Moreover, the robot cannot move indefinitely in either $\mathcal{P}_{>}(\mathbf{0}, \mathbf{s})$ or $\mathcal{P}_{<}(\mathbf{0}, \mathbf{s})$ \textit{i.e.}, during the motion it will either directly converge to the origin or intersect the half-line $\mathcal{L}_{>}(\mathbf{0}, \nu_1(\mathbf{s}))$, and with every consecutive intersection with this half-line, the robot gets closer to the origin. This concludes the definition of the update law $\mathbf{L}$ in \eqref{proposed_hybrid_controller_1} and the design of the proposed hybrid controller.

\section{Forward Invariance and Stability analysis}\label{sec:stability}
As noted earlier, the robot operating with the proposed hybrid control law avoids one obstacle at a time. In general, while moving in the workspace, the robot might encounter more than one obstacles. To model this change of obstacle to be avoided, we introduce a discrete variable $k\in\mathbb{I}$, which corresponds to the index of the obstacle being avoided by the robot while operating in the \textit{obstacle-avoidance} mode. The hybrid evolution of the state $k$ is given as
\begin{equation}
    \underbrace{\begin{matrix}
    \dot{k}
    \end{matrix} \begin{matrix*}[l]\;=0,\end{matrix*}}_{\xi\in\mathcal{F}}\quad\quad\underbrace{\begin{matrix}
    k^+
    \end{matrix} = \mathbf{N}(\xi)}_{\xi\in\mathcal{J}},\label{hybrid_dynamic_of_state_k}
\end{equation}
where 
\begin{equation}\xi:= (\mathbf{x}, m, k)\in\mathcal{W}_{r_a}\times\mathbb{M}\times\mathbb{I}=:\mathcal{K},\label{composite_state_vector}\end{equation} 
is the composite state vector. The flow set $\mathcal{F}$ and the jump set $\mathcal{J}$ are defined as
\begin{equation}
    \mathcal{F} = \mathcal{F}_{\mathcal{W}}\times\mathbb{I},\quad\quad\mathcal{J} = \mathcal{J}_{\mathcal{W}}\times\mathbb{I}.\label{overall_flow_set_jump_set}
\end{equation}
The update law for the state $k\in\mathbb{I}$ \textit{i.e.}, $\mathbf{N}(\xi)$ is designed such that whenever the robot encounters the jump set of the \textit{move-to-target} mode, the value of the variable $k$ is changed to the index of the closest obstacle, otherwise it is kept unchanged. Hence, $\mathbf{N}(\xi)$ is given as
\begin{equation}
    \mathbf{N}(\xi) = \begin{cases}\begin{matrix*}[l]k^{\prime},&\xi\in\mathcal{J}_0^{k^{\prime}}\times\{0\}\times\mathbb{I},\\
    k,&\xi\in\mathcal{J}_{m}\times\{m\}\times\mathbb{I}, m\in\{-1, 1\}.\end{matrix*}\end{cases}\label{update_law_for_k}
\end{equation}

The hybrid closed-loop system, resulting from the control law \eqref{proposed_hybrid_controller_2} and auxiliary state hybrid dynamics \eqref{hybrid_dynamic_of_state_k}, is given by
\begin{equation}
    \underbrace{\begin{matrix}
    \mathbf{\dot{x}}\\
    \dot{m}\\
    \dot{k}
    \end{matrix} 
    \begin{matrix*}[l]
    \;= \mathbf{u}(\xi)\\
    \;= 0\\
    \;= 0
    \end{matrix*}
    }_{\dot{\xi} = \mathbf{F}(\xi)},\;\xi\in\mathcal{F},\quad\quad\underbrace{\begin{matrix*}
    \mathbf{x}^+ \\m^+\\
    k^+\end{matrix*}\begin{matrix*}[l] \;=\mathbf{x}\\\;\in\mathbf{L}(\xi)\\\;\in \mathbf{N}(\xi)
    \end{matrix*}}_{{\xi}^+ = \mathbf{J}(\xi)},\;\xi\in\mathcal{J},\label{hybrid_closed_loop_system}
\end{equation}
where $\mathbf{u}(\xi)$ is defined in \eqref{fxim}, and the update laws $\mathbf{L}(\xi)$ and $\mathbf{N}(\xi)$ are provided in \eqref{update_law_for_m}-\eqref{direction_decision} and \eqref{update_law_for_k}, respectively. The definitions of the flow set $\mathcal{F}$ and the jump set $\mathcal{J}$ are provided in \eqref{composite_jump_and_flow_sets}, \eqref{overall_flow_set_jump_set}. In the next section, we analyze the hybrid closed-loop system \eqref{hybrid_closed_loop_system} in terms of the forward invariance of the obstacle-free state space $\mathcal{K}$ along with the stability properties of the target set $\mathcal{A}$, which is defined as
\begin{equation}
\mathcal{A}:= \{\mathbf{0}\}\times\mathbb{M}\times\mathbb{I}.\label{hybrid_target_set}
\end{equation}

First, we analyze the forward invariance of the obstacle-free workspace, which then will be followed by the convergence analysis.

For safe autonomous navigation, the robot should belong to the obstacle-free workspace $\mathcal{W}_{0}$ for all time \textit{i.e.}, the state $\mathbf{x}$ must always evolve within the robot-centered free workspace $\mathcal{W}_{r_a}$, regardless of the variables $k$ and $m$. This is equivalent to showing that the set $\mathcal{K}$, defined in \eqref{composite_state_vector}, is forward invariant with respect to the hybrid closed-loop system \eqref{hybrid_closed_loop_system}. This is stated in the next Lemma.
\begin{lemma}[Safety]
Under Assumption \ref{assumption:robot_pass_through}, for the obstacle-free set $\mathcal{K}$, defined in \eqref{composite_state_vector}, and the hybrid closed-loop system \eqref{hybrid_closed_loop_system}, the set $\mathcal{K}$ is forward invariant.
\label{forward_invariance_theorem_1}
\end{lemma}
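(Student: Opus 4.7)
The plan splits into jump and flow analyses. Jumps are immediate: $\mathbf{x}^{+}=\mathbf{x}$ preserves membership in $\mathcal{W}_{r_a}$, and by construction the update laws $\mathbf{L}$ in \eqref{update_law_for_m}--\eqref{direction_decision} and $\mathbf{N}$ in \eqref{update_law_for_k} return values in $\mathbb{M}$ and $\mathbb{I}$, so $\mathbf{J}(\xi)\subset\mathcal{K}$ whenever $\xi\in\mathcal{J}$. Since $\dot m=\dot k=0$ during flow, forward invariance of $\mathcal{K}$ reduces to forward invariance of the closed set $\mathcal{W}_{r_a}$ under $\dot{\mathbf{x}}=\mathbf{u}(\mathbf{x},m)$ for each mode $m$ that appears along the trajectory.

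The natural tool is a Nagumo-type tangent-cone condition: it suffices to show that for every $\xi=(\mathbf{x},m,k)\in\mathcal{F}$ with $\mathbf{x}\in\partial\mathcal{W}_{r_a}$, the velocity $\mathbf{u}(\mathbf{x},m)$ lies in the Bouligand tangent cone $T_{\mathcal{W}_{r_a}}(\mathbf{x})$. Under Assumption~\ref{assumption:robot_pass_through} and the standing choice $\epsilon_d\in(0,\bar r_s-r_s)$, such a boundary point lies on $\partial\mathcal{D}_{r_a}(\mathcal{O}_i)$ for exactly one index $i\in\mathbb{I}$. Lemma~\ref{lemma:normal_to_dilated_obstacle} identifies $\mathbf{x}-\Pi(\mathbf{x},\mathcal{O}_i)$ as an outward normal to the convex body $\mathcal{D}_{r_a}(\mathcal{O}_i)$; because this body is the Minkowski sum $\mathcal{O}_i\oplus\mathcal{B}_{r_a}(\mathbf{0})$ with a ball of positive radius, the outward normal is unique and $T_{\mathcal{W}_{r_a}}(\mathbf{x})$ collapses to the half-space $\{\mathbf{w}\in\mathbb{R}^2:\mathbf{w}^{\intercal}(\mathbf{x}-\Pi(\mathbf{x},\mathcal{O}_i))\geq 0\}$, so the verification reduces to a single scalar inequality per mode.

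At the inner boundary $\varrho(\mathbf{x})=0$, whence $\eta(\varrho(\mathbf{x}))=0$ and $\kappa(\mathbf{x},m)=1-m^2$ by \eqref{kappa_function_formulation}--\eqref{beta_function_definition}. For $m\in\{-1,1\}$, $\kappa=0$ and $\mathbf{u}=\gamma\mathbf{v}(\mathbf{x},m)$, which by the orthogonal rotation in \eqref{definition_of_vim} satisfies $\mathbf{u}^{\intercal}(\mathbf{x}-\Pi(\mathbf{x},\mathcal{O}_i))=0$; the tangent condition holds trivially. For $m=0$, $\mathbf{u}=-\gamma\mathbf{x}$, and the flow-set definitions \eqref{flowset_individual_obstacle}--\eqref{stabilization_mode_jumpflow_set_final}, together with $\mathbf{x}\in\mathcal{D}_{r_a+\epsilon_s}(\mathcal{O}_i)$, restrict $\mathbf{x}$ to $\mathcal{R}_b^i\cup\mathcal{R}_{-1}^i\cup\mathcal{R}_{1}^i$.

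I expect the main obstacle to be ruling out the side-region case on $\partial\mathcal{D}_{r_a}(\mathcal{O}_i)$, since a side-region point would satisfy $\mathbf{x}^{\intercal}(\mathbf{x}-\Pi(\mathbf{x},\mathcal{O}_i))>0$ by Remark~\ref{remark:back_region_is_closed_connected_subset} and would then violate the tangent-cone inequality for $-\gamma\mathbf{x}$. I would argue by contradiction: if $\mathbf{x}\in\partial\mathcal{D}_{r_a}(\mathcal{O}_i)\cap\mathcal{R}_{\pm 1}^i$, then the direction $-\mathbf{x}$ has strictly negative component along the unique outward normal, so the smoothness of $\partial\mathcal{D}_{r_a}(\mathcal{O}_i)$ forces $(1-\lambda)\mathbf{x}\in(\mathcal{D}_{r_a}(\mathcal{O}_i))^{\circ}$ for arbitrarily small $\lambda>0$. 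This places $\mathbf{x}$ in $\mathcal{R}_f^i$ via \eqref{front_region} and contradicts the exclusion $\mathcal{R}_{\pm 1}^i\cap\mathcal{R}_f^i=\emptyset$ built into the side-region definition. Hence $\mathbf{x}\in\mathcal{R}_b^i$, and the back-region inequality $\mathbf{x}^{\intercal}(\mathbf{x}-\Pi(\mathbf{x},\mathcal{O}_i))\leq 0$ yields $-\gamma\mathbf{x}^{\intercal}(\mathbf{x}-\Pi(\mathbf{x},\mathcal{O}_i))\geq 0$, closing the Nagumo check in every mode and delivering forward invariance of $\mathcal{K}$.
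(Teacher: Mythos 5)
Your geometric core is sound and coincides with the paper's: the paper also reduces the flow analysis to a tangent--cone (viability) condition at the inner boundary $\partial\mathcal{D}_{r_a}(\mathcal{O}_i)$, uses Lemma~\ref{lemma:normal_to_dilated_obstacle} to identify $\mathbf{x}-\Pi(\mathbf{x},\mathcal{O}_i)$ as the normal, notes that $\mathbf{v}(\mathbf{x},m)^{\intercal}(\mathbf{x}-\Pi(\mathbf{x},\mathcal{O}_i))=0$ for $m\in\{-1,1\}$, and uses the back-region inequality \eqref{back_region} for $m=0$. Your explicit contradiction argument showing $\partial\mathcal{D}_{r_a}(\mathcal{O}_i)\cap\mathcal{R}_{\pm1}^i=\emptyset$ (so that the only inner-boundary points in $\mathcal{F}_0$ lie in $\mathcal{R}_b^i$) is a detail the paper asserts without proof, and it is correct given the interior tangent ball $\mathcal{B}_{r_a}(\Pi(\mathbf{x},\mathcal{O}_i))\subset\mathcal{D}_{r_a}(\mathcal{O}_i)$ with $r_a>0$. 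The jump analysis is likewise the same.

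There is, however, a genuine gap relative to what the lemma asserts. Forward invariance in the sense used here (\cite[Definition 3.3]{chai2018forward}, via \cite[Proposition 6.10]{goedel2012hybrid}) requires that from every point of $\mathcal{K}$ a solution exists and that every maximal solution is \emph{complete}, not merely that solutions which happen to exist never cross into an obstacle. Your proof establishes only the latter (pre-invariance). Three ingredients from the paper's proof are missing. First, the covering identity $\mathcal{F}\cup\mathcal{J}=\mathcal{K}$, i.e.\ $\mathcal{F}_0\cup\mathcal{J}_0=\mathcal{F}_m^i\cup\mathcal{J}_m^i=\mathcal{W}_{r_a}$ for all $i,m$: without it a maximal solution could terminate at a point of $\mathcal{K}$ where it can neither flow nor jump, which violates forward invariance even though no collision occurs. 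Second, the viability condition must be checked on $\partial\mathcal{F}\setminus\mathcal{J}$ with respect to the tangent cone of the \emph{flow set} $\mathcal{F}$, not of $\mathcal{W}_{r_a}$; for $m\in\{-1,1\}$ the set $\mathcal{F}_m^k$ is a small tubular region whose boundary has several pieces besides $\partial\mathcal{D}_{r_a}(\mathcal{O}_k)$, and one must verify (as the paper does via \eqref{avoidance_flow_set_individual}--\eqref{individual_jump_set_avoidance_mode}) that all of those extra pieces lie in $\mathcal{J}_m^k$, so that the only viability check needed is the one you perform. Third, completeness also requires ruling out finite escape times, which the paper gets from boundedness of $\mathcal{F}_{\pm1}^k$ and from $\frac{d}{dt}(\mathbf{x}^{\intercal}\mathbf{x})\le 0$ in the \textit{move-to-target} mode. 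None of these steps is difficult, but without them the argument proves only that trajectories cannot enter the obstacles, not that the set $\mathcal{K}$ is forward invariant for the hybrid system \eqref{hybrid_closed_loop_system}.
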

\begin{proof}
See Appendix \ref{proof:lemma:normal_to_dilated_obstacle}.
\end{proof}

The proposed switching strategy between the \textit{move-to-target} mode and the \textit{obstacle-avoidance} mode is similar to the strategies used in the sensor-based path planning algorithms for a point robot, referred to as bug algorithms \cite{ng2007performance}. For some special obstacle arrangements, when the robot instead of converging to the predefined target location, retraces the previously followed path, the bug algorithms terminate the path planning process establishing the failure to converge to the target location due to the presence of a closed trajectory around the target location. As discussed earlier, in the research works \cite{matveev2011method}, \cite{berkane2021obstacle}, the authors imposed restrictions on the inter-obstacle arrangements to avoid closed trajectories around the target location. In the present paper, non-existence of closed trajectories, around the target location, is guaranteed by design without imposing any restrictions on the inter-obstacle arrangements except for the ones stated in Assumption \ref{assumption:robot_pass_through}. The next lemma shows that the robot operating with the proposed hybrid controller \eqref{proposed_hybrid_controller_2}, in environments satisfying Assumption \ref{assumption:robot_pass_through}, does not get stuck in any closed trajectory around the target location.
\begin{lemma}
Consider the hybrid closed-loop system \eqref{hybrid_closed_loop_system} and let Assumption \ref{assumption:robot_pass_through} hold. If $\xi(t_0, j_0)\in\mathcal{W}_{r_a}\backslash\mathcal{L}_{>}(\mathbf{0}, \nu_{-1}(\mathbf{s}))\times\{0\}\times\mathbb{I}$ at some $(t_0, j_0)\in\text{ dom }\xi$, then $\xi(t, j)\notin\mathcal{L}_{>}(\mathbf{0}, \nu_{-1}(\mathbf{s}))\times\mathbb{M}\times\mathbb{I}$ for all $(t, j)\succeq(t_0, j_0)$.\label{no_revolution_around_the_target}
\end{lemma}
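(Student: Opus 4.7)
The plan is to track a continuous lift $\tilde\theta(t,j)$ of the polar angle of $\mathbf{x}(t,j)$ around the origin and to show that $\tilde\theta$ never meets $\theta^{*}+2\pi\mathbb{Z}$, where $\theta^{*}$ denotes the angle of the direction $\nu_{-1}(\mathbf{s})$. Since $\mathcal{L}_{>}(\mathbf{0},\nu_{-1}(\mathbf{s}))=\{\mathbf{x}\neq\mathbf{0}:\theta(\mathbf{x})\equiv\theta^{*}\!\!\!\pmod{2\pi}\}$, this reduces the lemma to a monotonicity-plus-confinement argument on $\tilde\theta$ along the hybrid solution.

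The first step I would take is to differentiate $\theta$ along the obstacle-avoidance flow \eqref{hybrid_closed_loop_system}. Writing $\hat{\mathbf{q}}=\mathbf{x}/\norm{\mathbf{x}}$ and $\hat{\mathbf{n}}=(\mathbf{x}-\Pi(\mathbf{x},\mathcal{O}_{k}))/\norm{\mathbf{x}-\Pi(\mathbf{x},\mathcal{O}_{k})}$, and noting that the unit tangent for increasing $\theta$ is $\nu_{-1}(\hat{\mathbf{q}})$, the radial part $-\gamma\kappa\mathbf{x}$ of $\mathbf{u}(\mathbf{x},m)$ contributes nothing to $\dot\theta$; the identities $\nu_{-1}^{\intercal}\nu_{1}=-I$ and $\nu_{-1}^{\intercal}\nu_{-1}=I$ then yield
\[
\dot\theta=-m\gamma\bigl(1-\kappa(\mathbf{x},m)\bigr)\hat{\mathbf{q}}^{\intercal}\hat{\mathbf{n}},\qquad m\in\{-1,+1\}.
\]
Because the flow set $\mathcal{F}_{m}^{k}$ explicitly excludes the back region $\mathcal{R}_{b}^{k}=\{\mathbf{q}:\mathbf{q}^{\intercal}(\mathbf{q}-\Pi(\mathbf{q},\mathcal{O}_{k}))\leq 0\}$, one has $\hat{\mathbf{q}}^{\intercal}\hat{\mathbf{n}}\geq 0$ throughout the flow, so $\tilde\theta$ is non-increasing when $m=+1$ and non-decreasing when $m=-1$; in the \textit{move-to-target} mode $m=0$, $\dot{\mathbf{x}}=-\gamma\mathbf{x}$ keeps $\tilde\theta$ exactly constant.

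The second step is geometric: the set of directions from $\mathbf{0}$ to points of the compact convex set $\mathcal{D}_{r_{a}+\epsilon_{d}}(\mathcal{O}_{k})$ is a closed convex arc of length strictly less than $\pi$. Convexity of this set of directions follows because any chord inside $\mathcal{D}_{r_{a}+\epsilon_{d}}(\mathcal{O}_{k})$ sweeps a monotone arc as seen from $\mathbf{0}$ (and $\mathbf{0}$ is off every such chord), while strictness holds because the choice $r_{a}+\epsilon_{d}<d(\mathbf{0},\mathcal{O}_{\mathcal{W}})$ puts $\mathbf{0}$ outside $\mathcal{D}_{r_{a}+\epsilon_{d}}(\mathcal{O}_{k})$, so a pair of antipodal directions would force a chord through the origin --- impossible in a convex set not containing $\mathbf{0}$. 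Combined with $\mathcal{F}_{m}^{k}\subset\mathcal{T}(\mathcal{O}_{k})\subset\mathcal{D}_{r_{a}+\epsilon_{d}}(\mathcal{O}_{k})$, this traps $\tilde\theta$ inside a single interval of length strictly less than $\pi$ on any obstacle-avoidance flow interval.

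The remainder is an induction on the jump index $j$ along $\xi$. Jumps preserve $\mathbf{x}$ and hence $\theta$; on an $m=0$ interval $\theta$ is frozen. On an interval starting with $m=+1$, the mode selector $\mathbf{M}$ enforces $\mathbf{x}^{\intercal}\mathbf{s}\geq 0$ at entry, which together with the induction hypothesis $\theta\neq\theta^{*}$ lets me pick the lift so that $\tilde\theta(t_{j},j)\in[\theta^{*}-\pi,\theta^{*})$; the monotonicity $\dot{\tilde\theta}\leq 0$ and the arc-length bound $<\pi$ then give $\tilde\theta(t,j)\in(\theta^{*}-2\pi,\theta^{*})$, which is disjoint from $\theta^{*}+2\pi\mathbb{Z}$. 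The $m=-1$ case is symmetric and yields $\tilde\theta(t,j)\in(\theta^{*},\theta^{*}+2\pi)$. The main obstacle I expect is the careful bookkeeping of the continuous lift across the set-valued boundary case $\mathbf{x}^{\intercal}\mathbf{s}=0$ in $\mathbf{M}$ and at the gate points where $\hat{\mathbf{q}}^{\intercal}\hat{\mathbf{n}}=0$; both are dispatched by the strict inequality in the angular-extent bound.
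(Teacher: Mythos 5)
Your proposal is correct, and its analytical core coincides with the paper's: your identity $\dot\theta=-m\gamma(1-\kappa)\hat{\mathbf{q}}^{\intercal}\hat{\mathbf{n}}$ together with $\hat{\mathbf{q}}^{\intercal}\hat{\mathbf{n}}\geq 0$ on $\mathcal{F}_m^k$ (via Remark \ref{remark:back_region_is_closed_connected_subset}) is exactly the paper's condition \eqref{claim_about_no_revolution}, i.e., $\mathbf{u}(\mathbf{x},m^*,k)\in\mathcal{P}_{\geq}(\mathbf{0},\nu_{m^*}(\mathbf{x}))$, and both proofs anchor the argument at the entry of each avoidance episode using the sign of $\mathbf{x}^{\intercal}\mathbf{s}$ imposed by \eqref{direction_decision}. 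Where you genuinely diverge is in how the confinement is closed: the paper passes from the pointwise ``velocity points into the moving half-plane'' condition directly to ``the solution never enters $\tilde{\mathcal{K}}_{<}(\mathbb{M})$,'' leaning on Lemma \ref{eventually_exit_avoidance_mode} and a figure, and it does not explicitly exclude the scenario in which the monotone angle wraps by a full $2\pi$ within one avoidance episode and reaches the forbidden half-line from the other side. You supply precisely the missing quantitative step: since $\mathbf{0}\notin\mathcal{D}_{r_a+\epsilon_d}(\mathcal{O}_k)$ (guaranteed by $\epsilon_d<\bar r_s-r_s$), the directions subtended at the origin by this compact convex set form a closed arc of length strictly less than $\pi$ (your antipodal-chord argument, or equivalently a strict separating hyperplane), so the lifted angle varies by less than $\pi$ per episode and the interval $[\theta^*-\pi,\theta^*)$ cannot be pushed past $\theta^*-2\pi$. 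This buys a self-contained, fully rigorous confinement argument that does not need the finite-time gate-exit result of Lemma \ref{eventually_exit_avoidance_mode} at all, at the cost of the lift bookkeeping across jumps, which you handle correctly since jumps preserve $\mathbf{x}$ and the move-to-target flow $\dot{\mathbf{x}}=-\gamma\mathbf{x}$ freezes the angle.
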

\begin{proof}
See Appendix \ref{proof:no_closed_trajectories}.
\end{proof}

Lemma \ref{no_revolution_around_the_target} shows that if the robot, operating in the obstacle-free workspace $\mathcal{W}_{r_a}$, with the \textit{move-to-target} mode, does not belong to the half-line $\mathcal{L}_{>}(\mathbf{0}, \nu_{-1}(\mathbf{s}))$, see Fig. \ref{final_image}, at some time $(t_0, j_0)$, then it can never intersect the half-line $\mathcal{L}_{>}(\mathbf{0}, \nu_{-1}(\mathbf{s}))$ for all $(t, j)\succeq(t_0, j_0)$. An example is provided in Fig. \ref{final_image}, showing that the robot's trajectory, initialized at $\mathbf{x}_2$, does not intersect the half-line $\mathcal{L}_{>}(\mathbf{0}, \nu_{-1}(\mathbf{s}))$ represented in red. The main reason behind this behaviour is the switching strategy \eqref{direction_decision} for the mode indicator variable when the solution enters in the \textit{obstacle-avoidance} mode, which assigns the direction of motion that always steers the robot away from the half-line $\mathcal{L}_{>}(\mathbf{0}, \nu_{-1}(\mathbf{s}))$. This feature ensures that the robot cannot revolve around the target location. Next, we provide one of our main results which establishes the fact that for all initial conditions in the obstacle-free set $\mathcal{K}$, the proposed hybrid controller not only ensures safe navigation but also guarantees global asymptotic convergence to the predefined target location at the origin. 

We define the Lebesgue measure zero set $\mathcal{Z}_0:=\mathcal{M}_0\times\mathbb{M}\times\mathbb{I}$ such that
\begin{equation}
    \mathcal{M}_{0} := \bigcup_{i\in\mathbb{I}}(\partial\mathcal{D}_{r_a}(\mathcal{O}_i)\cap\mathcal{J}_0^i),\label{initial_condition_for_zeno_behaviour}
\end{equation} 
is the intersection of the boundaries of the unsafe region and the \textit{move-to-target} mode jump set. The intersection of the \textit{move-to-target} mode jump set $\mathcal{J}_0\times\{0\}\times\mathbb{I}$ and the \textit{obstacle-avoidance} mode jump set $\mathcal{J}_m\times\{m\}\times\mathbb{I}, m\in\{-1, 1\},$ is not empty, for $\mathbf{x}\in\mathcal{M} := \bigcup_{i\in\mathbb{I}}\{\mathbf{n}_{r_a}^{(-1, i)}, \mathbf{n}_{r_a}^{(1, i)}\}$, where
\begin{equation}
\begin{aligned}
    \mathbf{n}_{y}^{(m, i)} &= \underset{\mathbf{q}\in\partial\mathcal{D}_y(\mathcal{O}_i)\cap\mathcal{G}_{m}^i}{\text{arg max }}\norm{\mathbf{q}}.
    \end{aligned}\label{point_intersection_modified}
\end{equation}
Note that $\mathbf{n}_{y}^{(m, i)}$ is the farthest point from the origin, which belongs to the boundary of the dilated obstacle $\mathcal{D}_{y}(\mathcal{O}_i)$ such that $\alpha_s(\mathbf{q},\mathbf{q} - \Pi(\mathbf{q}, \mathcal{O}_i))$ is $-m\pi/2$. For example, see points $\mathbf{n}_{r_a}^{(m, i)}, m\in\{-1, 1\}$, depicted in Fig. \ref{Partitions_of_local_neighbourhood}. As stated next in Theorem \ref{theorem:global_asymptotic_convergence}, the solution will reach the Zeno set $\mathcal{M}\times\mathbb{M}\times\mathbb{I} =:\mathcal{Z}$ when it is initialized in the Lebesgue measure zero set $\mathcal{Z}_0$.
% For $y\in[r_a, r_a + \epsilon_d]$, the point $\mathbf{n}_y^{(m, i)}$, $m\in\{-1, 1\}$, $i\in\mathbb{I}$, defined as
% \begin{equation}
% \begin{aligned}
%     \mathbf{n}_{y}^{(m, i)} &= \underset{\mathbf{q}\in\partial\mathcal{D}_y(\mathcal{O}_i)\cap\mathcal{G}_{m}^i}{\text{arg max }}\norm{\mathbf{q}},
%     \end{aligned}\label{point_intersection_modified}
% \end{equation}
\begin{theorem}
Consider the hybrid closed-loop system \eqref{hybrid_closed_loop_system} and let Assumption \ref{assumption:robot_pass_through} hold. Then,
\begin{itemize}
    \item[$i)$] the obstacle-free set $\mathcal{K}$ is forward invariant,
    \item[$ii)$]  the set $\mathcal{A}$ is almost globally asymptotically stable,
    \item[$iii)$]  the solutions will converge to $\mathcal{A}$ from all initial conditions except from the set of Lebesgue measure zero $\mathcal{Z}_0$ where the solutions may stay jumping in $\mathcal{Z}$ (Zeno behavior),
    \item[$iv)$]  if the flows are  prioritized (forced) over the jumps then the set $\mathcal{A}$ is globally asymptotically stable and the solution is Zeno-free. 
\end{itemize}
\label{theorem:global_asymptotic_convergence}
\end{theorem}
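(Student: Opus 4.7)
Part (i) is immediate from Lemma \ref{forward_invariance_theorem_1}. For the stability portion of (ii), I would take $V(\xi) = \|\mathbf{x}\|^2$ as a candidate Lyapunov function. Since $\mathbf{0}\in(\mathcal{W}_{r_a})^\circ$, a sufficiently small ball around the origin is disjoint from every $\epsilon_d$-tube $\mathcal{T}(\mathcal{O}_i)$, so any trajectory initialized there lies in $\mathcal{F}_0$, where the control reduces to $\mathbf{u}=-\gamma\mathbf{x}$ and $\dot V=-2\gamma\|\mathbf{x}\|^2\le 0$; since $\mathbf{x}$ is preserved across jumps, Lyapunov stability of $\mathcal{A}$ follows.

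The core work is attractivity. I would first show that each \emph{obstacle-avoidance} episode terminates in finite hybrid time: inside the inner $\epsilon$-band one has $\kappa=0$ and $\dot{\mathbf{x}}=\gamma\mathbf{v}(\mathbf{x},m)$ is tangential to $\partial\mathcal{D}_{r_a}(\mathcal{O}_k)$ with positive speed $\gamma\|\mathbf{x}\|$ (using Lemma \ref{lemma:normal_to_dilated_obstacle} to identify the normal direction), and in the blending band a positive tangential component persists. Compactness of $\mathcal{O}_k$ together with the construction of $\mathcal{F}_m^k$ and $\mathcal{J}_m^k$ then force the trajectory to leave the avoidance flow set through $\mathcal{J}_m^k$ in finite time, resetting $m$ to $0$. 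Along each intervening \emph{move-to-target} flow $V$ strictly decreases, and Lemma \ref{no_revolution_around_the_target} prevents the robot from ever crossing the half-line $\mathcal{L}_{>}(\mathbf{0},\nu_{-1}(\mathbf{s}))$, which rules out both infinite loops around a single obstacle and closed orbits around the origin. A bookkeeping argument on the finite obstacle set then shows that the values of $V$ at successive activations of $m=0$ form a strictly decreasing sequence, so eventually the trajectory enters the innermost obstacle-free neighbourhood of the origin and flows continuously to $\mathcal{A}$. Initial conditions in $\mathcal{Z}_0$ lie simultaneously in $\mathcal{J}_0$ and in some $\mathcal{J}_m$; the state may then jump back and forth at a point of $\mathcal{M}$, remaining in $\mathcal{Z}$, which accounts for the exception in (iii).

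For (iv), prioritizing flow over jump at the points of $\mathcal{M}$ forces continuous evolution rather than a further jump. Since $\mathbf{u}$ is nonzero and transverse to $\mathcal{M}$ there (each $\mathbf{n}_{r_a}^{(m,i)}$ lies in a gate region $\mathcal{G}_m^i$, where $\mathbf{v}(\mathbf{x},m)$ reduces to $-\mathbf{x}$ and the flow immediately carries the state into $\mathcal{F}_0\setminus\mathcal{J}_0$), the solution instantly leaves $\mathcal{Z}$; the attractivity argument of (ii)--(iii) then applies to every initial condition in $\mathcal{K}$, yielding global asymptotic stability of $\mathcal{A}$ and a Zeno-free solution. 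The main obstacle in executing this programme is the finite-time termination of each obstacle-avoidance episode together with the strict decrease of $V$ between consecutive \emph{move-to-target} activations; both require combining the convex geometry of the obstacles (Lemma \ref{lemma:normal_to_dilated_obstacle}) with the non-revolution property (Lemma \ref{no_revolution_around_the_target}) and a careful case analysis of the exit through the side, back, and outer-boundary portions of $\mathcal{J}_m^k$.
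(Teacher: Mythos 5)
Your part $(i)$ and the Lyapunov-stability portion of $(ii)$ match the paper (forward invariance from Lemma \ref{forward_invariance_theorem_1}; a small ball around the origin disjoint from every $\mathcal{J}_0^i$ on which $\mathbf{u}=-\gamma\mathbf{x}$), and your finite-time termination of each \emph{obstacle-avoidance} episode is essentially the paper's Lemma \ref{eventually_exit_avoidance_mode}. The gap is in the attractivity mechanism. Your claim that ``the values of $V=\|\mathbf{x}\|^2$ at successive activations of $m=0$ form a strictly decreasing sequence'' is false: the start of the $(n{+}1)$-th \emph{move-to-target} episode is the exit point of the $n$-th avoidance episode, which lies in a gate region $\mathcal{G}_m^k$ and can be much farther from the origin than the entry point. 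Take a long, thin convex obstacle (e.g.\ a segment from $(5,-1)$ to $(5,20)$ with the target at the origin) and a robot entering the front region near $(6,10)$; circumnavigating toward the far end forces the exit through a gate near $(4,20)$, so $\|\mathbf{x}\|$ roughly doubles across the episode. Hence no monotone-$V$ bookkeeping over the finite obstacle set can close the argument, and ruling out closed orbits via Lemma \ref{no_revolution_around_the_target} does not by itself yield convergence --- the trajectory could a priori keep alternating modes while drifting without approaching $\mathcal{A}$. (Even if the sequence were strictly decreasing, strict decrease alone would not force the limit to be zero without a uniform decrement.)

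What the paper actually proves is a two-coordinate monotonicity in polar-type variables anchored to the vector $\mathbf{s}$ of \eqref{direction_decision}: within each half-plane $\mathcal{P}_{>}(\mathbf{0},\mathbf{s})$ or $\mathcal{P}_{<}(\mathbf{0},\mathbf{s})$ the angle $\alpha_{m^*}(\mathbf{x},\nu_1(\mathbf{s}))$ is nonincreasing across avoidance episodes because $\mathbf{u}\in\mathcal{P}_{\geq}(\mathbf{0},\nu_{m^*}(\mathbf{x}))$ there (Lemma \ref{eventually_it_crosses_line}, together with Lemma \ref{never_close_avoidance} to keep the solution off $\mathcal{Z}_0$), so the solution either converges directly under $-\gamma\mathbf{x}$ or reaches the half-line $\mathcal{L}_{>}(\mathbf{0},\nu_1(\mathbf{s}))$ in finite hybrid time; and the \emph{radial} distance decreases strictly only at these successive crossings $\mathbf{c}_l$ of that half-line, which is established not by a Lyapunov estimate but by trapping the trajectory between two consecutive crossings inside the set $\mathcal{C}(\nu_1(\mathbf{s}),\mathbf{x}_{\mathcal{J}}^i)\cap\mathcal{P}_{\geq}(\mathbf{c}_l,\nu_{m^*}(\mathbf{x}_{\mathcal{J}}^i-\mathbf{c}_l))$ and showing it can only exit through $\mathcal{L}_s(\mathbf{0},\mathbf{c}_l)\backslash\{\mathbf{c}_l\}$. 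To repair your proposal you would need to replace the decreasing-$V$ step with this angular-monotonicity-plus-crossing argument (or an equivalent), since the distance to the target is genuinely non-monotone along the closed-loop trajectories.
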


\begin{proof}
See Appendix \ref{proof:the_main_theorem}.
\end{proof}

It is worth pointing out that the almost global stability result established in Theorem \ref{theorem:global_asymptotic_convergence} is not due to the existence of undesired saddle points as in \cite{koditschek1990robot}, but due to the potential existence of a Zeno behaviour \cite{goedel2012hybrid}. In fact, if the robot is initialized in  $\mathcal{M}_0$, which consists of all the inner boundaries of the front regions, it may not converge to the target and instead will converge to some isolated points where it will experience a Zeno behaviour by switching indefinitely between the two modes of operation. This behaviour is obtained only if the jumps are  prioritized over the flows in the implementation.

\begin{remark}
\label{dwell_time_remark}
When prioritizing flows over jumps, the useful structural and robustness properties of the set of solutions guaranteed by the hybrid basic conditions \cite[Assumption 6.5]{goedel2012hybrid} may not be present. Another practical way (different from prioritizing the flows over jumps), that helps in avoiding the Zeno behaviour, consists in introducing a small enough time period $\tau > 0$ (dwell time) between consecutive jumps. This will force the solution $\xi$, after switching once, to leave the set $\mathcal{M}$ through the flow. The hybrid basic conditions are in this case preserved.

\end{remark}
\begin{remark}[Continuous input]
The continuous scalar function $\kappa$ in \eqref{kappa_function_formulation}, will help to guarantee the continuity of the control input when the mode transitions between the \textit{obstacle-avoidance} mode and the \textit{move-to-target} mode at the gate region. This interesting feature of the proposed hybrid control law \eqref{proposed_hybrid_controller_2} is formalized in Proposition \ref{proposition:continuous_control}.
\end{remark}

\begin{proposition}
 Consider the closed-loop system \eqref{single_integrator_control_law} and let Assumption \ref{assumption:robot_pass_through} hold. If $\xi(t_0, j_0)\in\mathcal{K}\backslash\mathcal{Z}_0$\label{proposition:continuous_control}, for some $(t_0, j_0)\in\text{dom }\xi$, then the control input trajectories $\mathbf{u}(\xi(t,j))$, generated according to  \eqref{proposed_hybrid_controller_2}, are continuous $\forall(t, j)\succeq(t_0, j_0)$.
\end{proposition}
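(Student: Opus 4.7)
The plan is to decompose every hybrid trajectory into flow intervals, on which $(m,k)$ is constant, and isolated jumps, and to verify continuity of $\mathbf{u}$ on each piece. Throughout, Lemma \ref{forward_invariance_theorem_1} confines $\mathbf{x}(t,j)$ to $\mathcal{W}_{r_a}$.

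Flow continuity reduces to showing that $\mathbf{x}\mapsto\mathbf{u}(\mathbf{x},m)$ is continuous on the portion of $\mathcal{W}_{r_a}$ visited. The scalar $\kappa(\mathbf{x},m)$ is continuous because the distance function $\varrho(\mathbf{x})=d(\mathbf{x},\mathcal{O}_\mathcal{W})-r_a$ and $\eta$ are. The coefficient $1-\kappa(\mathbf{x},m)$ vanishes whenever $m=0$ or $\varrho(\mathbf{x})\geq\epsilon_s$, so the rotational term matters only inside the $\epsilon_s$-neighbourhood of some obstacle; by Assumption \ref{assumption:robot_pass_through} such a point lies in the $\epsilon_s$-neighbourhood of exactly one $\mathcal{O}_k$, and there $\Pi(\mathbf{x},\mathcal{O}_\mathcal{W})=\Pi(\mathbf{x},\mathcal{O}_k)$ is single-valued and continuous as a projection onto a closed convex set. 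Safety further gives $\|\mathbf{x}-\Pi(\mathbf{x},\mathcal{O}_\mathcal{W})\|\geq r_a>0$, so \eqref{definition_of_vim} is well-defined, and $\mathbf{u}(\cdot,m)$ is continuous on the relevant domain.

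Because $\mathbf{x}$ is frozen across a jump and $\mathbf{u}(\mathbf{x},0)=-\gamma\mathbf{x}$ always, continuity at jumps reduces to showing $\mathbf{u}(\mathbf{x},m')=-\gamma\mathbf{x}$ on the obstacle-avoidance side. For a jump $m=0\to m'\in\{-1,1\}$, I argue that a flow of $m=0$ can only enter $\mathcal{J}_0^i=\mathcal{D}_{r_a+\epsilon_s}(\mathcal{O}_i)\cap\bar{\mathcal{R}}_f^i$ through $\partial\mathcal{D}_{r_a+\epsilon_s}(\mathcal{O}_i)$: the trajectory $\mathbf{x}(t)=e^{-\gamma t}\mathbf{x}(0)$ keeps $\mathcal{L}_s(\mathbf{x}(t),\mathbf{0})$ as a sub-segment of the original segment to the origin, so a point that started outside $\mathcal{R}_f^i$ cannot become a front-region point during flow; hence the entry must cross the outer sphere $\partial\mathcal{D}_{r_a+\epsilon_s}(\mathcal{O}_i)$. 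On that boundary $\varrho=\epsilon_s$ gives $\kappa=1$ on both sides of the jump, and $\mathbf{u}=-\gamma\mathbf{x}$ before and after.

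For $m'\to 0$, I would handle each of the three components of $\mathcal{J}_{m'}^i$ in turn. In the exterior of $\mathcal{D}_{r_a+\epsilon_d}(\mathcal{O}_i)$ one has $\varrho\geq\epsilon_d>\epsilon_s$, so $\kappa=1$ gives continuity immediately. Entry into $\mathcal{R}_b^i$ from the flow set necessarily crosses $\partial\mathcal{R}_b^i=\mathcal{G}_{-1}^i\cup\mathcal{G}_1^i$; for an $m'$-mode rotation this crossing occurs precisely at $\mathcal{G}_{m'}^i$, where $\alpha_s(\mathbf{x},\mathbf{x}-\Pi(\mathbf{x},\mathcal{O}_i))=-m'\pi/2$. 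The skew matrix in \eqref{definition_of_vim} then rotates $\hat{n}$ into alignment with $-\mathbf{x}/\|\mathbf{x}\|$, giving $\mathbf{v}(\mathbf{x},m')=-\mathbf{x}$ and $\mathbf{u}(\mathbf{x},m')=-\gamma\kappa\mathbf{x}-\gamma(1-\kappa)\mathbf{x}=-\gamma\mathbf{x}$ independently of $\kappa$. The third case, exit through $\partial\mathcal{ER}_f^i$ into $\mathcal{R}_{-m'}^i\backslash\mathcal{ER}_f^i$, is the main obstacle of the proof: at any such point $\mathcal{L}_s(\mathbf{x},\mathbf{0})$ is tangent to $\mathcal{D}_{r_a+\epsilon}(\mathcal{O}_i)$; when $\varrho=\epsilon$ the tangent point coincides with $\mathbf{x}$ itself, pinning $\mathbf{x}$ onto $\mathcal{G}_{\pm 1}^i$ and reducing to the previous case, while for $\epsilon<\varrho<\epsilon_s$ one checks, using the explicit form of $\mathbf{u}$ together with Lemma \ref{lemma:normal_to_dilated_obstacle}, that the outward-normal component of $\mathbf{u}$ along $\partial\mathcal{ER}_f^i$ is non-positive, so the exit is actually realized only once $\varrho$ has grown to at least $\epsilon_s$ (where $\kappa=1$ again restores continuity). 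The role of the exclusion $\xi(t_0,j_0)\notin\mathcal{Z}_0$ is precisely to rule out initial data on $\partial\mathcal{D}_{r_a}(\mathcal{O}_i)\cap\mathcal{J}_0^i$, where the solution would jump instantaneously at $\varrho=0$ and the flow-based arguments above would not apply.
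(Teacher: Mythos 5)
Your decomposition (flow continuity via continuity of $\varrho$, $\eta$ and the convex projection; jump continuity by checking that $\mathbf{u}=-\gamma\mathbf{x}$ on both sides of every mode switch) is the right skeleton, and your treatment of the $0\to m'$ transition is sound --- the observation that radial flow toward the origin shrinks $\mathcal{L}_s(\mathbf{x},\mathbf{0})$, so front-region membership cannot be acquired during flow and entry into $\mathcal{J}_0^i$ must occur on $\partial\mathcal{D}_{r_a+\epsilon_s}(\mathcal{O}_i)$ where $\kappa=1$, is actually a cleaner justification than the paper's appeal to its figure.

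The gap is in the $m'\to 0$ direction, case (b). You assert that ``for an $m'$-mode rotation this crossing occurs precisely at $\mathcal{G}_{m'}^i$'' without proof, and this is exactly the point on which continuity hinges. The portion of $\mathcal{G}_{-m'}^i$ adjacent to the flow set $\mathcal{F}_{m'}^i$ (the set $\mathcal{S}_3=\mathcal{G}_{-m'}^i\cap\mathcal{F}_{m'}^i$ in the paper's notation) is nonempty, lies in $\mathcal{R}_b^i\subset\mathcal{J}_{m'}^i$, and satisfies $\varrho\le\epsilon$, hence $\kappa=0$ there. At such a point the skew rotation in \eqref{definition_of_vim} gives $\mathbf{v}(\mathbf{x},m')=+\mathbf{x}$ (not $-\mathbf{x}$), so $\mathbf{u}=+\gamma\mathbf{x}$ before the jump and $-\gamma\mathbf{x}$ after: a sign reversal, i.e.\ a genuine discontinuity. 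Ruling out arrival at $\mathcal{G}_{-m'}^i$ is therefore not a cosmetic detail; it is the content of the paper's Lemma \ref{eventually_exit_avoidance_mode}, whose proof partitions $\partial\mathcal{F}_{m}^k$ and $\mathcal{F}_m^k$, uses Nagumo/tangent-cone arguments together with Lemma \ref{lemma:no_close_avoidance}, and establishes an angular monotonicity (\eqref{to_show_11}) showing the solution drifts toward $\mathcal{G}_{m'}^k$ and can only leave via $\mathcal{G}_{m'}^k$. The paper's own proof of the proposition simply cites that lemma and is done; your direct enumeration of the components of $\mathcal{J}_{m'}^i$ must either reproduce that monotonicity argument or invoke the lemma. (Two smaller points: $\partial\mathcal{R}_b^i$ also contains arcs of $\partial\mathcal{T}(\mathcal{O}_i)$, not only the gates, though those arcs are covered by your case (a) and by safety; and on $\partial\mathcal{ER}_f^i$ the paper in fact shows the flow never exits there for any $\varrho$, which subsumes your weaker ``only at $\varrho\ge\epsilon_s$'' claim.)
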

\begin{proof}
See Appendix \ref{proof:proposition_continuous}.
\end{proof}

According to Proposition \ref{proposition:continuous_control}, for the solution which belongs to the set $\mathcal{K}\backslash\mathcal{Z}_0$ at some time $(t_0, j_0)\in\text{dom }\xi$ during the evolution, the proposed hybrid feedback control law \eqref{proposed_hybrid_controller_2} will generate continuous control input trajectories for all $(t, j)\succeq(t_0, j_0)$. 
% As a consequence, for the hybrid closed-loop system \eqref{hybrid_closed_loop_system} with \textit{dwell-time} $\tau> 0$, if $\xi(0, 0)\in\mathcal{M}_0\times\mathbb{M}\times\{k\}$ for some $k\in\mathbb{I}$, then the solution will eventually enter the set $\mathcal{Z}$ and then will leave it in finite time, after which according to Proposition \ref{proposition:continuous_control}, the control input trajectory $\mathbf{u}(\xi(t, j))$ will remain continuous.
This concludes the stability analysis of the hybrid closed-loop system \eqref{hybrid_closed_loop_system}. Next, we provide procedural steps to implement the proposed hybrid feedback controller \eqref{proposed_hybrid_controller_2} for safe autonomous navigation of a mobile robot operating in an \textit{a priori} known and unknown environments.

\section{Sensor-based implementation procedure}\label{sensor-based-implementation}

Without loss of generality, we assume that the target location is at the origin, and we set $m(0, 0) = 0$. The non-zero vector $\mathbf{s}$, used in \eqref{direction_decision}, can be selected such that $\mathbf{x}(0, 0)\in\mathcal{L}_{>}(\nu_1(\mathbf{s}))$.
%Given the initial and target locations, it is possible to perform coordinate transformation to place the origin of the coordinate axes at the target location and evaluate the relative location of the initial position. Since the main goal is to converge towards the origin, it is advantageous to set $m(0, 0) = 0$. The non-zero vector $\mathbf{s}$, used in \eqref{direction_decision}, can be selected such that $\mathbf{x}(0, 0)\in\mathcal{L}_{>}(\nu_1(\mathbf{s}))$. 
%If the environment is \textit{a priori} known, the robot can construct the flow and jump sets, defined in \eqref{composite_jump_and_flow_sets}, before initiating the motion. 
Then the robot can implement the proposed hybrid control law \eqref{proposed_hybrid_controller_2}, using Algorithm \ref{alg:sensor_based_implementation}. Since the parameters $\epsilon_d > \epsilon_s > \epsilon > 0$ can be tuned offline, Algorithm \ref{alg:sensor_based_implementation} should be implemented excluding the steps highlighted in blue color. The blue colored steps are essential for the sensor-based implementation in an \textit{a priori} unknown environment.

\begin{algorithm}
\caption{General implementation of the proposed hybrid control law \eqref{proposed_hybrid_controller_2}.}
\begin{algorithmic}[1] 
\STATE\textbf{Set} target location at the origin $\mathbf{0}$.
\STATE\textbf{Initialize} $\mathbf{x}(0, 0)\in\mathcal{W}_{r_a}$, $m(0, 0) = 0$, $\epsilon_d\in(0,\bar{r}_s - r_s)$. Pick $\mathbf{s}\in\mathbb{R}^2\backslash\{\mathbf{0}\}$, used in \eqref{direction_decision}, such that $\mathbf{x}(0, 0)\in\mathcal{L}_{\geq}(\mathbf{0}, \nu_{1}(\mathbf{s}))$.
\STATE\textbf{Measure} $\mathbf{x}$.
\IF{$m= 0$,}
\begin{color}{blue}\STATE\textbf{Implement} Algorithm \ref{alg:jumpset_participation}.\end{color}
\IF{$(\mathbf{x}, m)\in\mathcal{J}_0\times\{0\}$,}
\begin{color}{blue}\STATE \textbf{Set} $\epsilon_s = d(\mathbf{x}, \mathcal{O}_{\mathcal{W}}) - r_a$.
\STATE \textbf{Select} $\epsilon = (0, \epsilon_s).$\end{color}
\STATE\textbf{Update} $m\leftarrow{\mathbf{L}(\mathbf{x}, m)}$ using \eqref{update_law_for_m}, \eqref{direction_decision}.
\ENDIF
\ENDIF
\IF{$m \in\{-1, 1\}$,}
\begin{color}{blue}\STATE\textbf{Implement} Algorithm \ref{alg:jumpset_participation}.\end{color}
\IF{$(\mathbf{x}, m)\in\mathcal{J}_{m}\times\{m\}$,}
\STATE\textbf{Update} $ m\leftarrow\mathbf{L}(\mathbf{x}, m)$ using \eqref{update_law_for_m}, \eqref{direction_decision}. 
\ENDIF
\ENDIF
\STATE\textbf{Execute} $\mathbf{u}(\mathbf{x}, m)$ \eqref{fxim}, used in \eqref{hybrid_closed_loop_system}.
\STATE \textbf{Go to} step 3.
\end{algorithmic}
\label{alg:sensor_based_implementation}
\end{algorithm}

%Next, we consider the case where the environment satisfying Assumption \ref{assumption:robot_pass_through} is \textit{a priori} unknown. We assume that the robot is equipped with range-bearing sensor with angular scanning range of $360^{\circ}$ and sensing radius $R_s> r_a + \epsilon_d$ units. Due to the limited sensing radius, in most of the situations, the robot can only detect a subset of the obstacles $\mathbb{I}_{\mathbf{x}}\subseteq\mathbb{I}$, defined as
In the case where the environment is \textit{a priori} unknown, we choose a sufficiently small value of the parameter $\epsilon_d\in(0, \bar{r}_s - r_s)$, which ensures that the target location is at a distance greater than $r_a + \epsilon_d$ from the unsafe region. We assume that the robot is equipped with a range-bearing sensor with angular scanning range of $360^{\circ}$ and sensing radius $R_s> r_a + \epsilon_d$. Due to the limited sensing radius, the robot can only detect a subset of the obstacles $\mathbb{I}_{\mathbf{x}}\subseteq\mathbb{I}$, defined as
\begin{equation}
    \mathbb{I}_{\mathbf{x}} = \{i\in\mathbb{I}|d(\mathbf{x}, \mathcal{O}_i)\leq R_s\}.
\end{equation}
% Since the nearest obstacle can be large relative to the sensing region $\mathcal{B}_{R_s}(\mathbf{x})$, it becomes increasingly difficult to identify whether the state $(\mathbf{x}, m)$ belongs to the jump set $\mathcal{J}$ in \eqref{composite_jump_and_flow_sets} or not. To overcome this issue, 

Based on the local sensing information, we provide a procedure that allows to identify whether the state $(\mathbf{x}, m)$ belongs to the jump set $\mathcal{J}_{\mathcal{W}}$ \eqref{composite_jump_and_flow_sets} or not, when $m(0, 0) = 0$, which is summarized in Algorithm 2. Similar to \cite{arslan2019sensor,berkane2021navigation}, the range-bearing sensor is modeled using a polar curve $r_g(\mathbf{x}, \theta):\mathcal{W}_{r_a}\times[-\pi, \pi]\to[0, R_s]$,

\begin{equation}
    r_g(\mathbf{x}, \theta) = \min\left\{R_s, \underset{\begin{matrix}\mathbf{y}\in\partial\mathcal{O}_{\mathcal{W}}\\\text{atan2}(\mathbf{y}-\mathbf{x}) = \theta\end{matrix}}{\min}\norm{\mathbf{x} - \mathbf{y}}\right\},
\end{equation}
%\begin{equation}
%    \rho(r_g; \mathbf{x}, \theta) = \min\left\{R_s, \underset{\begin{matrix}\mathbf{y}\in\partial\mathcal{O}_{\mathcal{W}}\\\text{atan2}(\mathbf{y}-\mathbf{x}) = \theta\end{matrix}}{\min}\norm{\mathbf{x} - \mathbf{y}}\right\},
%\end{equation}
which represents the distance between the center of the robot and the boundary of the unsafe region $\partial\mathcal{O}_{\mathcal{W}}$, measured by the sensor, in the direction defined by the angle $\theta$. Given the location of the center of the robot $\mathbf{x}$, along with the bearing angle $\theta$, the mapping $\lambda(\mathbf{x}, \theta):\mathcal{W}_{r_a}\times[-\pi, \pi]\to\mathcal{W}_{0}$, given by
\begin{equation}
    \lambda(\mathbf{x}, \theta) = \mathbf{x} + r_g(\mathbf{x}, \theta)[\cos\theta, \sin\theta]^\intercal,
\end{equation}
evaluates the Cartesian coordinates of the detected point. 

The robot should identify the minimum distance from the set $\mathcal{O}_{\mathcal{W}}$,
\begin{equation}
    d(\mathbf{x}, \mathcal{O}_{\mathcal{W}}) = \underset{\theta\in[-\pi, \pi]}{\min}r_g( \mathbf{x}, \theta).\label{sensor:distance_from_obstacles}
\end{equation}
If $d(\mathbf{x}, \mathcal{O}_{\mathcal{W}})$ is greater than or equal to $r_a + \epsilon_d$, then, according to \eqref{jumpset_individual_obstacle}, \eqref{stabilization_mode_jumpflow_set_final} and \eqref{composite_jump_and_flow_sets}, the state $(\mathbf{x}, m)\notin\mathcal{J}_{0}\times\{0\},$ \textit{i.e.}, the robot should continue to operate in the \text{move-to-target} mode. On the other hand, if $d(\mathbf{x}, \mathcal{O}_{\mathcal{W}}) \in [r_a, r_a +\epsilon_d]$, which, according to Assumption \ref{assumption:robot_pass_through}, can be true for only one obstacle, let us say $k\in\mathbb{I}_{\mathbf{x}}$, then the robot should identify whether the state $\mathbf{x}$ belongs to the back region of the respective obstacle $\mathcal{R}_b^k$ or not. The robot should locate the projection of its center onto the obstacle $\mathcal{O}_k$ \textit{i.e.}, $\Pi(\mathbf{x}, \mathcal{O}_k)$, which is unique and given by
\begin{equation}
    \Pi(\mathbf{x}, \mathcal{O}_k) = \lambda(\mathbf{x}, \theta^*),\label{projection_using_sensor}
\end{equation}
where
\begin{equation}
    \theta^* = \underset{\theta\in[-\pi, \pi]}{\text{argmin}}r_g(\mathbf{x}, \theta).\label{sensor:closest_theta}
\end{equation}
Then, the robot should verify the following condition:
\begin{equation}
    \mathbf{x}^\intercal(\mathbf{x} - \Pi(\mathbf{x}, \mathcal{O}_k)) \leq 0.\label{back_region_condition}
\end{equation}
The satisfaction of \eqref{back_region_condition} implies that the state $\mathbf{x}$ belongs to the back region of the obstacle $\mathcal{O}_k$, and, according to \eqref{jumpset_individual_obstacle}, \eqref{stabilization_mode_jumpflow_set_final} and \eqref{composite_jump_and_flow_sets}, $(\mathbf{x}, m)\notin\mathcal{J}_{0}\times\{0\}$ implying that the robot can continue to operate in the \textit{move-to-target} mode. Otherwise, if \eqref{back_region_condition} is not satisfied, the robot should further investigate the possibility of collision while operating in the \textit{move-to-target} mode.

\begin{figure}[h]
    \centering
    \includegraphics[width = 0.435\linewidth]{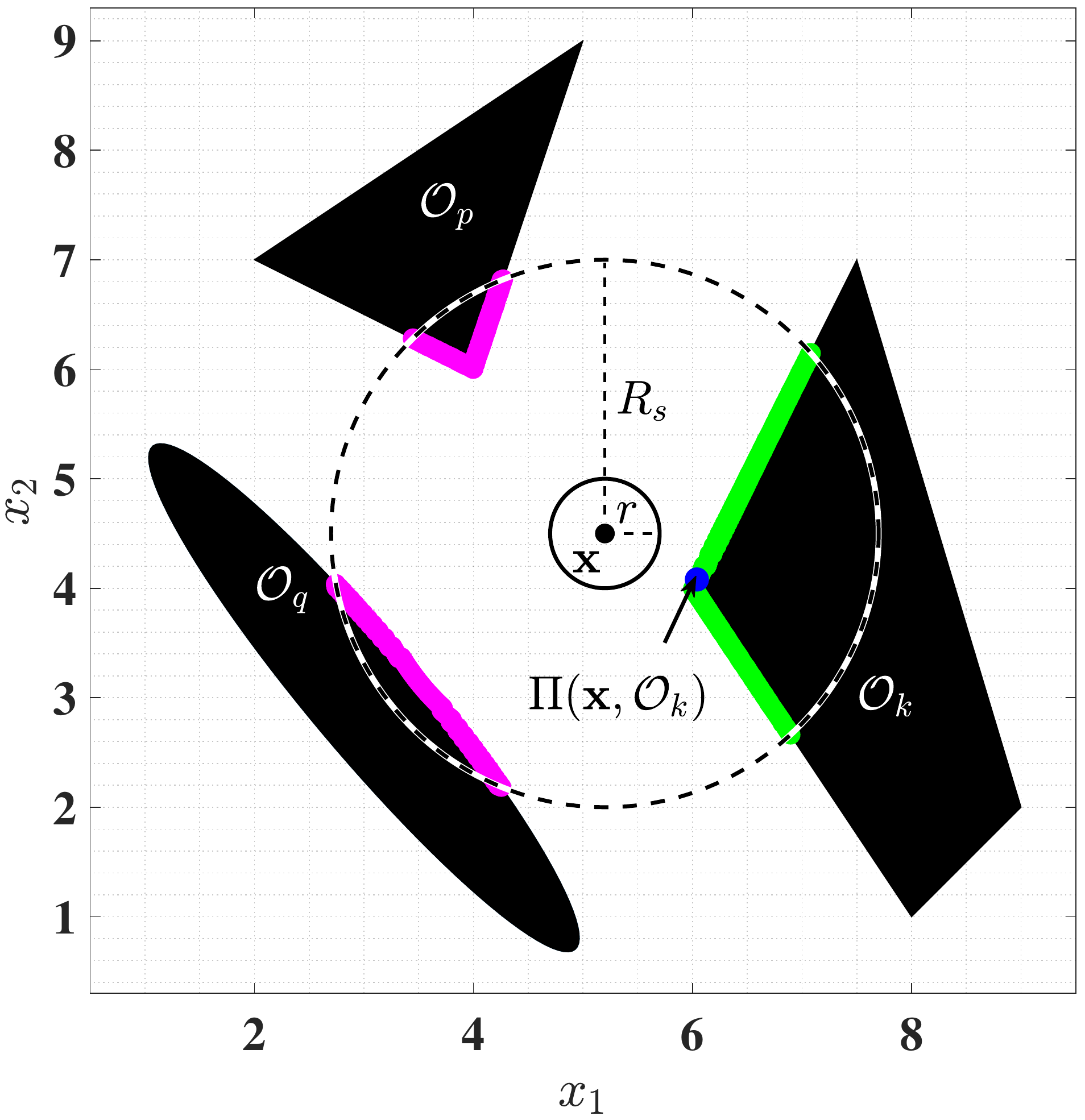}
    \includegraphics[width = 0.54\linewidth]{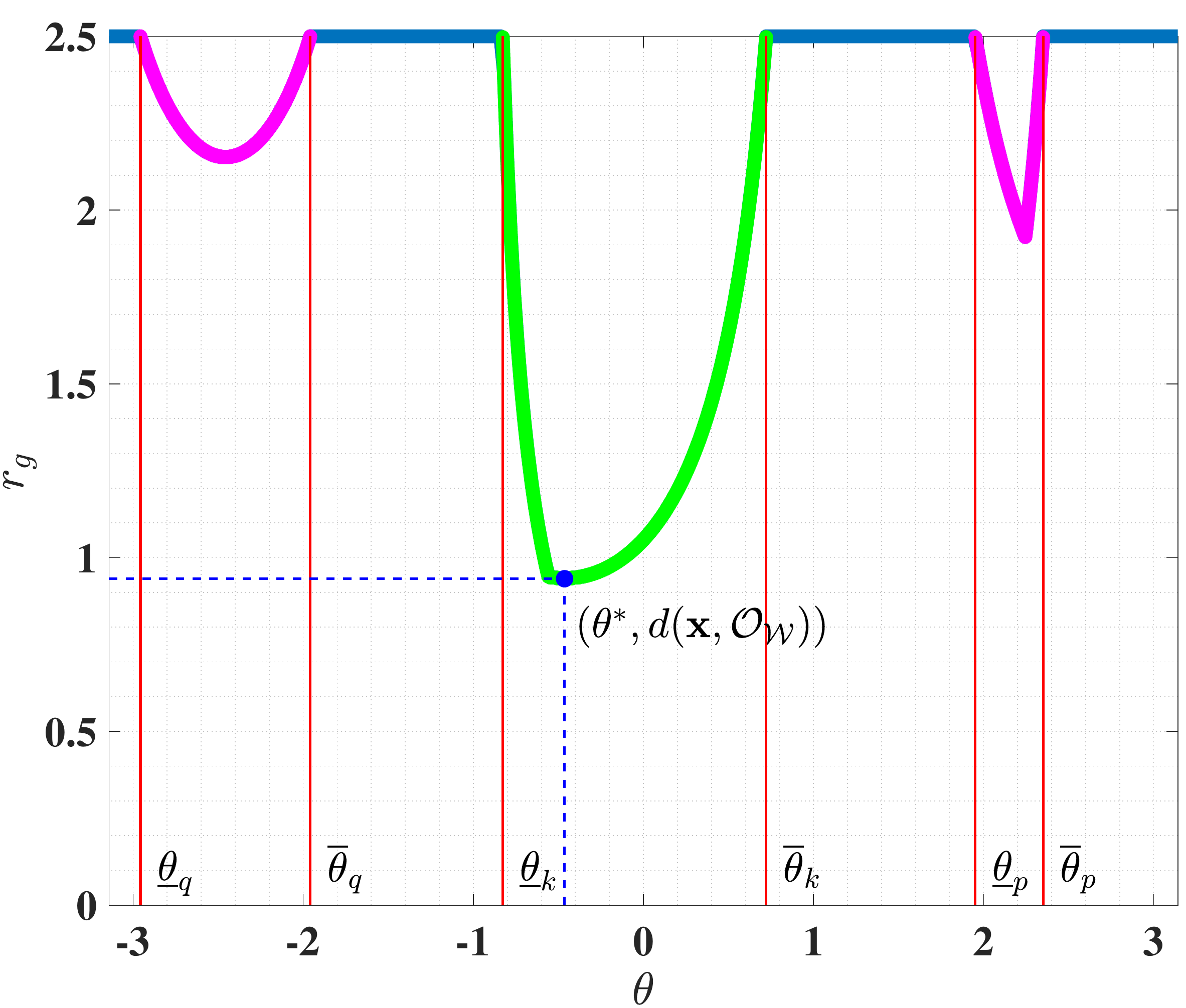}
    \caption{The left figure shows a robot, with radius $r = 0.5m$, using a range-bearing sensor to locate the partial boundary of the obstacles within the sensing range $R_s = 2.5m$, wherein observed boundary of the nearest obstacle is shown in green and the observed boundaries of the remaining obstacles are represented by the blue colored curves. The right figure displays the range measurements obtained with a sensor located at $\mathbf{x}$ for $\theta\in[-\pi, \pi].$ Similar colors have been used to represent the correlation between the points on the boundary of the observed obstacles (left figure) and the rang/bearing measurements (right figure).}
    \label{sensor_environment} 
\end{figure}
Next, the robot should identify the boundary curve $\eth\mathcal{O}_k\subset\partial\mathcal{O}_k$, which is a set of points which belongs to the boundary of the obstacle $\mathcal{O}_k$ and are in the line-of-sight of the center of the robot. Figure \ref{sensor_environment} illustrates the measurements obtained via a range-bearing sensor when the robot is located in the obstacle-free space. Since the obstacles are disjoint with a minimum separation greater than $2r$ as per Assumption \ref{assumption:robot_pass_through}, the range-bearing measurement graph, shown in Fig. \ref{sensor_environment}, consists of convex curves, one for each of the obstacles present within the sensing region $\mathcal{B}_{R_s}(\mathbf{x})$.

For each obstacle $\mathcal{O}_i, i\in\mathbb{I}_{\mathbf{x}}$, the robot can identify $\underline{\theta}_i,\overline{\theta}_i\in[-\pi, \pi]$ such that the measurements related to the obstacle $\mathcal{O}_i, i\in\mathbb{I}_{\mathbf{x}},$ lie within the angular range of $\left[\underline{\theta}_i, \overline{\theta}_i\right]$, as shown in Fig. \ref{sensor_environment}. Since the measurements are acquired in the line-of-sight format, there cannot be any overlap between the angular intervals related to any two obstacles \text{i.e.}, $\left[\underline{\theta}_i, \overline{\theta}_i\right] \cap \left[\underline{\theta}_j, \overline{\theta}_j\right] = \emptyset, i, j\in\mathbb{I}_{\mathbf{x}}, i\ne j$. The robot should then identify $\left[\underline{\theta}_k, \overline{\theta}_k\right]$ for the obstacle $\mathcal{O}_k$ where
\begin{equation}
    k = \left\{i\in\mathbb{I}_{\mathbf{x}}|\theta^*\in\left[\underline{\theta}_i,\overline{\theta}_i\right]\right\},\label{sensor:closest_obstacle}
\end{equation}
then the set $\eth\mathcal{O}_k$ can be defined as
\begin{equation}
    \eth\mathcal{O}_k = \left\{\lambda(\mathbf{x}, \theta)\in\mathcal{W}_0| \theta \in \left[\underline{\theta}_k, \overline{\theta}_k\right]\right\}.\label{partial_boundary_of_obstacle}
\end{equation}

\begin{figure}[h]
    \centering
    \includegraphics[width = 0.6\linewidth]{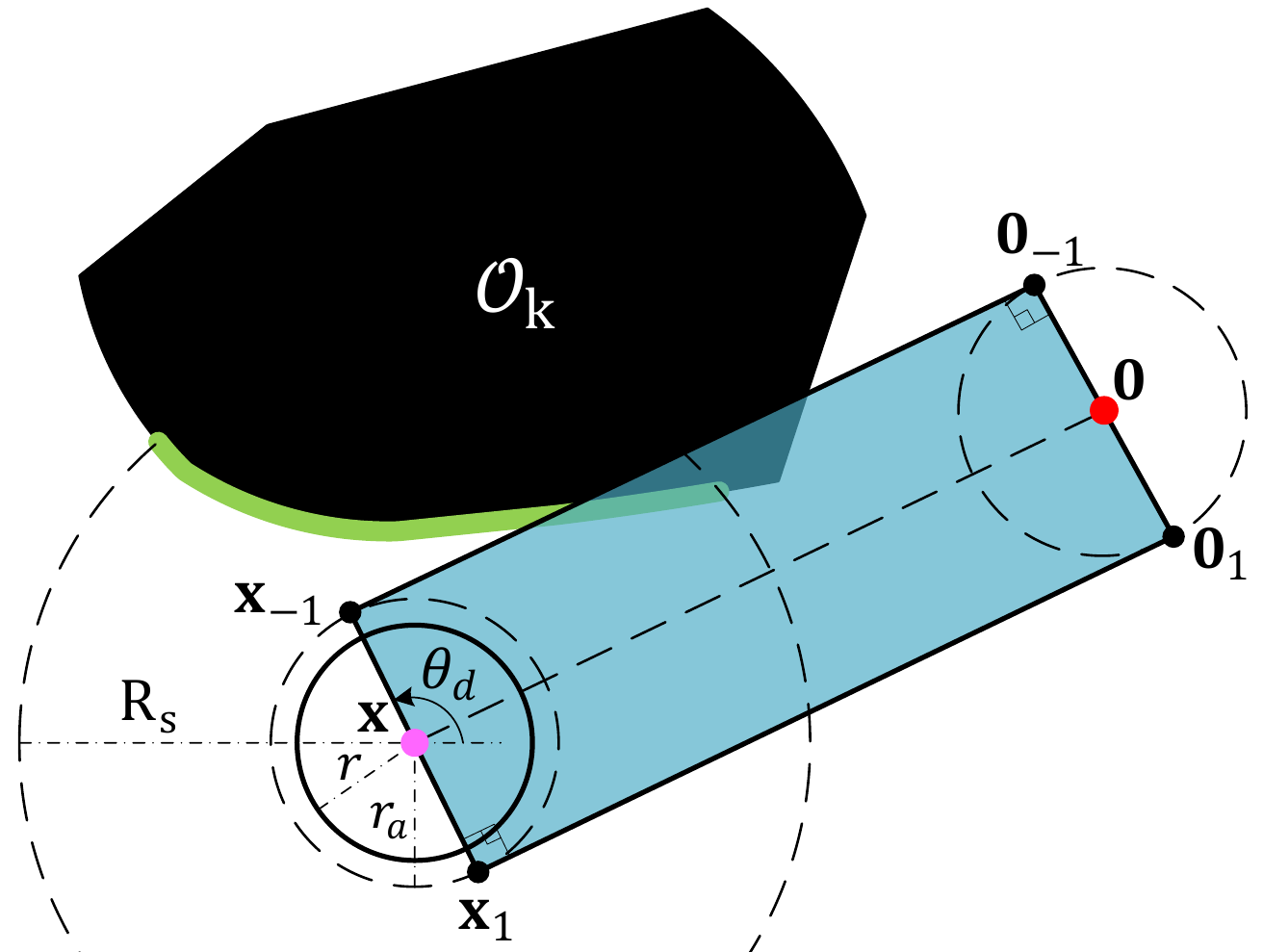}
    \caption{Construction of the rectangle $\Box(\mathbf{x})$, used in \eqref{final_condition}, based on the location of the robot and the target location at the origin.}
    \label{sensor_based_jumpset_identification_figure}
\end{figure}

The robot should then construct a rectangle $\Box(\mathbf{x})$ with its vertices located at the points $\mathbf{x}_{-1}, \mathbf{x}_1, \mathbf{0}_{-1}$ and $\mathbf{0}_1$, as shown in Fig. \ref{sensor_based_jumpset_identification_figure}, defined as
\begin{equation}
    \begin{aligned}
    \begin{bmatrix}
    \mathbf{x}_z\\
    \mathbf{0}_z
    \end{bmatrix} = \begin{bmatrix}
    \mathbf{x}\\
    \mathbf{0}
    \end{bmatrix} + zr_a\begin{bmatrix}\mathbf{I}\\\mathbf{I}\end{bmatrix}\begin{bmatrix}\cos\theta_d\\\sin\theta_d\end{bmatrix}, z\in\{-1, 1\},
    \end{aligned}\label{vertices_of_rectangle}
\end{equation}
where $\mathbf{I}$ is a $2\times2$ identity matrix, and $\theta_d = \pi/2 + \text{atan2}(\mathbf{x})$. It is straightforward to notice that, the robot can continue to navigate in the \textit{move-to-target} mode, safely with respect to partial obstacle boundary $\eth\mathcal{O}_k$, if and only if the following condition holds true:
\begin{equation}
    \eth\mathcal{O}_k\cap\Box(\mathbf{x}) = \emptyset.\label{final_condition}
\end{equation}
If above condition is not satisfied at some $(t, j)$, as illustrated in Fig. \ref{sensor_based_jumpset_identification_figure}, then the robot can conclude that continuing to move in the \textit{move-to-target} mode will result in the collision with obstacle $\mathcal{O}_k$ \textit{i.e.}, $(\mathbf{x}(t, j), m(t, j))\in\mathcal{J}_0\times\{0\}$ and that now it should operate in the \textit{obstacle-avoidance} mode. At this instant, the robot should set $\epsilon_s = d(\mathbf{x}, \mathcal{O}_{\mathcal{W}}) - r_a$ and $\epsilon = p\epsilon_s, p\in(0, 1).$ Finally, as the robot starts operating in the \textit{obstacle-avoidance} mode, it should continuously verify \eqref{back_region_condition} to identify whether it has entered in the back region of the obstacle $\mathcal{O}_k$. Satisfaction of \eqref{back_region_condition}, according to \eqref{jumpset_individual_obstacle}, \eqref{avoidance_final_set} and \eqref{composite_jump_and_flow_sets}, implies that the state $(\mathbf{x}, m)$ belongs to the jump set of the \textit{obstacle-avoidance} mode.

\begin{remark}
The robot operating with the proposed hybrid control law \eqref{proposed_hybrid_controller_2}, in the $\epsilon_d-$neighbourhood of any obstacle, let us say $\mathcal{O}_k, k\in\mathbb{I}_{\mathbf{x}}$ \textit{i.e.}, $d(\mathbf{x} ,\mathcal{O}_{k}) \in[r_a, r_a + \epsilon_d]$, needs to identify the partial boundary curve $\eth\mathcal{O}_k$ only to identify whether the state $(\mathbf{x}, m)$ belongs to the jump set of the \textit{move-to-target} mode or not. Otherwise, the proposed hybrid control law \eqref{proposed_hybrid_controller_2} only requires the state $(\mathbf{x}, m)$ and the projection of the $\mathbf{x}$ component of the state on the nearest obstacle.\label{remark:information_required}.
\end{remark}

\begin{algorithm}
\caption{Sensor-based identification of the jump set.}

\begin{algorithmic}[1] 
%\STATE\sout{\textbf{Get} $\mathbf{x}$ and $m$ from Algorithm \ref{alg:sensor_based_implementation}.}
\STATE\textbf{Measure} $d(\mathbf{x}, \mathcal{O}_{\mathcal{W}})$ defined in \eqref{sensor:distance_from_obstacles}.
\IF{$m= 0$,}
\IF{$d(\mathbf{x}, \mathcal{O}_{\mathcal{W}}) \leq r_a + \epsilon_d,$}
\STATE\textbf{Identify} $\mathcal{O}_k, k\in\mathbb{I}$ using \eqref{sensor:closest_theta}, \eqref{sensor:closest_obstacle}.
\STATE\textbf{Locate} $\Pi(\mathbf{x}, \mathcal{O}_k)$ defined in \eqref{projection_using_sensor}.
\IF{$\mathbf{x}^\intercal(\mathbf{x} -\Pi(\mathbf{x}, \mathcal{O}_k)) > 0$, see \eqref{back_region_condition},}
\STATE\textbf{Identify} $\eth\mathcal{O}_k$ using \eqref{partial_boundary_of_obstacle}.
\STATE\textbf{Construct} $\Box(\mathbf{x})$ using \eqref{vertices_of_rectangle}.
\IF{$\eth\mathcal{O}_k\cap\Box(\mathbf{x}) \ne \emptyset$, see \eqref{final_condition},}
\STATE $(\mathbf{x}, m)\in\mathcal{J}_0\times\{0\}$.
\ELSE
\STATE$(\mathbf{x}, m)\notin\mathcal{J}_0\times\{0\}$.
\ENDIF
\ELSE
\STATE$(\mathbf{x}, m)\notin\mathcal{J}_0\times\{0\}$.
\ENDIF
\ELSE
\STATE$(\mathbf{x}, m)\notin\mathcal{J}_0\times\{0\}$.
\ENDIF
\ENDIF
\IF{$m \in\{-1, 1\}$,}
\IF{$d(\mathbf{x}, \mathcal{O}_{\mathcal{W}})\leq r_a + \epsilon_d$,}
\STATE\textbf{Identify} $\mathcal{O}_k, k\in\mathbb{I}$ using \eqref{sensor:closest_theta}, \eqref{sensor:closest_obstacle}.
\STATE\textbf{Locate} $\Pi(\mathbf{x}, \mathcal{O}_k)$ defined in \eqref{projection_using_sensor}.
\IF{$\mathbf{x}^\intercal(\mathbf{x} -\Pi(\mathbf{x}, \mathcal{O}_k))\leq 0$, see \eqref{back_region_condition},}
\STATE $(\mathbf{x}, m)\in\mathcal{J}_{m}\times\{m\}.$
\ENDIF
\ELSE
\STATE $(\mathbf{x}, m)\in\mathcal{J}_{m}\times\{m\}.$
\ENDIF
\ENDIF
\end{algorithmic}

\label{alg:jumpset_participation}
\end{algorithm}

\section{Simulation Results}\label{section:simulation}

In this section, we present simulation results for a robot navigating in \textit{a priori} unknown environments. In both simulations discussed below, the robot is assumed to be equipped with a range-bearing sensor (\textit{e.g.} LiDAR) with angular scanning range of $360^{\circ}$ and sensing radius $R_s = 1.5m$. The angular resolution of the sensor is chosen to be $0.5^{\circ}.$ The simulations are performed in MATLAB 2020a.

In the first simulation scenario, we consider an environment with 6 convex obstacles, as shown in Fig. \ref{result_robot_trajectory}. The robot with radius $r = 0.3m$ is initialized at $[-22, 0]^\intercal$. The target is located at the origin. The minimum safety distance $r_s = 0.1m$. We set $\epsilon_d = 0.35m$ and choose the location of the vector $\mathbf{s}$, used in \eqref{direction_decision}, to be $[0, -1]^\intercal$. We set the gain value $\gamma$, used in \eqref{proposed_hybrid_controller_2}, to be 0.2. Fig. \ref{result_robot_trajectory} illustrates the motion of the robot towards the target location while avoiding obstacles. Whenever the robot enters in the $\epsilon_d-$neighbourhood of any obstacle, it identifies the points on the boundary of that respective obstacle, which are in the line-of-sight of the center of the robot, to investigate the collision possibilities while operating in the \textit{move-to-target} mode, as shown by the green curve in Fig. \ref{result_robot_trajectory} for obstacle $\mathcal{O}_2$ and $\mathcal{O}_4$. Then by verifying the condition in \eqref{final_condition}, the robot chooses either to stay in the \textit{move-to-target} mode or switch to the \textit{obstacle-avoidance} mode. When the robot operates in the \textit{obstacle-avoidance} mode, it only needs to identify the closest point on the nearest obstacle, as depicted with the pink dot in Fig. \ref{result_robot_trajectory}, which is used in the rotational control vector $\mathbf{v}(\mathbf{x}, m)$ \eqref{definition_of_vim}. 
%Since the robot is initialized in the interior of the obstacle-free workspace, the control input is continuous, as shown in Fig. \ref{result_velocity_vector}. 
The complete simulation video can be found at \url{https://youtu.be/llRrbGfvGBA}.
\begin{figure}[H]
    \centering
    \includegraphics[width = 1\linewidth]{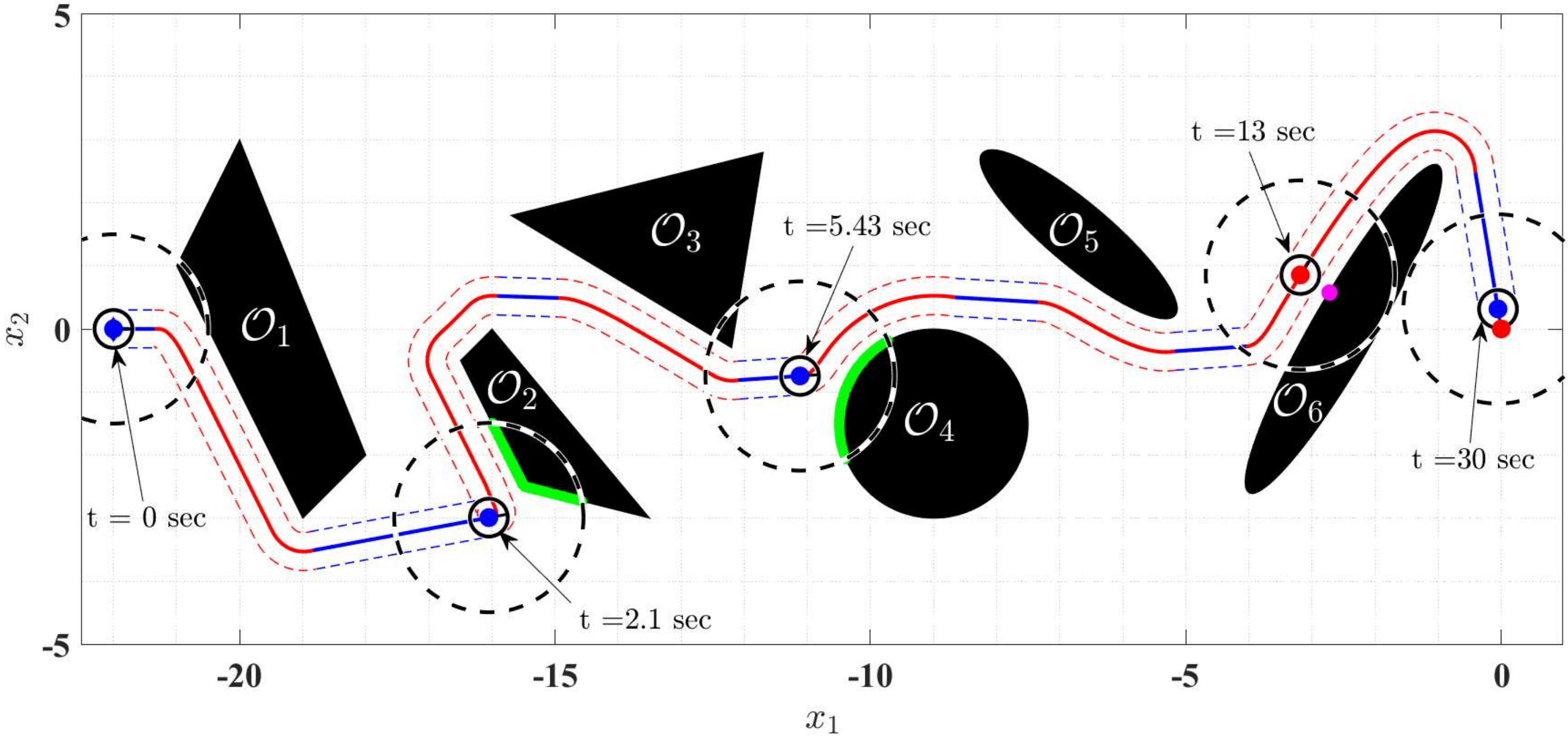}
    \caption{ Robot safely navigating towards the target (red dot), along with three intermediate locations along the path.}
    \label{result_robot_trajectory}
\end{figure}

In the second simulation scenario, and as shown in  Fig. \ref{result_final}, we consider an environment consisting of convex obstacles with smooth and non-smooth boundaries, and apply the proposed hybrid controller \eqref{proposed_hybrid_controller_2} for a point robot navigation initialized at 14 different locations in the obstacle-free workspace. %The point robot is assumed to be equipped with a range-bearing sensor arrangement with sensing radius $R_s = 1.5$. 
The safety distance $r_s = 0.15$, and the value of the variable $\epsilon_d = 0.5$. We set the gain value $\gamma$, used in \eqref{proposed_hybrid_controller_2}, to be 0.2. For each initialization, the vector $\mathbf{s}$, used in \eqref{direction_decision}, is selected such that the initial location of the robot belongs to the half-line $\mathcal{L}_{\geq}(\mathbf{0}, \nu_1(\mathbf{s}))$. It can be noticed that the point robot intersects with the half-line $\mathcal{L}_{>}(\mathbf{0}, \nu_1(\mathbf{s}))$ and with each consecutive intersections it moves closer to the target location at the origin while ensuring obstacle avoidance, as shown in Fig. \ref{result_final}. The complete simulation video can be found at \url{https://youtu.be/_AwDqNY06rU}.

Notice that when the robot operates in the \textit{move-to-target} mode, in addition to its own location and the target location, it only requires its distance from the nearby obstacles. When it operates in the \textit{obstacle-avoidance} mode, it further requires the closest point on the obstacle-occupied workspace $\mathcal{O}_{\mathcal{W}}$. The robot needs to identify all the points on the closest obstacle which are in the line-of-sight of the center of the robot only when it operates in the \textit{move-to-target} mode inside the $\epsilon_d-$neighbourhood of any obstacle so that it can evaluate the possibility of collision and switch to the \textit{obstacle-avoidance} mode, as stated in Remark \ref{remark:information_required}.

\begin{figure}
    \centering
    \includegraphics[width = 0.9\linewidth]{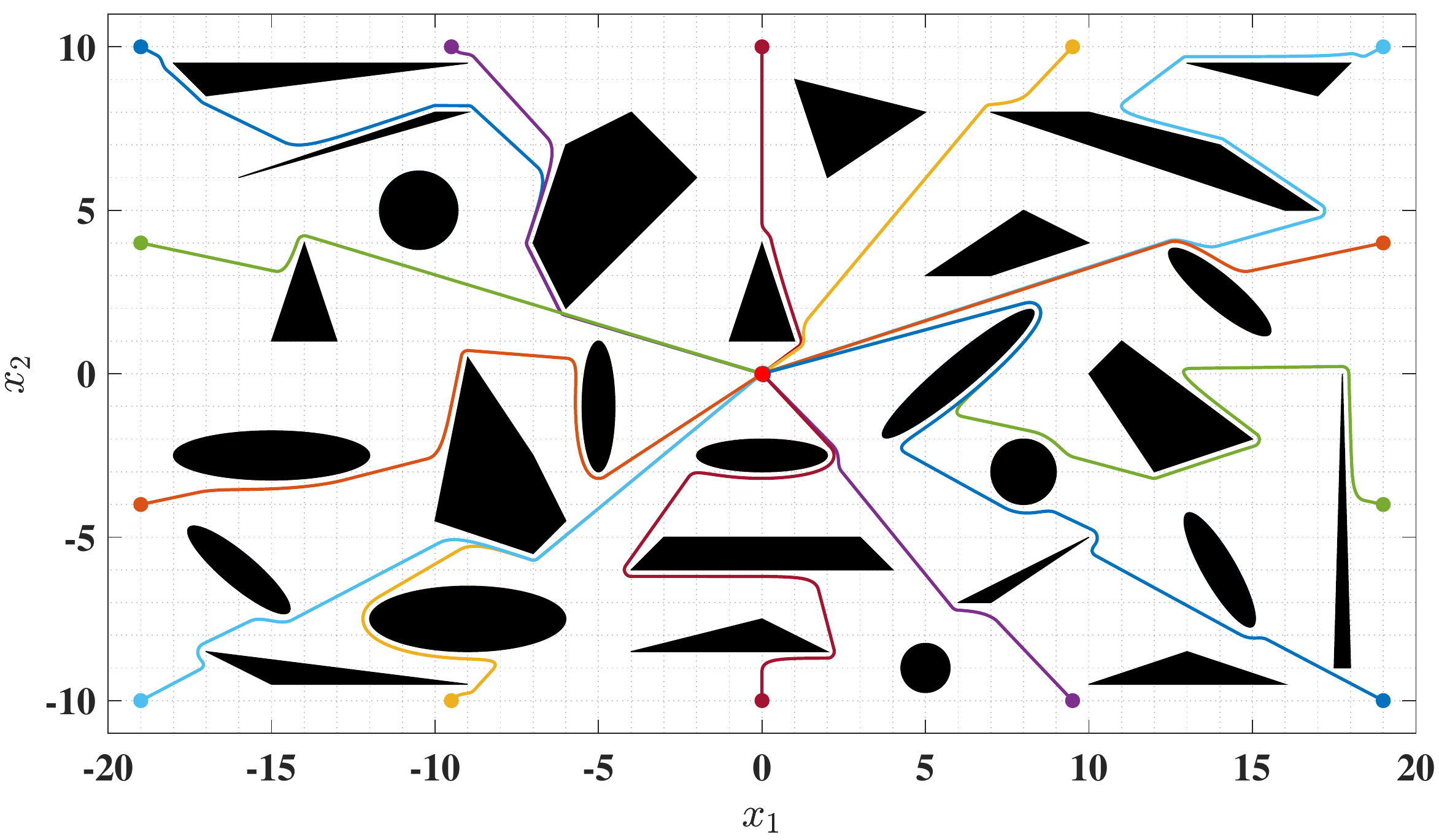}
    \caption{Robot trajectories starting from different locations.}
    \label{result_final}
\end{figure}

The proposed hybrid feedback algorithm has been designed with some robustness to noise properties through the additional safety layers around the obstacles and the overlaps between the flow and jump sets. For example, for a range-bearing sensor with measurement error of $\pm\delta$, one should ensure that the separation between the $\epsilon_d-$safety layer and the $\epsilon_s-$safety layer should be greater than $\delta$. Similarly, to ensure collision free motion while operating in the \textit{obstacle-avoidance} mode, the $\epsilon-$safety layer should be larger than $\delta$, see Fig. \ref{Partitions_of_local_neighbourhood} for the construction of these layers. 

The simulation results given in Fig. \ref{fig:error_trajectory} show the effectiveness of our proposed algorithm implemented with noisy sensor data. We consider an environment similar to the one shown in Fig. \ref{result_robot_trajectory}. The robot radius is $0.3m$ and the minimum safety distance is $r_s = 0.1m$. We set $\epsilon_d = 0.35m$ and choose the gain $\gamma = 0.2$. The range measurements are affected by a Gaussian noise of $0$ mean and $50mm$ standard deviation. Figure \ref{fig:error_trajectory} shows the trajectory of the robot, initialized at $[-22, 0]^{\intercal}$, converging to the target location at the origin. Figure \ref{fig:error_distance} indicates that even in the presence of measurement noise, the robot maintains a safe distance from the obstacle-occupied workspace.

\begin{figure}[h!]
    \centering
    \includegraphics[width = 1\linewidth]{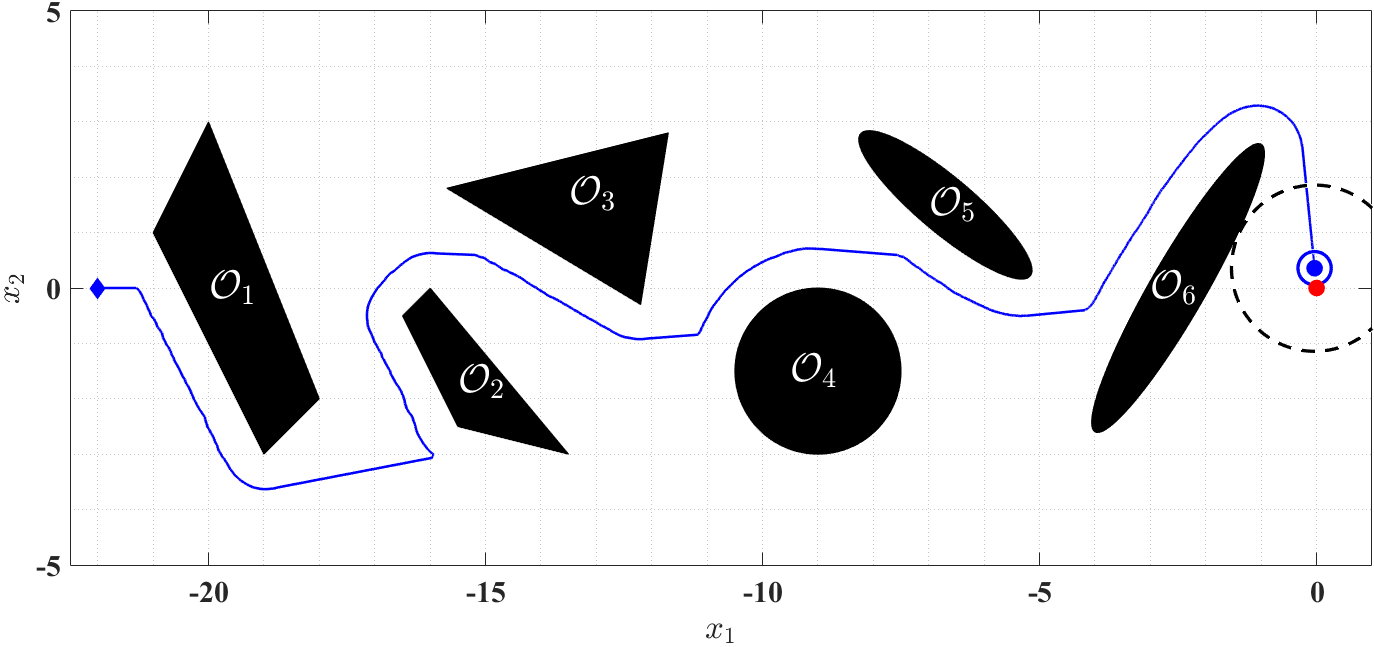}
    \caption{The figure shows the trajectory of a robot, which is equipped with a range-bearing sensor, converging safely towards the target location at the origin. The sensor measurements are affected by the Gaussian noise of $0$ mean and $50mm$ standard deviation.}
    \label{fig:error_trajectory}
\end{figure}
\begin{figure}[h!]
\centering
    \includegraphics[width = 1\linewidth]{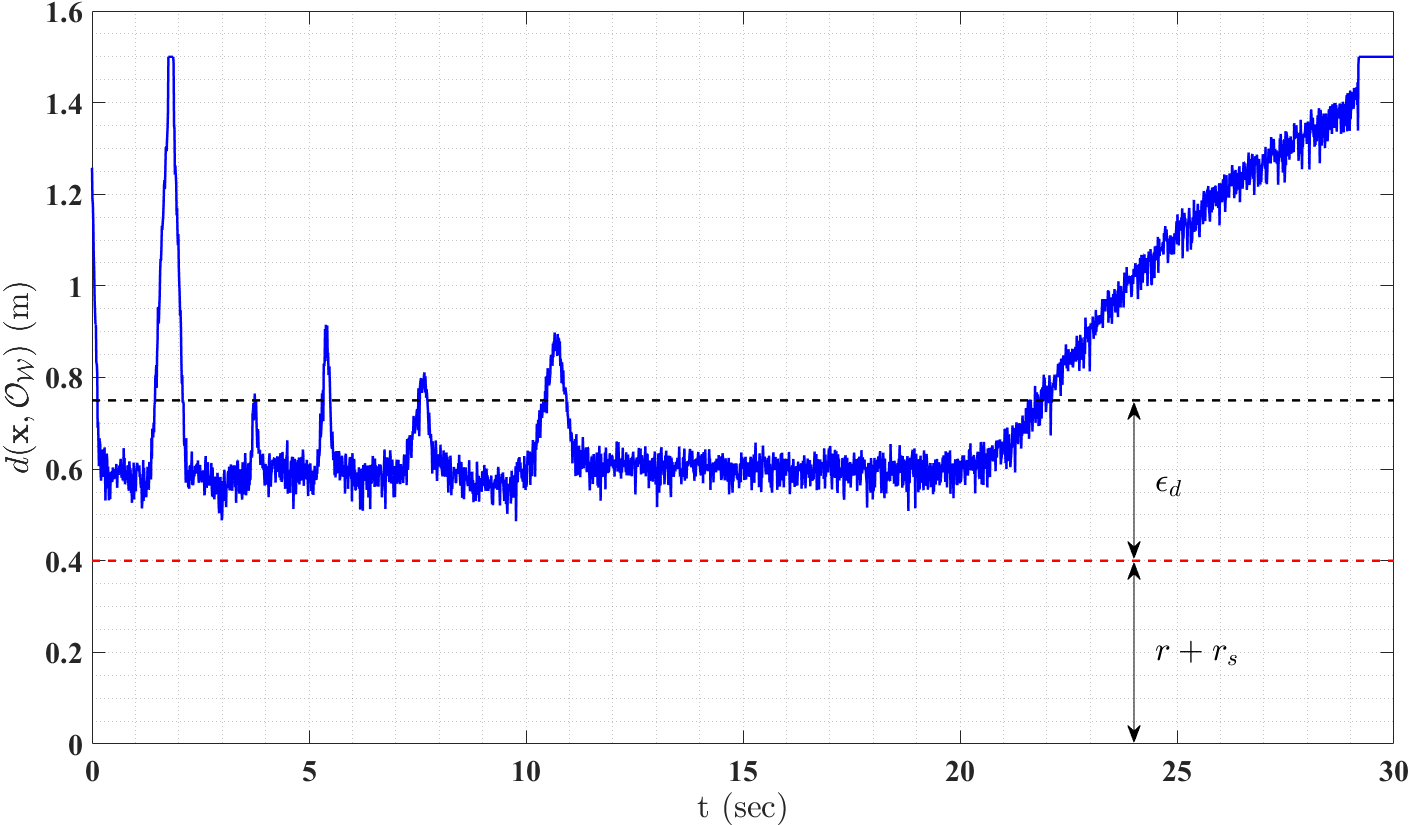}
    \caption{The figure depicts the distance of the center of the robot from the boundary of the obstacle occupied workspace as robot converges towards the target location.}
    \label{fig:error_distance}
\end{figure}

%%% Comparison with separating hyperplane approach.

We next provide a comparison with the separating hyperplane approach recently developed in \cite{arslan2019sensor}. Similar to our approach, this approach can be implemented in \textit{a priori} unknown environments using the information obtained via a range-bearing sensor mounted on the robot. Contrary to our approach, this approach only works for convex obstacles that satisfy the curvature condition \cite[Assumption 2]{arslan2019sensor}. When this assumption is not satisfied, the separating hyperplane approach suffers from the presence of local minima. Some of the differences between our approach and the separating hyperplane approach are given below:

In the separating hyperplane-based navigation approach the robot has to construct a local obstacle-free region by first identifying the lines joining the closest point on each obstacle within the sensing range with its location and then by constructing the hyperplanes perpendicular to these lines that separate the robot's body from the obstacles. Then at each control update step, it has to locate the projection of the target location onto the boundary of the local obstacle-free region. Compared to this approach, in our proposed sensor-based hybrid feedback approach, the robot only requires the closest point on the nearest obstacle when it operates in the \textit{obstacle-avoidance} mode. It only needs to identify all the points on the closest obstacle, which are in the line-of-sight of the center of the robot only when it operates in the \textit{move-to-target} mode inside the $\epsilon_d-$neighbourhood of that obstacle, to evaluate the possibility of collision and switch to the \textit{obstacle-avoidance} mode, if necessary, as stated in Remark \ref{remark:information_required}.
      
When the environment consists of obstacles that do not satisfy the obstacle curvature condition \cite[Assumption 2]{arslan2019sensor}, the separating hyperplane approach suffers from the presence of undesired local minima, as can be seen in Fig. \ref{fig:comparison_not_working_condition}(left). On the other hand, our proposed hybrid feedback approach always guarantees convergence to the target location regardless of the shape and size of the convex obstacles, see in Fig. \ref{fig:comparison_not_working_condition}(right).

\begin{figure}[h]
    \centering
    \includegraphics[width = 0.49\linewidth]{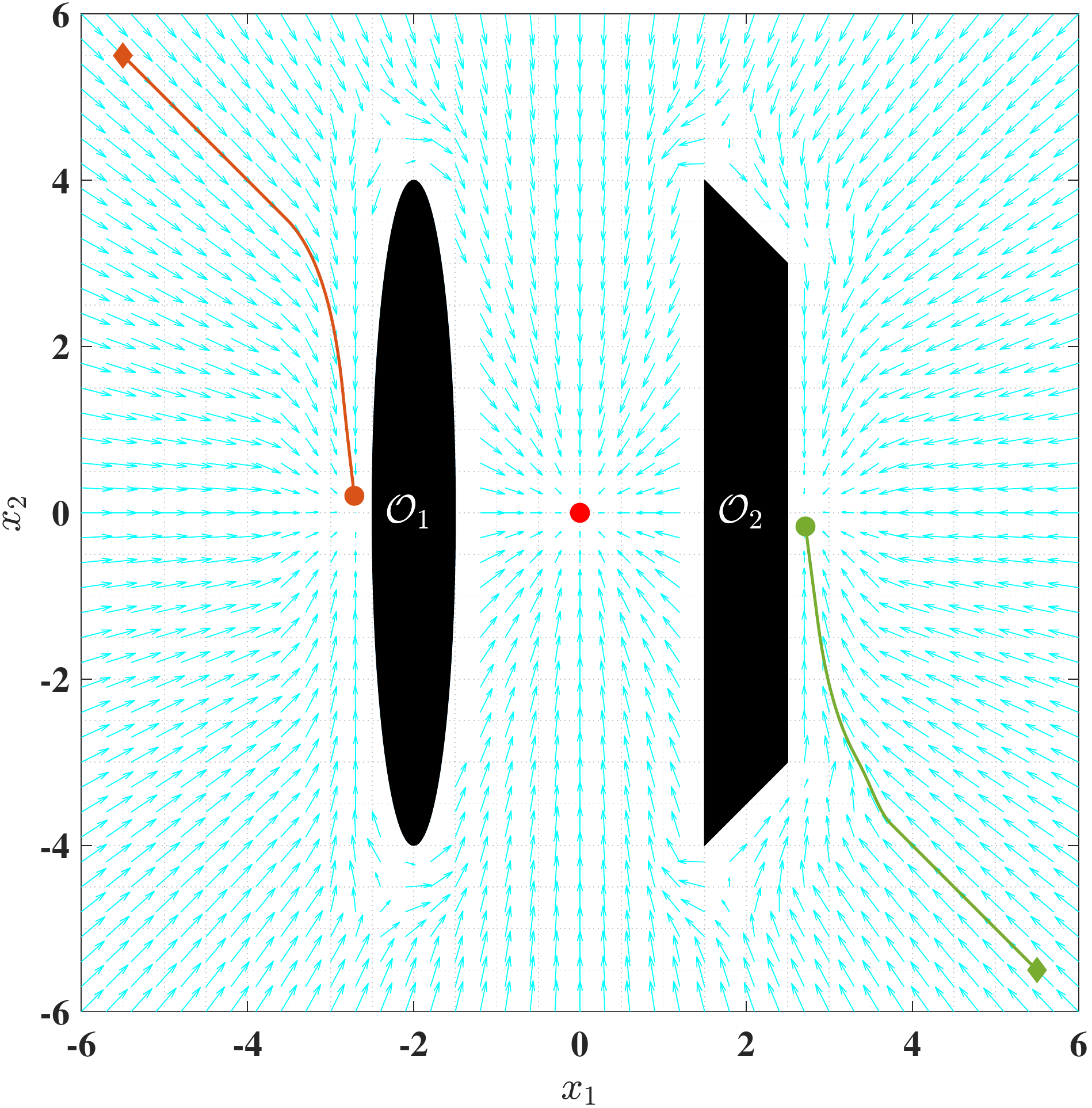}
    \includegraphics[width = 0.49\linewidth]{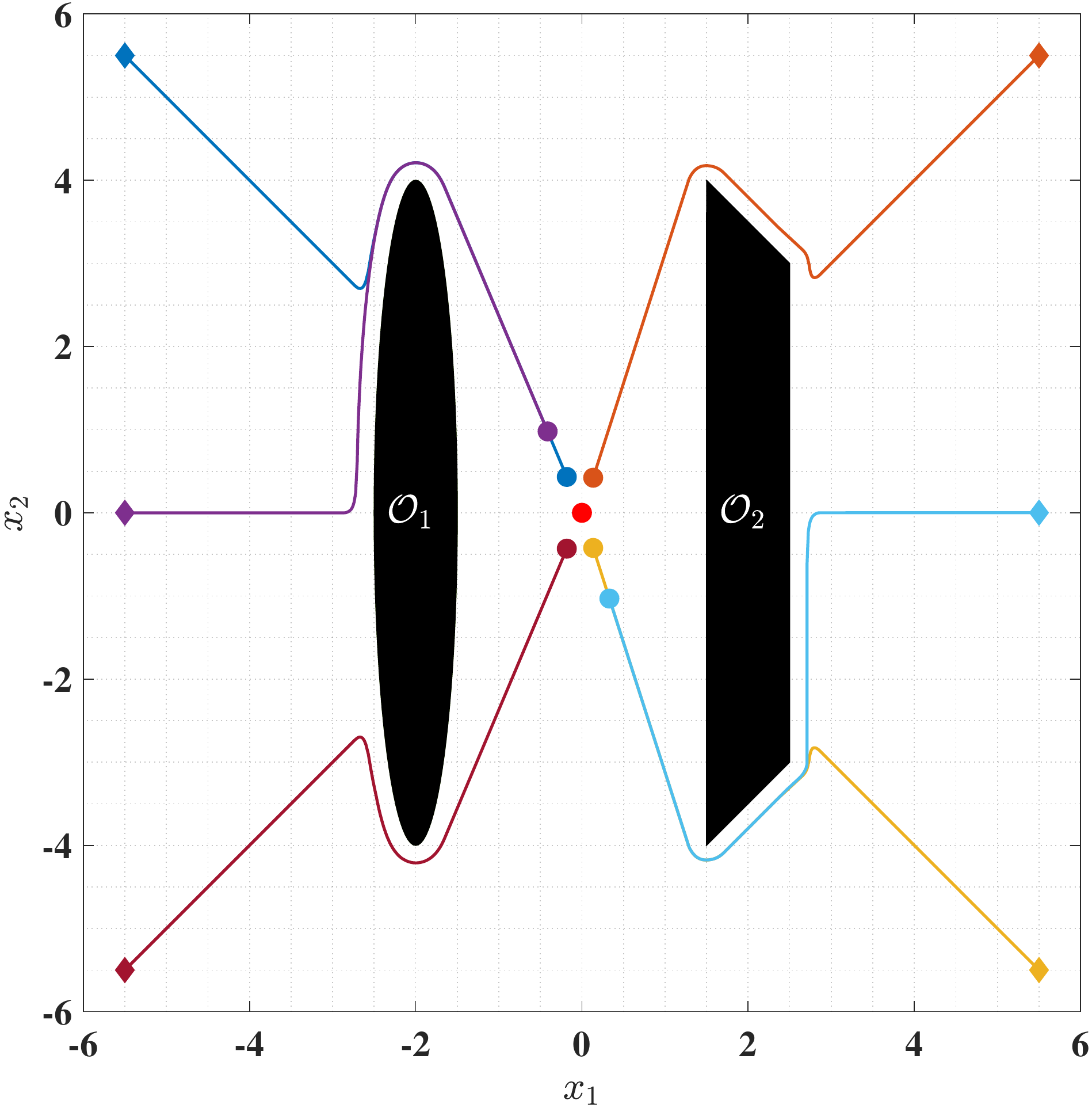}
\caption{Trajectories of a point robot obtained using two different approaches \textit{i.e.}, the separating hyperplane approach \cite{arslan2019sensor} and our proposed hybrid navigation scheme. The left figure represents the robot trajectories obtained using the sensor-based separating hyperplane approach \cite{arslan2017smooth} and illustrates the presence of undesired local minima. The right figure presents the robot trajectories converging towards the target location, which are obtained using the proposed hybrid navigation scheme.}
\label{fig:comparison_not_working_condition}
\end{figure}

If the workspace satisfies the obstacle curvature condition \cite[Assumption 2]{arslan2019sensor}, then for almost all initial locations, the robot trajectories obtained using the separating hyperplane approach converge asymptotically to the target location, while strictly decreasing the Euclidean distance from the robot to the target location \cite[Theorem 3]{arslan2019sensor}, see Fig. \ref{fig:comparison_working_condition}(left). This feature is advantageous compared to our approach. In fact, when the robot operates using our approach, it may travel away from the target when operating in the \textit{obstacle-avoidance} mode, as seen in Fig. \ref{fig:comparison_working_condition}(right) for the robot trajectory initialized at $[-7.5, -10.5]^T$, around the obstacle $\mathcal{O}_1$.

\begin{figure}[h]
    \centering
    \includegraphics[width = 0.49\linewidth]{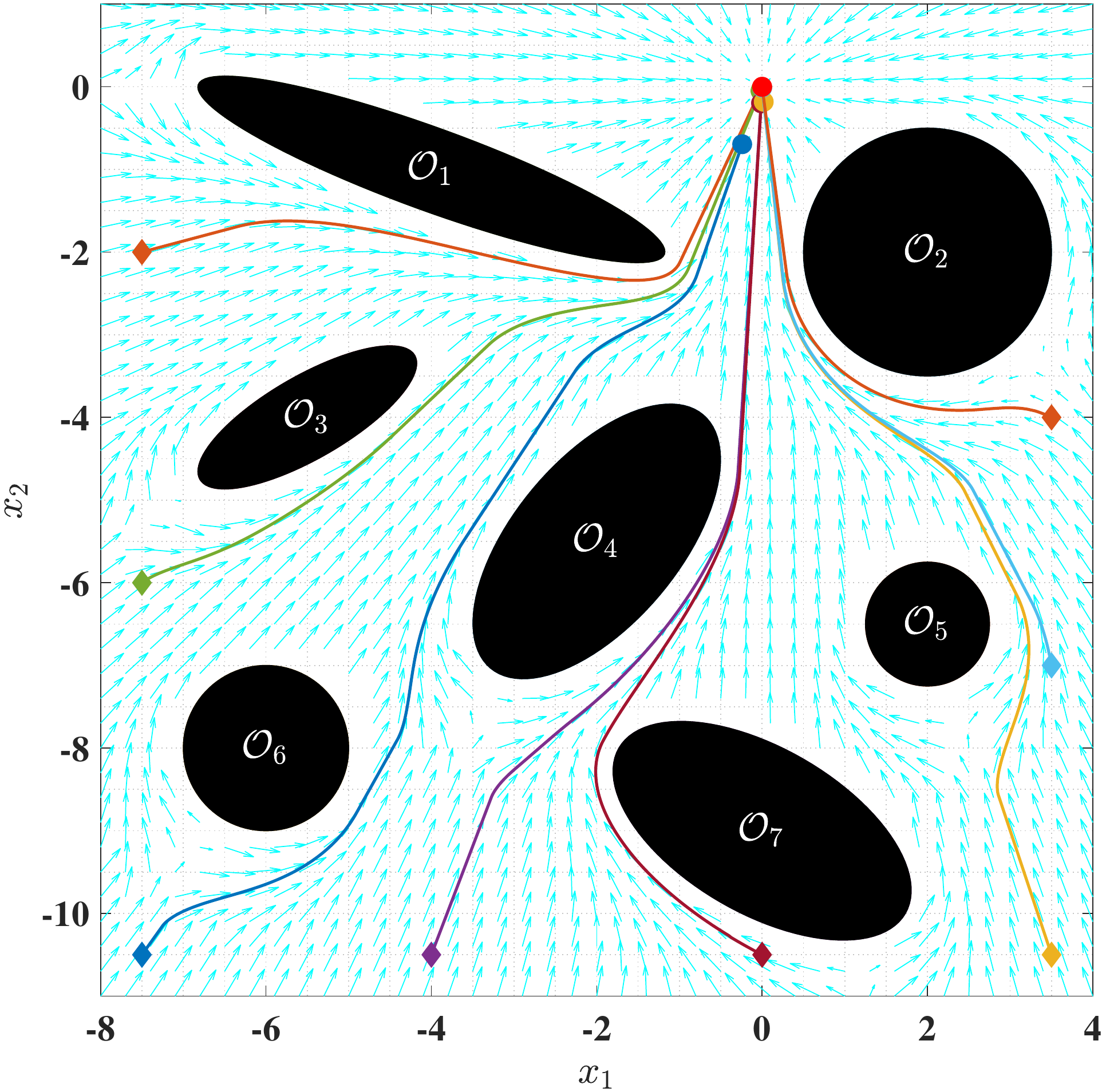}
    \includegraphics[width = 0.49\linewidth]{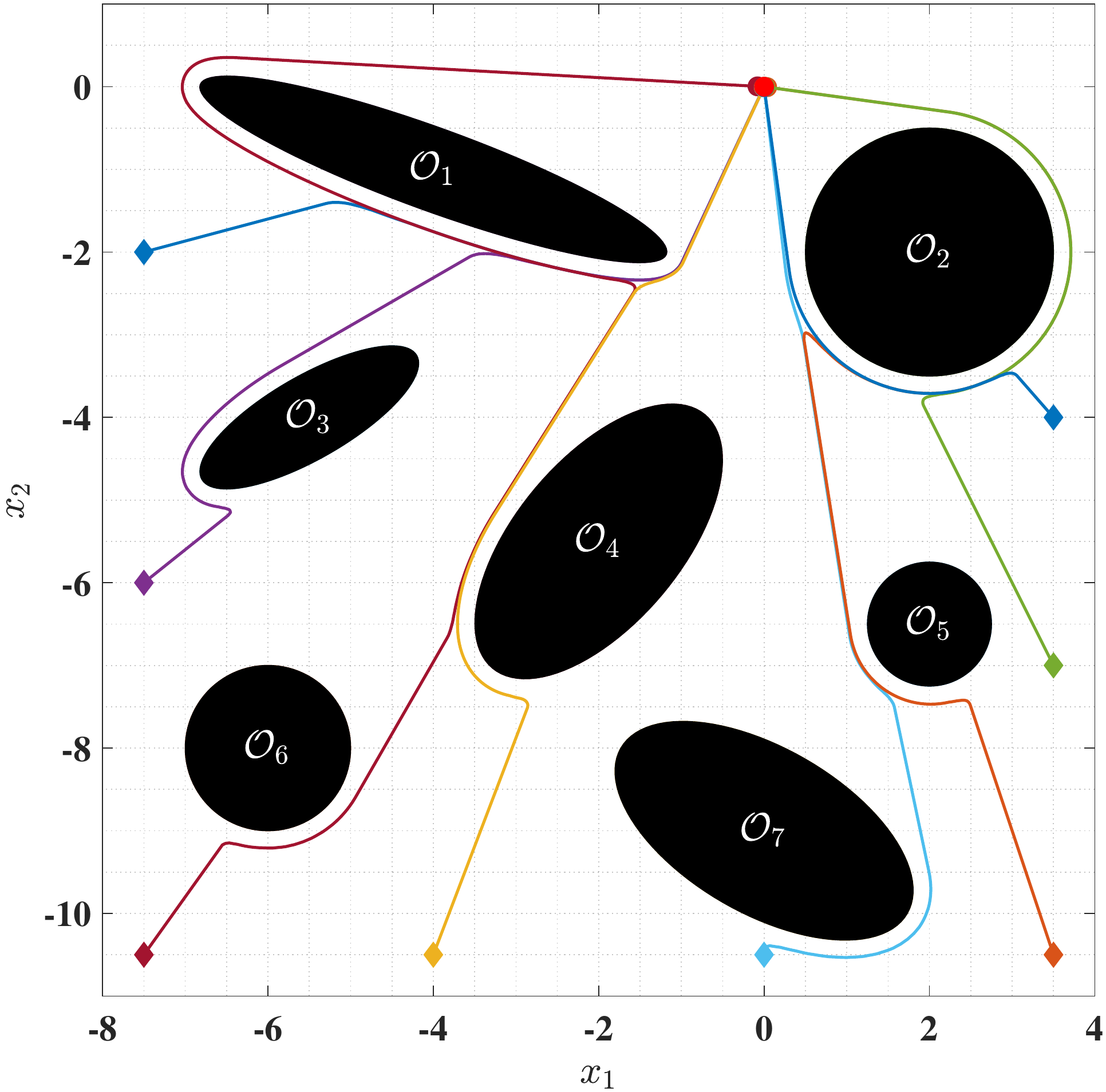}
\caption{Trajectories of a point with 8 different initial locations that are marked by the diamond symbols, converging towards the target location at the origin. The left figure illustrates the robot trajectories obtained using the hyperplane approach \cite{arslan2019sensor}. The right figure presents the robot trajectories obtained using our proposed sensor-based hybrid feedback approach.}
\label{fig:comparison_working_condition}
\end{figure}

\section{Conclusions}\label{sec:conclusion}
We proposed a hybrid feedback controller for safe autonomous navigation in two-dimensional environments with arbitrary convex obstacles. The obstacles can have non-smooth boundaries and large sizes, and can be placed arbitrarily provided that some mild disjointedness requirements are satisfied as per Assumption 1. The proposed hybrid controller guarantees global asymptotic stability, which with the practical adjustments provided in Remark \ref{dwell_time_remark}, can easily be applied in the global sense. 
The mode switching strategy along with the geometric construction of the flow and jump sets ensure the continuity of the control input, which is one of the interesting practical features of the proposed hybrid control scheme. Since the obstacle avoidance part of the control law depends on the projection of the center of the robot on the nearest obstacle, the proposed hybrid control scheme can be applied in \textit{a priori} unknown environments, as discussed in Section \ref{sensor-based-implementation}.
As it can be seen from Fig. \ref{result_final}, the trajectories generated by our algorithm do not necessarily correspond to the shortest paths from the initial configuration to the final one. Designing an update law for the vector $\mathbf{s}$, used in \eqref{direction_decision}, relying on the available local information, which might help in generating optimal trajectories, would be an interesting extension to the present work. Other interesting extensions consist in considering robots with second-order dynamics and three-dimensional environments with non-convex obstacles. 

\begin{appendix}
\subsection{Hybrid basic conditions}
%We show that the hybrid closed-loop system \eqref{hybrid_closed_loop_system} with the data $(\mathcal{F}, \mathbf{F}, \mathcal{J}, \mathbf{J})$ is well-posed, by verifying that the hybrid basic conditions of \cite[Assumption 6.5]{goedel2012hybrid} are satisfied. 
\begin{lemma}
The hybrid closed-loop system \eqref{hybrid_closed_loop_system} with the data $(\mathcal{F}, \mathbf{F}, \mathcal{J}, \mathbf{J})$ satisfies the hybrid basic conditions stated in \cite[Assumption 6.5]{goedel2012hybrid}.\label{hybrid_basic_conditions}
\end{lemma}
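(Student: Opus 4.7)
The plan is to verify directly the three items of \cite[Assumption 6.5]{goedel2012hybrid}: (i) $\mathcal{F}$ and $\mathcal{J}$ are closed; (ii) the flow map $\mathbf{F}$ is outer semicontinuous, locally bounded on $\mathcal{F}$, nonempty and convex-valued on $\mathcal{F}$; (iii) the jump map $\mathbf{J}$ is outer semicontinuous and locally bounded on $\mathcal{J}$, nonempty-valued on $\mathcal{J}$. Since $\mathbb{M}=\{-1,0,1\}$ and $\mathbb{I}=\{1,\ldots,b\}$ are finite, they are trivially closed and the product structure in \eqref{overall_flow_set_jump_set} reduces the analysis to establishing closedness of each slice $\mathcal{F}_m,\mathcal{J}_m \subset \mathcal{W}_{r_a}$ individually.

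For item (i), I would inspect the building blocks one by one. The dilated obstacles $\mathcal{D}_{r_a}(\mathcal{O}_i)$ and $\mathcal{D}_{r_a+\epsilon_s}(\mathcal{O}_i)$ are closed and convex (Minkowski sum of two closed convex sets), and the compact tubular neighbourhoods $\mathcal{T}(\mathcal{O}_i)$ are closed by construction. The back region $\mathcal{R}_b^i$ in \eqref{back_region} is defined by the non-strict inequality $\mathbf{q}^{\intercal}(\mathbf{q}-\Pi(\mathbf{q},\mathcal{O}_i)) \leq 0$; since $\Pi(\cdot,\mathcal{O}_i)$ is continuous in the complement of $\mathcal{O}_i$ (single-valued projection on a closed convex set), $\mathcal{R}_b^i$ is closed. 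The side regions $\mathcal{R}_{\pm 1}^i$ are defined by non-strict angle conditions on the same continuous map and are closed. The set $\mathcal{J}_0^i = \mathcal{D}_{r_a+\epsilon_s}(\mathcal{O}_i)\cap \bar{\mathcal{R}}_f^i$ is a finite intersection of closed sets, and $\mathcal{F}_0^i$ is a finite union of closed sets. For the avoidance mode, $\mathcal{F}_m^i$ includes the explicit closure $\overline{\mathcal{ER}_f^i\setminus \mathcal{R}_b^i}$ and $\mathcal{J}_m^i$ is again a finite union of closed sets. Closedness of $\mathcal{F}_m,\mathcal{J}_m$ then follows from \eqref{stabilization_mode_jumpflow_set_final}, \eqref{avoidance_final_set} by taking finite unions/intersections of the slices. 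This is the most tedious step and is where the main bookkeeping lies.

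For item (ii), $\mathbf{F}(\xi)=(\mathbf{u}(\xi),0,0)$ is single-valued, so convexity is automatic and outer semicontinuity reduces to continuity. Continuity of $\mathbf{u}$ on $\mathcal{F}$ follows from (a) continuity of $\kappa$ through the piecewise-affine transition $\eta$ in \eqref{beta_function_definition}, and (b) continuity of $\mathbf{v}(\mathbf{x},m)$ on $\mathcal{F}$. For (b), I would use that on $\mathcal{F}$ one has $\mathbf{x}\in\mathcal{W}_{r_a}$, so $d(\mathbf{x},\mathcal{O}_{\mathcal{W}})\geq r_a>0$ and $\mathbf{x}\neq \Pi(\mathbf{x},\mathcal{O}_{\mathcal{W}})$, ruling out the singularity in \eqref{definition_of_vim}; moreover, whenever $\kappa(\mathbf{x},m)<1$ (i.e. the rotational term is active) the robot lies within the $\epsilon_s$-neighbourhood of a unique obstacle by Assumption~\ref{assumption:robot_pass_through} and the choice of $\bar{r}_s$, so $\Pi(\mathbf{x},\mathcal{O}_{\mathcal{W}})=\Pi(\mathbf{x},\mathcal{O}_k)$ is continuous in $\mathbf{x}$ (projection onto a closed convex set). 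Local boundedness follows from continuity on the closed set $\mathcal{F}\subset \mathcal{W}_{r_a}\times\mathbb{M}\times\mathbb{I}$.

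For item (iii), $\mathbf{J}(\xi)=(\mathbf{x},\mathbf{L}(\xi),\mathbf{N}(\xi))$ takes values in $\mathcal{W}_{r_a}\times\mathbb{M}\times\mathbb{I}$, so local boundedness is immediate (the $m,k$ components live in finite sets and the $\mathbf{x}$ component is the identity). Nonemptyness is clear from \eqref{update_law_for_m}--\eqref{direction_decision} and \eqref{update_law_for_k}. Outer semicontinuity is equivalent to the graph of $\mathbf{J}$ being closed; the only non-trivial point is the set-valuedness of $\mathbf{M}(\mathbf{x})$ at $\mathbf{x}^{\intercal}\mathbf{s}=0$, where $\mathbf{M}(\mathbf{x})=\{-1,1\}$. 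This deliberate choice ensures that the graph of $\mathbf{L}(\cdot,0)$ contains both limit values as $\mathbf{x}^{\intercal}\mathbf{s}\to 0^{\pm}$, so its graph is closed. For $\mathbf{N}$, Assumption~\ref{assumption:robot_pass_through} together with $\epsilon_s<\bar{r}_s - r_s$ implies that the sets $\mathcal{J}_0^i$ are pairwise disjoint, so the assignment $\xi\mapsto k'$ in \eqref{update_law_for_k} is single-valued and the jump onto $\mathcal{J}_0^{k'}$ makes the graph closed in $\mathcal{J}$. Combining (i)--(iii) yields the claim.

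\end{appendix}

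\end{document}
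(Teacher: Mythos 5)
Your proposal is correct and follows essentially the same route as the paper's proof: closedness of $\mathcal{F}$ and $\mathcal{J}$, continuity of the single-valued flow map via the fact that the rotational term is only active in the $\epsilon_s$-neighbourhood of a single convex obstacle (where the projection is continuous), and outer semicontinuity of $\mathbf{J}$ via the deliberate set-valuedness of $\mathbf{M}$ on $\mathcal{P}(\mathbf{0},\mathbf{s})$. You supply more bookkeeping on the closedness of the individual regions than the paper does (which simply asserts it); the only slight imprecision is calling the side regions $\mathcal{R}_{\pm 1}^i$ closed on their own — they may miss limit points on the boundary shared with $\mathcal{R}_b^i$ — but this is harmless since the unions appearing in \eqref{flowset_individual_obstacle} and \eqref{avoidance_flow_set_individual} restore those points.
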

\begin{proof}
The flow set $\mathcal{F}$ and the jump set $\mathcal{J}$, defined in \eqref{composite_jump_and_flow_sets}, \eqref{overall_flow_set_jump_set}, are closed subsets of $\mathbb{R}^2\times\mathbb{M}\times\mathbb{I}.$ The flow map $\mathbf{F}$, given in \eqref{hybrid_closed_loop_system}, is continuous on $\mathcal{F}_0\times\{0\}\times\mathbb{I}$. In \eqref{hybrid_closed_loop_system}, due to the structure of the scalar function $\kappa(\xi)$ in \eqref{kappa_function_formulation}-\eqref{beta_function_definition}, the rotational vector $\mathbf{v}(\xi)$, defined in \eqref{definition_of_vim}, is active only within the $\epsilon_s-$neighbourhood of any obstacle, which according to Assumption \ref{assumption:robot_pass_through}, is valid for at most one obstacle at a time, namely $\mathcal{O}_k$, $k\in\mathbb{I}$. Also, as $\mathcal{O}_k$ is convex, the projection of $\mathbf{x}$ on $\mathcal{O}_k$ \textit{i.e.}, $\Pi(\mathbf{x}, \mathcal{O}_k)$ is continuous with respect to $\mathbf{x}$. As a result, $\mathbf{F}$ is continuous on $\mathcal{F}_m\times\{m\}\times\mathbb{I}, \;m\in\{-1, 1\}$. Hence $\mathbf{F}$ is continuous on $\mathcal{F}$. The jump map $\mathbf{J}$, defined in \eqref{hybrid_closed_loop_system}, is single-valued on $\mathcal{J}_m\times\{m\}\times\mathbb{I},\;m\in\{-1, 1\}$. Also, $\mathbf{J}$ has a closed graph relative to $\mathcal{J}_0\times\{0\}\times\mathbb{I}$ as $\mathbf{M}$, defined in \eqref{direction_decision}, is allowed to be set-valued whenever $\mathbf{x}\in\mathcal{P}(\mathbf{0}, \mathbf{s})$. Hence, according to \cite[Lemma 5.10]{goedel2012hybrid}, $\mathbf{J}$ is outer semi-continuous and locally bounded relative to $\mathcal{J}$.
\end{proof}
\subsection{Proof of Lemma \ref{lemma:normal_to_dilated_obstacle}}\label{proof:normal_lemma}

We consider a closed convex set $\mathcal{V}\subset\mathbb{R}^n$ and $\mathbf{q}\in\mathbb{R}^n\backslash\mathcal{V}.$ According to \cite[Section 8.1]{boyd2004convex} the projection of $\mathbf{q}$ on the closed convex set $\mathcal{D}_r(\mathcal{V})$, $r\in[0, d(\mathbf{q}, \mathcal{V})]$, \cite[Theorem 3.1]{rockafellar1970convex} in the sense of the Euclidean norm \textit{i.e.}, $\Pi(\mathbf{q}, \mathcal{D}_{r}(\mathcal{V}))$ is unique. Since the set $\mathcal{B}_{d(\mathbf{q}, \mathcal{D}_r(\mathcal{V}))}(\mathbf{q})$ just touches the set $\mathcal{D}_r(\mathcal{V})$ at $\Pi(\mathbf{q}, \mathcal{D}_r(\mathcal{V}))$, the vector $\mathbf{q} - \Pi(\mathbf{q}, \mathcal{D}_r(\mathcal{V}))$ is normal to the set $\mathcal{D}_r(\mathcal{V})$ at $\Pi(\mathbf{q}, \mathcal{D}_r(\mathcal{V})).$ Hence, according to \cite[Section 2.5.2]{boyd2004convex}, the hyperplane $\mathcal{P}(\Pi(\mathbf{q}, \mathcal{D}_r(\mathcal{V})), \mathbf{q} - \Pi(\mathbf{q}, \mathcal{D}_r(\mathcal{V})))$ is a supporting hyperplane to the set $\mathcal{D}_{r}(\mathcal{V})$ at $\Pi(\mathbf{q}, \mathcal{D}_{r}(\mathcal{V}))$.

If we show that the points $\mathbf{q}, \Pi(\mathbf{q}, \mathcal{V})$ and $\Pi(\mathbf{q}, \mathcal{D}_r(\mathcal{V})), r\in(0, d(\mathbf{q}, \mathcal{V}))$, are collinear, then the proof is complete. We next consider $\mathcal{B}_{r}(\Pi(\mathbf{q}, \mathcal{V}))$ with $r\in[0, d(\mathbf{q}, \mathcal{V})]$. Let $\mathbf{p}$ be the projection of $\mathbf{q}$ on the set $\mathcal{B}_{r}(\Pi(\mathbf{q}, \mathcal{V}))$ \textit{i.e.}, $\mathbf{p} = \Pi(\mathbf{q}, \mathcal{B}_{r}(\Pi(\mathbf{q}, \mathcal{V}))).$ It is straightforward to notice that the point $\mathbf{p}\in\mathcal{L}_s(\mathbf{q}, \Pi(\mathbf{q}, \mathcal{V})).$ Hence $\mathbf{p}\in\partial\mathcal{D}_{r}(\mathcal{V})$, where the set $\mathcal{D}_{r}(\mathcal{V})$ is convex according to \cite[Theorem 3.1]{rockafellar1970convex}. Moreover, $\mathbf{p} = \Pi(\mathbf{q}, \mathcal{D}_{r}(\mathcal{V}))$, otherwise $\exists \mathbf{p}_1\in \big(\mathcal{L}_{s}(\mathbf{p}, \Pi(\mathbf{q}, \mathcal{D}_{r}(\mathcal{V})))\big)^{\circ}\subset\mathcal{D}_{r}(\mathcal{V})$ such that $\mathbf{p}_1\in\big(\mathcal{B}_{\norm{\mathbf{q} - \Pi(\mathbf{q}, \mathcal{D}_r(\mathcal{V}))}}(\mathbf{q})\big)^{\circ}$ which is a contradiction. Hence, the points $\mathbf{q}, \Pi(\mathbf{q}, \mathcal{D}_r(\mathcal{V}))$ with $r\in(0, d(\mathbf{q}, \mathcal{V}))$, and $\Pi(\mathbf{q}, \mathcal{V})$ are collinear.

\subsection{Proof of Lemma \ref{forward_invariance_theorem_1}}\label{proof:lemma:normal_to_dilated_obstacle}
First, we prove that the union of the flow and jump sets covers exactly the obstacle-free state space $\mathcal{K}$. Inspired by \cite[Appendix 11]{berkane2021arxiv}, for all $i\in\mathbb{I}$ and $m\in\{-1, 1\}$, the satisfaction of the following equation:
\begin{equation}
    \begin{aligned}
        \mathcal{F}_0\cup\mathcal{J}_0 = \mathcal{F}_{m}^i\cup\mathcal{J}_m^i = \mathcal{W}_{r_a},\label{cover_to_be_proved}
    \end{aligned}
\end{equation}
along with \eqref{composite_jump_and_flow_sets} and \eqref{overall_flow_set_jump_set} implies $\mathcal{F}\cup\mathcal{J} = \mathcal{W}_{r_a}\times\mathbb{M}\times\mathbb{I}=:\mathcal{K}.$ 
Next we prove \eqref{cover_to_be_proved}. It is clear that
\begin{align}
    \mathcal{F}_0 \cup \mathcal{J}_0&\stackrel{\eqref{stabilization_mode_jumpflow_set_final}}{=} \left(\bigcap_{i\in\mathbb{I}}\mathcal{F}_0^i\right)\cup\Bigg(\bigcup_{i\in\mathbb{I}}\mathcal{J}_0^i\Bigg)\nonumber\\
    &=\bigcap_{i\in\mathbb{I}}\Bigg[\left(\mathcal{F}_0^i\cup\mathcal{J}_0^{i}\right)\cup \Bigg(\bigcup_{i^{\prime}\in\mathbb{I}, i^{\prime}\ne i}\left(\mathcal{F}_0^i\cup\mathcal{J}_0^{i^{\prime}}\right)\Bigg)\Bigg]\nonumber\\
    &\stackrel{\eqref{jumpset_individual_obstacle}, \eqref{flowset_individual_obstacle}}{=}\bigcap_{i\in\mathbb{I}}\left(\mathcal{W}_{r_a}\cup\mathcal{F}_0^i\right) \stackrel{\eqref{flowset_individual_obstacle}}{=} \bigcap_{i\in\mathbb{I}}\mathcal{W}_{r_a} = \mathcal{W}_{r_a}\nonumber.
\end{align}
Similarly, for each $i\in\mathbb{I}$ and $m \in\{-1, 1\}$, according to \eqref{avoidance_flow_set_individual}, \eqref{individual_jump_set_avoidance_mode}, by construction $\mathcal{F}_m^i\cup\mathcal{J}_m^i = \mathcal{W}_{r_a}.$

Now, inspired by \cite[Appendix 1]{berkane2021arxiv}, for the hybrid closed-loop system \eqref{hybrid_closed_loop_system}, with data $\mathcal{H} = (\mathcal{F}, \mathbf{F}, \mathcal{J}, \mathbf{J})$, define $\mathbf{S}_{\mathcal{H}}(\mathcal{K})$ as the set of all maximal solutions $\xi$ to $\mathcal{H}$ with $\xi(0, 0)\in\mathcal{K}$.
Each $\xi\in\mathbf{S}_{\mathcal{H}}(\mathcal{K})$ has range $\text{rge }\xi \subset \mathcal{K} = \mathcal{F}\cup\mathcal{J}$. Furthermore, if for each $\xi(0, 0)\in\mathcal{K}$ there exists one solution and each $\xi\in\mathbf{S}_{\mathcal{H}}(\mathcal{K})$ is complete, then the set $\mathcal{K}$ will be in fact forward invariant \cite[Definition 3.3]{chai2018forward}. To that end using \cite[Proposition 6.10]{goedel2012hybrid}, we show the satisfaction of the following viability condition: 
\begin{equation}
    \mathbf{F}(\mathbf{x}, m, k) \cap \mathbf{T}_{\mathcal{F}}(\mathbf{x}, m, k) \ne \emptyset, \forall(\mathbf{x}, m, k)\in\mathcal{F}\backslash\mathcal{J},\label{viability_condition}
\end{equation}
which will allow us to establish the completeness of the solution $\xi$ to the hybrid closed-loop system \eqref{hybrid_closed_loop_system}. In \eqref{viability_condition}, the notation $\mathbf{T}_{\mathcal{F}}(\mathbf{x}, m, k)$ denotes the tangent cone\footnote{The tangent cone to a set $\mathcal{K} \subset \mathbb{R}^n$ at a point $x \in\mathbb{R}^n$, denoted $\mathbf{T}_{\mathcal{K}}(x)$, is defined as in~\cite[Def.~5.12 and Fig.~5.4]{goedel2012hybrid}.} to the set $\mathcal{F}$ at $(\mathbf{x}, m, k)$. Let $(\mathbf{x}, m, k)\in\mathcal{F}\backslash\mathcal{J},$ which implies by \eqref{composite_jump_and_flow_sets} that $(\mathbf{x}, k)\in\left(\mathcal{F}_m\backslash\mathcal{J}_m\right)\times\mathbb{I}$ for some $m\in\mathbb{M}$. We consider two cases corresponding to $m = 0$ and $m \in\{-1, 1\}$. 

When $m = 0$, according to \eqref{stabilization_mode_jumpflow_set_final}, there exists $k\in\mathbb{I}$ such that $\xi \in\mathcal{F}_0\backslash\mathcal{J}_0\times\{0\}\times\{k\}.$ For $\mathbf{x}\in(\mathcal{F}_0)^{\circ}\backslash\mathcal{J}_0$, $\mathbf{T}_{\mathcal{F}}(\xi) = \mathbb{R}^2\times\{0\}\times\{0\}$, and \eqref{viability_condition} holds. According to \eqref{jumpset_individual_obstacle}, \eqref{flowset_individual_obstacle} and \eqref{stabilization_mode_jumpflow_set_final}, one has
\begin{equation}
    \partial\mathcal{F}_0\backslash\mathcal{J}_0 \in\bigcup_{k\in\mathbb{I}}\left(\partial\mathcal{D}_{r_a}(\mathcal{O}_k)\cap\mathcal{R}_b^k\right),\nonumber
\end{equation}
%$\in$ is used instead of $=$ as the points $n_{r_a}^{(m, i)}, m\in\{-1, 1\}, i\in\mathbb{I}$ are excluded.
and for every $k\in\mathbb{I}$, according to Lemma \ref{lemma:normal_to_dilated_obstacle},  $\mathcal{P}(\Pi(\mathbf{x}, \mathcal{D}_{r_a}(\mathcal{O}_k)), \mathbf{x} - \Pi(\mathbf{x}, \mathcal{O}_k))$ is a supporting hyperplane to $\mathcal{D}_{r_a}(\mathcal{O}_k)$ at $\Pi(\mathbf{x}, \mathcal{D}_{r_a}(\mathcal{O}_k))$, hence $\forall \mathbf{x}\in\partial\mathcal{F}_0\backslash\mathcal{J}_0$
\begin{equation}
\mathbf{T}_{\mathcal{F}}(\mathbf{x}, 0, k) = \mathcal{P}_{\geq}(\mathbf{0}, (\mathbf{x} - \Pi(\mathbf{x}, \mathcal{O}_k))\times\{0\}\times\{0\}.\nonumber
\end{equation}
Also, for $m = 0$, $\mathbf{u}(\mathbf{x}, 0, k) = -\gamma\mathbf{x}, \;\gamma>0$, \eqref{fxim}. Hence, according to \eqref{back_region}, for $\mathbf{x}\in\partial\mathcal{D}_{r_a}(\mathcal{O}_k)\cap\mathcal{R}_b^k$, $\mathbf{u}^{\intercal}(\mathbf{x} - \Pi(\mathbf{x}, \mathcal{O}_k)) \geq 0$, hence $\mathbf{u}(\mathbf{x}, 0, k)\in\mathcal{P}_{\geq}(\mathbf{0}, (\mathbf{x} - \Pi(\mathbf{x}, \mathcal{O}_k))$, and \eqref{viability_condition} holds for $m = 0$. 

When $m \in \{-1, 1\}$, according to \eqref{avoidance_final_set}, there exists $k\in\mathbb{I}$ such that $\mathbf{x}\in\mathcal{F}_m^k\backslash\mathcal{J}_m^k$. For $\mathbf{x}\in(\mathcal{F}_m^k)^{\circ}\backslash\mathcal{J}_m^k,$ $\mathbf{T}_{\mathcal{F}_m}(\mathbf{x}) = \mathbb{R}^2,$ so that $\mathbf{T}_{\mathcal{F}}(\xi) = \mathbb{R}^2\times\{0\}\times\{0\}$, and \eqref{viability_condition} holds. According to \eqref{avoidance_flow_set_individual}, \eqref{individual_jump_set_avoidance_mode} and \eqref{avoidance_final_set}, one has
\begin{equation}
    \partial\mathcal{F}_m^k\backslash\mathcal{J}_m^k\in\partial\mathcal{D}_{r_a}(\mathcal{O}_k),\nonumber
\end{equation}
and according to Lemma \ref{lemma:normal_to_dilated_obstacle},  $\mathcal{P}(\Pi(\mathbf{x}, \mathcal{D}_{r_a}(\mathcal{O}_k)), \mathbf{x} - \Pi(\mathbf{x}, \mathcal{O}_k))$ is a supporting hyperplane to $\mathcal{D}_{r_a}(\mathcal{O}_k)$ at $\Pi(\mathbf{x}, \mathcal{D}_{r_a}(\mathcal{O}_k))$, hence $\forall\mathbf{x}\in\partial\mathcal{F}_m^k\backslash\mathcal{J}_m^k$
\begin{equation}
    \mathbf{T}_{\mathcal{F}}(\mathbf{x}, m, k)=\mathcal{P}_{\geq}(\mathbf{0}, (\mathbf{x} - \Pi(\mathbf{x}, \mathcal{O}_k))\times\{0\}\times\{0\}\nonumber,
\end{equation}
and according to \eqref{fxim}, $\mathbf{u}(\mathbf{x}, m, k) =\gamma\mathbf{v}(\mathbf{x}, m, k), \gamma > 0$. Since $\Pi(\mathbf{x}, \mathcal{O}_{\mathcal{W}})$ equals $\Pi(\mathbf{x}, \mathcal{O}_k)$, $\mathbf{v}(\mathbf{x}, k, m)^{\intercal}(\mathbf{x} - \Pi(\mathbf{x}, \mathcal{O}_k)) = 0$, and the condition in \eqref{viability_condition} holds true for $m = \{-1, 1\}$.

Hence, according to \cite[Proposition 6.10]{goedel2012hybrid}, since \eqref{viability_condition} holds for all $\xi\in\mathcal{F}\backslash\mathcal{J}$, there exists a nontrivial solution to $\mathcal{H}$ for each initial condition in $\mathcal{K}$. Finite escape time can only occur through flow. They can neither occur for $\mathbf{x}$ in the set $\mathcal{F}_{-1}^k\cup\mathcal{F}_{1}^k$, $k\in\mathbb{I}$, as these sets are bounded by definition \eqref{avoidance_flow_set_individual}-\eqref{avoidance_final_set}, nor for $\mathbf{x}$ in the set $\mathcal{F}_0$ as this would make $\mathbf{x}^{\intercal}\mathbf{x}$ grow unbounded, and would contradict the fact that $\frac{d}{dt}(\mathbf{x}^{\intercal}\mathbf{x}) \leq 0$ in view of the definition of $\mathbf{u}(\mathbf{x}, 0, k)$. Therefore, all maximal solutions do not have finite escape times. Furthermore, according to \eqref{hybrid_closed_loop_system}, $\mathbf{x}^+ = \mathbf{x}$, and from the definition of the update laws in \eqref{update_law_for_m}, \eqref{update_law_for_k}, it follows immediately that 
$
 \mathbf{J}(\mathcal{J})\subset\mathcal{K}.   
$
Hence, solutions to the hybrid closed-loop system \eqref{hybrid_closed_loop_system} cannot leave $\mathcal{K}$ through jump and, as per \cite[Proposition 6.10]{goedel2012hybrid}, all maximal solutions are complete.

\subsection{Proof of Lemma \ref{no_revolution_around_the_target}}\label{proof:no_closed_trajectories}

Let $\xi := (\mathbf{x}, m, k)$ be the solution to the hybrid closed-loop system \eqref{hybrid_closed_loop_system}. Notice that for the robot operating in the \textit{move-to-target} mode $(m = 0)$, if $\mathbf{x}(t_0, j_0)\notin\mathcal{L}_{>}(\mathbf{0}, \nu_{-1}(\mathbf{s}))$ at some $(t_0, j_0)\in\text{ dom }\xi$, then $\mathbf{x}(t, j)\notin\mathcal{L}_{>}(\mathbf{0}, \nu_{-1}(\mathbf{s})), \forall(t, j)\succeq(t_0, j_0)$, as long as it does not encounter any obstacle in the way, since $\mathcal{L}(\mathbf{0}, \mathbf{x}(t_0, j_0))\cap\mathcal{L}_{>}(\mathbf{0}, \nu_{-1}(\mathbf{s})) = \emptyset.$ Hence, we investigate the case where the solution $\xi$ evolves in the \textit{obstacle-avoidance} mode $(m\in\{-1, 1\}).$
\begin{lemma}
Under Assumption \ref{assumption:robot_pass_through}, each maximal solution $\mathbf{x}$ to the flow-only  system
\begin{equation}
    \mathbf{\dot{x}} = \mathbf{u}(\mathbf{x}, m, k), \;\mathbf{x}\in\mathcal{F}_m^k,\label{flow_only_system}
\end{equation}
with $m \in\{-1, 1\}$ and $k\in\mathbb{I}$, has $T = \text{sup}_t\text{ dom }\mathbf{x} < +\infty$ and $\mathbf{x}(T)\in\mathcal{G}_{m}^k$.
\label{eventually_exit_avoidance_mode}
\end{lemma}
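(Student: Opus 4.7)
The plan is to first establish forward invariance of the compact flow set $\mathcal{F}_m^k$ under the purely continuous dynamics, and then to exhibit finite-time exit through the gate $\mathcal{G}_m^k$. For invariance, I would differentiate the proximity function $\varrho(\mathbf{x})=d(\mathbf{x},\mathcal{O}_k)-r_a$ along the flow. Because the rotational vector is orthogonal to the outward normal $\mathbf{n}(\mathbf{x}):=(\mathbf{x}-\Pi(\mathbf{x},\mathcal{O}_k))/\|\mathbf{x}-\Pi(\mathbf{x},\mathcal{O}_k)\|$ by construction \eqref{definition_of_vim}, one obtains $\dot\varrho=-\gamma\kappa(\mathbf{x},m)\,\mathbf{n}(\mathbf{x})^{\intercal}\mathbf{x}$. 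Since $\mathcal{F}_m^k\subset\mathcal{T}(\mathcal{O}_k)\setminus\mathcal{R}_b^k$, Remark \ref{remark:back_region_is_closed_connected_subset} yields $\mathbf{n}(\mathbf{x})^{\intercal}\mathbf{x}\geq 0$ and hence $\dot\varrho\leq 0$. Combined with the safety of the inner boundary established in Lemma \ref{forward_invariance_theorem_1}, the solution stays in the compact tubular neighbourhood $\mathcal{T}(\mathcal{O}_k)$, ruling out any exit through the outer layer $\partial\mathcal{D}_{r_a+\epsilon_d}(\mathcal{O}_k)$.

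Next, I would show finite-time arrival to the inner layer $\{\mathbf{x}:\varrho(\mathbf{x})\leq\epsilon\}$ followed by finite-time revolution to the gate. Inside $\{\varrho\leq\epsilon\}$ one has $\kappa=0$, so $\mathbf{u}=\gamma\mathbf{v}(\mathbf{x},m)$ is purely tangential with $\|\mathbf{u}\|=\gamma\|\mathbf{x}\|\geq\gamma c>0$, where $c$ is a positive lower bound on $\|\mathbf{x}\|$ inherited from the fact that $\mathbf{0}\notin\mathcal{T}(\mathcal{O}_k)$ by the choice of $\bar{r}_s$ in \eqref{choice_of_epsilon_d}. Using the $1$-Lipschitz property of projection on a convex set, this implies that $\Pi(\mathbf{x}(t),\mathcal{O}_k)$ traces $\partial\mathcal{O}_k$ monotonically in direction $m$ at a positive rate bounded below, so the gate configuration (where $\alpha_s=-m\pi/2$) is reached within time comparable to the perimeter of $\partial\mathcal{O}_k$. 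To reach the inner layer, observe that in the outer regime $\varrho>\epsilon_s$ the dynamics reduces to $\dot{\mathbf{x}}=-\gamma\mathbf{x}$, which drives $\|\mathbf{x}\|$ (and, by the geometry of the flow set, also $\varrho$) down to $\epsilon_s$ in finite time; within the transition layer $\varrho\in[\epsilon,\epsilon_s]$, $\dot\varrho=-\gamma\kappa\,\mathbf{n}^{\intercal}\mathbf{x}$ remains strictly negative away from the gate, so $\varrho$ decreases strictly to $\epsilon$ in finite time.

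The exit point must then belong to $\mathcal{G}_m^k$ by elimination: the outer boundary is excluded by the monotonicity of $\varrho$; the common boundary of $\mathcal{F}_m^k$ and $\mathcal{R}_b^k$ is precisely $\mathcal{G}_m^k$; and exit through $\mathcal{R}_{-m}^k\setminus\mathcal{ER}_f^k$ is ruled out because the monotone tangential progress of the projection in direction $m$, together with Lemma \ref{no_revolution_around_the_target}, which forbids the solution from crossing the half-line $\mathcal{L}_{>}(\mathbf{0},\nu_{-1}(\mathbf{s}))$, prevents the projection from winding backwards on $\partial\mathcal{O}_k$. The hardest part will be the intermediate regime restricted to the subset of $\overline{\mathcal{ER}_f^k\setminus\mathcal{R}_b^k}$ with $\alpha_s(\mathbf{x},\mathbf{n})$ of sign opposite to $-m$: there the tangential component $\gamma\|\mathbf{x}\|\bigl[(1-\kappa)-m\kappa\sin\alpha_s\bigr]$ may transiently be of the wrong sign, and one must combine the strict decrease of $\varrho$ with the geometry of $\mathcal{ER}_f^k$ to guarantee that the solution is nevertheless funnelled into the inner layer where the rotational field dominates unambiguously.
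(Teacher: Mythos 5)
There is a genuine gap, in fact two. First, your exclusion of an exit through the $\mathcal{R}_{-m}^k$ side of $\partial\mathcal{F}_m^k$ invokes Lemma \ref{no_revolution_around_the_target}; but that lemma is itself proved in the paper \emph{using} Lemma \ref{eventually_exit_avoidance_mode}, so the argument is circular. The paper instead rules out every boundary piece other than $\mathcal{G}_m^k$ by direct Nagumo/tangent-cone computations: it partitions $\partial\mathcal{F}_m^k$ into $\mathcal{S}_1,\dots,\mathcal{S}_4$ and $\mathcal{G}_m^k$ and checks $\mathbf{u}(\mathbf{x},m,k)\in\mathbf{T}_{\mathcal{F}_m^k}(\mathbf{x})$ on each, using Lemma \ref{lemma:normal_to_dilated_obstacle} for the normal directions and Lemma \ref{lemma:no_close_avoidance} on $\mathcal{S}_3$; no appeal to the half-line $\mathcal{L}_{>}(\mathbf{0},\nu_{-1}(\mathbf{s}))$ is needed or possible at this stage.

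Second, and more fundamentally, your finite-time mechanism fails. You compute $\dot\varrho=-\gamma\kappa\,\mathbf{n}^{\intercal}\mathbf{x}$ correctly, but with $\kappa=\eta(\varrho)=(\varrho-\epsilon)/(\epsilon_s-\epsilon)$ on the transition layer this reads $\dot e=-c(t)\,e$ for $e=\varrho-\epsilon$ with $c$ bounded, so $\varrho$ can only approach $\epsilon$ asymptotically: the solution generically \emph{never} enters the inner layer $\{\varrho\le\epsilon\}$ in finite time, and the whole plan of ``funnel into the inner layer, then rotate along a level set to the gate'' collapses. (Likewise $\dot\varrho\to 0$ near the gate since $\mathbf{n}^{\intercal}\mathbf{x}\to 0$ there, so strict negativity away from the gate gives no uniform rate.) The paper's progress measure is different and is the key idea you are missing: it tracks the polar angle of $\mathbf{x}$ about the \emph{origin}, not the distance to the obstacle. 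On the cone-based partition $\tilde{\mathcal{F}}_1\cup\tilde{\mathcal{F}}_2\cup\tilde{\mathcal{F}}_3$ of $\mathcal{F}_m^k$, the pieces $\tilde{\mathcal{F}}_2,\tilde{\mathcal{F}}_3$ are handled by pure radial descent $-\gamma\mathbf{x}$, while on $\tilde{\mathcal{F}}_1\backslash\mathcal{G}_m^k$ one shows $\nu_m(\mathbf{x})^{\intercal}\mathbf{u}=\gamma[1-\kappa]\norm{\mathbf{x}}\,\mathbf{x}^{\intercal}(\mathbf{x}-\Pi(\mathbf{x},\mathcal{O}_k))/\norm{\mathbf{x}-\Pi(\mathbf{x},\mathcal{O}_k)}>0$ (the stabilizing term contributes zero to the angular direction, the rotational term contributes strictly positively by Remark \ref{remark:back_region_is_closed_connected_subset}), so the angle sweeps monotonically toward the gate regardless of whether $\varrho$ ever reaches $\epsilon$; boundedness of $\mathcal{F}_m^k$ then precludes flowing forever. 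If you want to salvage your write-up, replace the ``reach the inner layer'' step by this angular monotonicity argument and replace the appeal to Lemma \ref{no_revolution_around_the_target} by the tangent-cone checks on $\mathcal{S}_2$ and $\mathcal{S}_3$.
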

\begin{proof}
See Appendix \ref{proof:even_exit_avoidance}.
\end{proof}

Lemma \ref{eventually_exit_avoidance_mode} indicates that the solution $\mathbf{x}(t)$ to the flow-only system \eqref{flow_only_system}, evolving in the \textit{obstacle-avoidance} flow set $\mathcal{F}_m^k$ related to some obstacle $\mathcal{O}_k, k\in\mathbb{I}$ with some $m\in\{-1, 1\}$, will enter in the gate region $\mathcal{G}_m^k$ in finite time. As $\mathcal{G}_m^k\times\{m\}\times\{k\}\subset\mathcal{J}_m\times\{-1, 1\}\times\mathbb{I}$, according to Lemma \ref{eventually_exit_avoidance_mode}, the solution $\xi$ evolving in the \textit{obstacle-avoidance} mode with some mode indicator $m \in\{-1, 1\}$, with respect to some obstacle $\mathcal{O}_k, \;k\in\mathbb{I}$, will ultimately enter in the \textit{move-to-target} mode.

%\sout{Lemma \ref{eventually_exit_avoidance_mode} indicates that the solution $\xi$ evolving in the \textit{obstacle-avoidance} mode with some mode indicator variable $m \in\{-1, 1\}$, with respect to some obstacle $\mathcal{O}_k, \;k\in\mathbb{I}$, will always enter in the \textit{move-to-target} mode.} 
Next, we introduce several notations to denote the instances where the solution $\xi$ enters and leaves the \textit{obstacle-avoidance} mode with respect to some obstacle $\mathcal{O}_k, \;k\in\mathbb{I}$.

Let $(t_{\mathcal{F}}^k, j_{\mathcal{F}}^k)\in\text{ dom }\xi$ such that $\xi_{\mathcal{F}}^k = \xi(t_{\mathcal{F}}^k, j_{\mathcal{F}}^k) = (\mathbf{x}_{\mathcal{F}}^k, m, k)\in\mathcal{F}_m^k\times\{-1, 1\}\times\{k\}$ be the state at the instant when a solution $\xi$ to the hybrid closed-loop system \eqref{hybrid_closed_loop_system}, earlier flowing in the \textit{move-to-target} mode, enters in the flow set of the \textit{obstacle-avoidance} mode related to obstacle $\mathcal{O}_k, k\in\mathbb{I},$ for some $m\in\{-1, 1\}.$ Similarly, let $(t_{\mathcal{J}}^k, j_{\mathcal{J}}^k)\in\text{ dom }\xi$ such that  $\xi_{\mathcal{J}}^k = \xi(t_{\mathcal{J}}^k, j_{\mathcal{J}}^k) = (\mathbf{x}_{\mathcal{J}}^k, m, k)\in\mathcal{J}_m^k\times\{-1, 1\}\times\{k\}$ be the state at the instant when the solution $\xi$, earlier flowing in the \textit{obstacle-avoidance} mode with respect to the obstacle $\mathcal{O}_k$, enters the jump set of the \textit{obstacle-avoidance} mode associated with the respective obstacle.

We partition the obstacle-free state space $\mathcal{K}$ \eqref{composite_state_vector} into three subsets, based on the location of the vector $\mathbf{s}\in\mathbb{R}^2\backslash\{\mathbf{0}\}$ used in the update law \eqref{direction_decision}, as follows:
\begin{equation}
    \mathcal{K} = \tilde{\mathcal{K}}_{1}(\mathbb{M}) \cup \tilde{\mathcal{K}}_{-1}(\mathbb{M})\cup\tilde{\mathcal{K}}_0(\mathbb{M}),
    \end{equation}where
\begin{equation}
\begin{aligned}
    \tilde{\mathcal{K}}_{1}(\{z\}) &= \left(\mathcal{P}_{>}(\mathbf{0}, \mathbf{s})\cap\mathcal{W}_{r_a}\right)\times\{z\}\times\mathbb{I},\\
    \tilde{\mathcal{K}}_{-1}(\{z\}) &= \left(\mathcal{P}_{<}(\mathbf{0}, \mathbf{s})\cap\mathcal{W}_{r_a}\right)\times\{z\}\times\mathbb{I},\\
    \tilde{\mathcal{K}}_0(\{z\}) &= \left(\mathcal{P}(\mathbf{0}, \mathbf{s})\cap\mathcal{W}_{r_a}\right)\times\{z\}\times\mathbb{I}, \nonumber 
\end{aligned}
\end{equation}
furthermore, let
\begin{equation}
\begin{aligned}
    \tilde{\mathcal{K}}_{>}(\{z\}) &= \mathcal{L}_{>}(\mathbf{0}, \nu_1(\mathbf{s}))\times \{z\}\times\mathbb{I},\\
    \tilde{\mathcal{K}}_{<}(\{z\}) &= \mathcal{L}_{>}(\mathbf{0}, \nu_{-1}(\mathbf{s}))\times\{z\}\times\mathbb{I},\nonumber
\end{aligned}
\end{equation}
where $\{z\} \subseteq \mathbb{M},$ such that $\tilde{\mathcal{K}}_0(\mathbb{M}) = \mathcal{A}\cup\tilde{\mathcal{K}}_{>}(\mathbb{M})\cup\tilde{\mathcal{K}}_{<}(\mathbb{M})$. Now, we proceed with the proof.

Given $\xi(t_0, j_0)\in\mathcal{W}_{r_a}\backslash\mathcal{L}_{>}(\mathbf{0}, \nu_{-1}(\mathbf{s}))\times\{0\}\times\mathbb{I}$, we further assume $\xi(t_0, j_0)\in\tilde{\mathcal{K}}_{m^*}(\{0\})$ for some $m^*\in\{-1, 1\}$ and that $\exists(t_{\mathcal{F}}^k, j_{\mathcal{F}}^k - 1)\succeq(t_0, j_0)$ such that $\xi(t_{\mathcal{F}}^k, j_{\mathcal{F}}^k - 1)\in\mathcal{J}_0\times\{0\}\times\mathbb{I}, \; k\in\mathbb{I}.$ According to \eqref{update_law_for_m} and \eqref{direction_decision}, $\xi(t_{\mathcal{F}}^k, j_{\mathcal{F}}^k) \in \mathcal{F}_{m^*}^k\times\{m^*\}\times\{k\}$. Then according to Lemma \ref{eventually_exit_avoidance_mode}, $\exists(t_{\mathcal{J}}^k, j_{\mathcal{J}}^k)\succ(t_{\mathcal{F}}^k, j_{\mathcal{F}}^k)$ such that $\xi(t_{\mathcal{J}}^k, j_{\mathcal{J}}^k + 1)\in\mathcal{F}_0\times\{0\}\times\mathbb{I}.$ Now, in order to prove our claim, we show that 
\begin{equation}
    \mathbf{u}(\mathbf{x}(t, j), m^*, k) \in \mathcal{P}_{\geq}(\mathbf{0}, \nu_{m^*}(\mathbf{x}(t, j))),\label{claim_about_no_revolution}
\end{equation}
for all $(t, j)\in([t_{\mathcal{F}}^k, t_{\mathcal{J}}^k]\times[j_{\mathcal{F}}^k, j_{\mathcal{J}}^k])$. Since $\xi(t_0, j_0)\in\tilde{\mathcal{K}}_{m^*}(\{0\})$, $\mathcal{P}_{\geq}(\mathbf{0}, \nu_{m^*}(\mathbf{x}_{\mathcal{F}}^k))\cap\mathcal{L}_{>}(\mathbf{0}, \nu_{-1}(\mathbf{s})) = \emptyset$. The satisfaction of \eqref{claim_about_no_revolution} ensures that the component $\mathbf{x}$ of the solution $\xi$ always evolves towards the positive half-space generated by the hyperplane $\mathcal{P}(\mathbf{0}, \nu_{m^*}(\mathbf{x}(t, j)))$, as shown in Fig. \ref{figure_no_revolution}, such that
\begin{equation}
    \xi(t, j)\notin\tilde{\mathcal{K}}_{<}(\mathbb{M}), \forall (t, j)\in([t_{\mathcal{F}}^k, t_{\mathcal{J}}^k]\times[j_{\mathcal{F}}^k, j_{\mathcal{J}}^k]).\nonumber
\end{equation}
 To that end, we analyze the behaviour of the solution $\xi(t, j)$, $\forall (t, j)\in([t_{\mathcal{F}}^k, t_{\mathcal{J}}^k]\times[j_{\mathcal{F}}^k, j_{\mathcal{J}}^k]).$
\begin{figure}
    \centering
    \includegraphics[width = 0.75\linewidth]{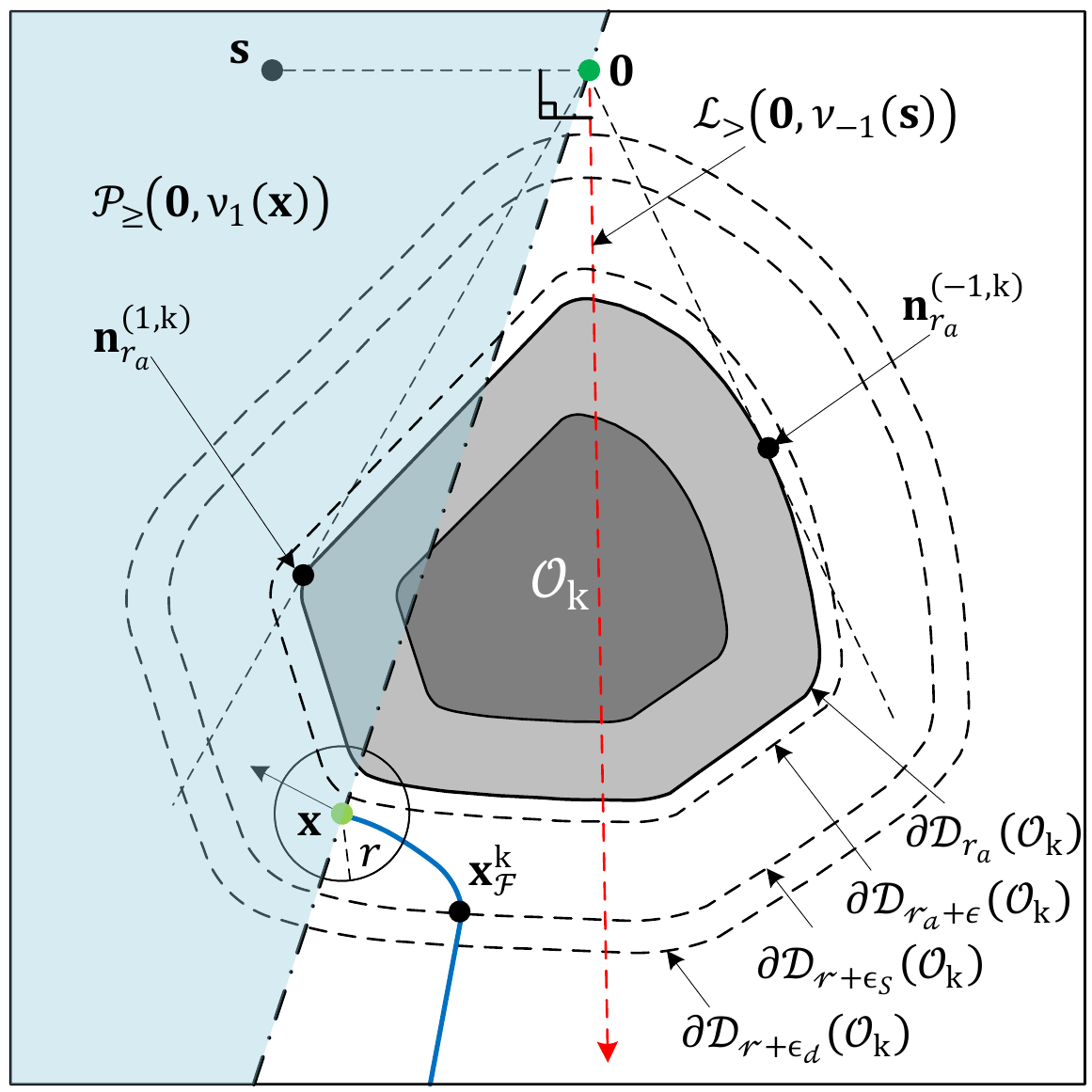}
    \caption{Illustration of a robot trajectory (blue curve) not intersecting the half-line $\mathcal{L}_{>}(\mathbf{0}, \nu_{-1}(\mathbf{s}))$ (red line) while operating in the \textit{obstacle-avoidance} mode in the flow set $\mathcal{F}_1^k\times\{1\}\times\{k\}$. The velocity vector at the current location of the robot (green dot) always points to the positive half-space $\mathcal{P}_{\geq}(\mathbf{0}, \nu_{1}(\mathbf{\mathbf{x}}))$ (shaded blue region).}
    \label{figure_no_revolution}
\end{figure}
For $(t, j)\in([t_{\mathcal{F}}^k,t_{\mathcal{J}}^k]\times[ j_{\mathcal{F}}^k, j_{\mathcal{J}}^k])$, consider the flow-only system \eqref{flow_only_system} for $\mathbf{x}\in\mathcal{F}_{m^*}^k$. According to \eqref{fxim}, the control input vector for the robot operating in the \textit{obstacle-avoidance} mode is given by
\begin{equation}
    \mathbf{u}(\mathbf{x}, m^*, k) = -\gamma\kappa(\xi)\mathbf{x} + \gamma[1 - \kappa(\xi)]\mathbf{v}(\mathbf{x}, m^*, k),\label{what_is_control_1}
\end{equation}
where $\gamma > 0$. Since $\mathbf{x}^\intercal\nu_{m^*}(\mathbf{x}) = 0$, $-\gamma\kappa(\xi)\mathbf{x}\in\mathcal{P}_{\geq}(\mathbf{0}, \nu_{m^*}(\mathbf{x})).$ Next, we consider the rotational vector $\gamma[1- \kappa(\xi)]\mathbf{v}(\mathbf{x}, m^*, k), \gamma[1-\kappa(\xi)] \geq 0$ such that
\begin{equation}
    \begin{aligned}
    \mathbf{v}(\xi)^\intercal\nu_{m^*}(\mathbf{x}) &=\frac{\norm{\mathbf{x}} \nu_{m^*}(\mathbf{x} - \Pi(\mathbf{x}, \mathcal{O}_k))^\intercal\nu_{m^*}(\mathbf{x})}{\norm{\mathbf{x} - \Pi(\mathbf{x}, \mathcal{O}_k)}},\\
    &= \frac{\norm{\mathbf{x}}(\mathbf{x} - \Pi(\mathbf{x}, \mathcal{O}_k))^{\intercal}\mathbf{x}}{\norm{\mathbf{x} - \Pi(\mathbf{x}, \mathcal{O}_k)}}.
    \end{aligned}
\end{equation}

According to Lemma \ref{eventually_exit_avoidance_mode}, the solution $\mathbf{x}$ will evolve within the set $\mathcal{F}_{m^*}^k$ for all $(t, j)\in([t_{\mathcal{F}}^k, t_{\mathcal{J}}^k]\times[j_{\mathcal{F}}^k, j_{\mathcal{J}}^k])$, and at $(t_{\mathcal{J}}^k, j_{\mathcal{J}}^k)$ will enter in the gate region $\mathcal{G}_{m^*}^k$. Hence, as per Remark \ref{remark:back_region_is_closed_connected_subset} and \eqref{gate_region}, for all $(t, j)\in([t_{\mathcal{F}}^k, t_{\mathcal{J}}^k]\times[j_{\mathcal{F}}^k, j_{\mathcal{J}}^k])$, one has
\begin{equation}
    \mathbf{v}(\xi)^\intercal\nu_{m^*}(\mathbf{x}) = \frac{\norm{\mathbf{x}}(\mathbf{x} - \Pi(\mathbf{x}, \mathcal{O}_k))^{\intercal}\mathbf{x}}{\norm{\mathbf{x} - \Pi(\mathbf{x}, \mathcal{O}_k)}} \geq 0,\nonumber
\end{equation}
\textit{i.e.}, the vector $\gamma[1 - \kappa(\xi)]\mathbf{v}(\xi)$ in \eqref{what_is_control_1} belongs to the positive half-space $\mathcal{P}_{\geq}(\mathbf{0}, \nu_{m^*}(\mathbf{x}))$, and \eqref{claim_about_no_revolution} is satisfied.

\subsection{Proof of Lemma \ref{eventually_exit_avoidance_mode}}
\label{proof:even_exit_avoidance}

We consider the flow-only system \eqref{flow_only_system} for the robot operating in the \textit{obstacle-avoidance} mode with respect to some obstacle $\mathcal{O}_k, k\in\mathbb{I}$ \textit{i.e.}, $\mathbf{x}\in\mathcal{F}_m^k$ and $m\in\{-1, 1\}$. We partition the set $\partial\mathcal{F}_m^k$ into several subsets, as shown in Fig. \ref{lemma_flow_set}, based on the similarity between the tangent cones to the set $\mathcal{F}_m^k$ at these regions, as follows:
\begin{equation}
\begin{aligned}
    \partial\mathcal{F}_m^k &= \mathcal{S}_{1}\cup\mathcal{S}_{2}\cup\mathcal{S}_{3}\cup\mathcal{S}_{4}\cup\mathcal{G}_m^k,\\
    \mathcal{S}_{1} &= \left(\partial\mathcal{D}_{r_a + \epsilon_d}(\mathcal{O}_k)\cap\mathcal{F}_m^k\right)\backslash\mathcal{G}_m^k,\\
    \mathcal{S}_{2} &= \mathcal{L}\big(\mathbf{0}, \mathbf{n}_{r_a + \epsilon}^{(- m, k)}\big)\cap\mathcal{F}_m^k,\\
    \mathcal{S}_{3} &= \mathcal{G}_{-m}^k\cap\mathcal{F}_{m}^k,\\
    \mathcal{S}_{4} &= \partial\mathcal{D}_{r_a}(\mathcal{O}_k)\cap\mathcal{F}_m^k.\label{partitions_of_the_boundary}
\end{aligned}
\end{equation}
We proceed to prove the claims in two parts.
\begin{enumerate}
\item We show that if the solution $\mathbf{x}$ to the flow-only system \eqref{flow_only_system} leaves the set $\mathcal{F}_m^k$, it cannot leave via the boundary $\partial\mathcal{F}_m^k\backslash\mathcal{G}_m^k$ \textit{i.e.}, it can only leave the set $\mathcal{F}_m^k$ via $\mathcal{G}_m^k$. To that end, for all $\mathbf{x}\in\partial\mathcal{F}_m^k\backslash\mathcal{G}_m^k$, according to Nagumo's theorem \cite[Theorem 11.2.3]{aubin2011viability}, we verify the following condition:
    \begin{equation}
        \mathbf{u}(\mathbf{x}, m, k)\in\mathbf{T}_{\mathcal{F}_m^k}(\mathbf{x})
    \end{equation}
    where $\mathbf{T}_{\mathcal{F}_m^k}(\mathbf{x})$ is the tangent cone to the set $\mathcal{F}_m^k$ at $\mathbf{x}$.
    %\item We show that the solution $\mathbf{x}$ to the {flow_only_system} will always exit the set $\mathcal{F}_m^k$ in finite time. We show that, if $\mathbf{x}(t_0) \in\mathcal{F}_m^k$ for some $t_0\geq 0$, then $\exists T = \text{ sup}_t\text{ dom }\mathbf{x}<+\infty$, $T \geq t_0$, such that $\mathbf{x}(T) \in\mathcal{G}_m^k\cup\mathcal{U}_2.$
    \item We show that the solution $\mathbf{x}$ to the flow-only system \eqref{flow_only_system}, flowing in the set $\mathcal{F}_m^k$, away from the region $\mathcal{G}_m^k$, will always enter in the set $\mathcal{G}_m^k$ in finite time. We show that, if $\mathbf{x}(t_0) \in\mathcal{F}_m^k\backslash\mathcal{G}_m^k$ for some $t_0\geq 0$, then $\exists T = \text{ sup}_t\text{ dom }\mathbf{x}<+\infty$, $T > t_0$, such that $\mathbf{x}(T) \in\mathcal{G}_m^k.$
\end{enumerate}
\begin{figure}
    \centering
    \includegraphics[width = 0.49\linewidth]{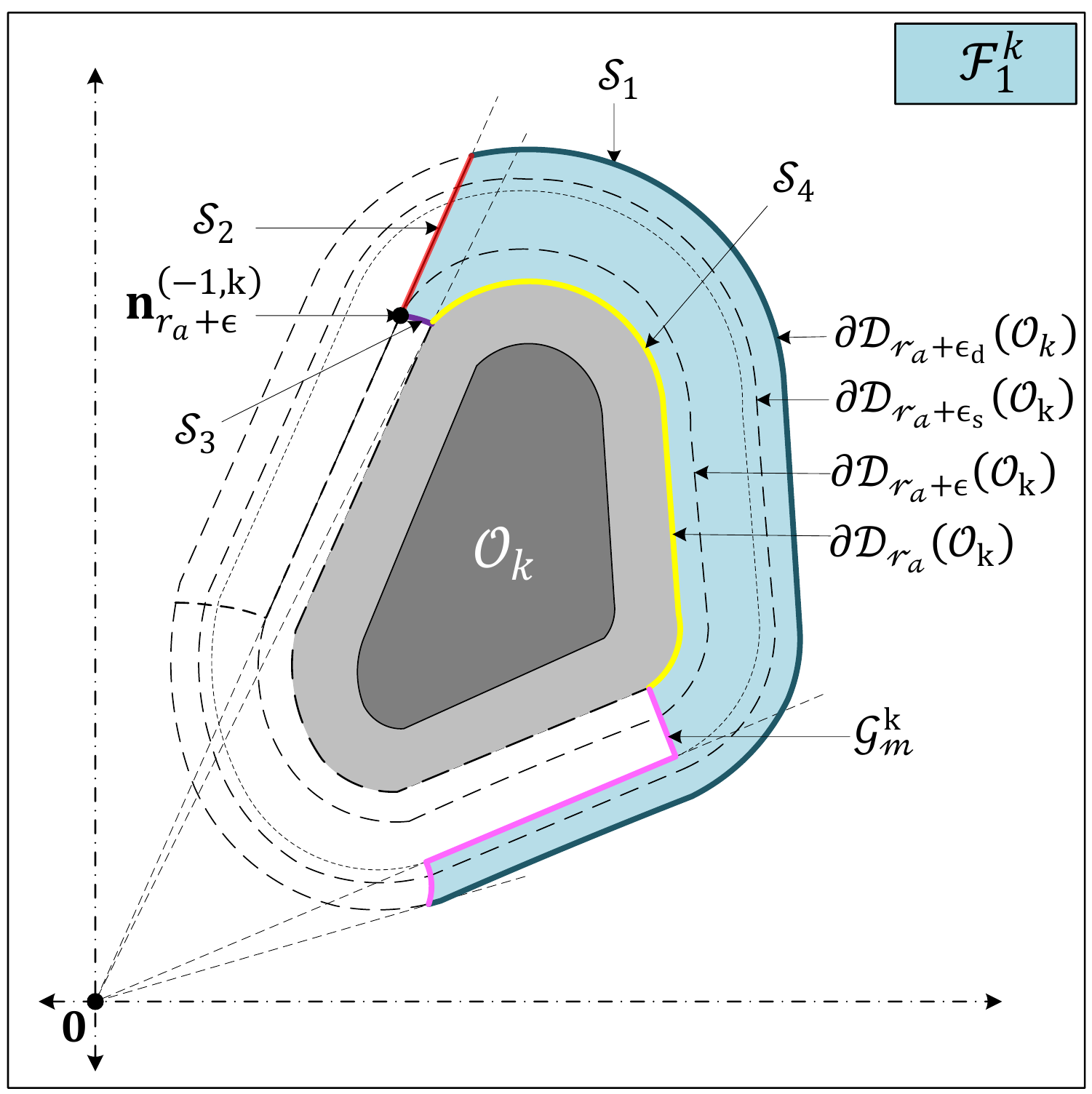}
    \includegraphics[width = 0.49\linewidth]{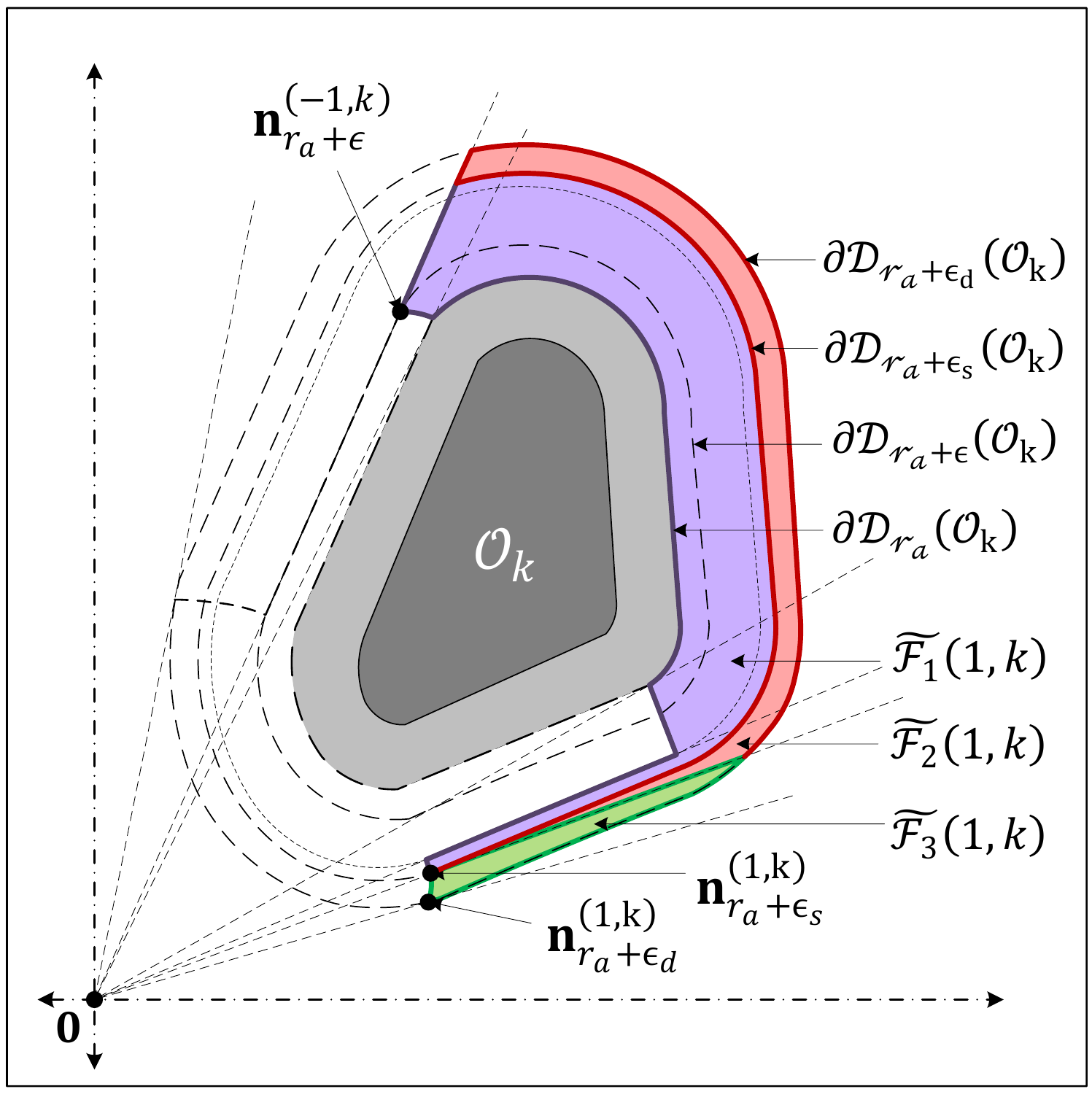}
    \caption{The geometric representation of the \textit{obstacle-avoidance} mode flow set $\mathcal{F}_1^k$ related to an obstacle $\mathcal{O}_k, k\in\mathbb{I}$. The left figure shows the partitions of the set $\partial\mathcal{F}_1^k$. The right figure depicts the partitions of the set $\mathcal{F}_1^k.$}
     \label{lemma_flow_set}
\end{figure}

For $\mathbf{x}\in(\mathcal{S}_{1})^{\bullet}$, according to Lemma \ref{lemma:normal_to_dilated_obstacle}, the vector $(\mathbf{x} - \Pi(\mathbf{x}, \mathcal{O}_k))$ is normal to the convex set $\mathcal{D}_{r_a + \epsilon_d}(\mathcal{O}_k)$ at $\mathbf{x}$. Hence, the tangent cone to the set $\mathcal{F}_m^k$ at $\mathbf{x}\in(\mathcal{S}_{1})^{\bullet}$ is given as
\begin{align}
    \mathbf{T}_{\mathcal{F}_m^k}(\mathbf{x}) = \mathcal{P}_{\leq}(\mathbf{0}, \mathbf{x} - \Pi(\mathbf{x}, \mathcal{O}_k)).\nonumber
\end{align}
Also, for $\mathbf{x}\in(\mathcal{S}_{1})^{\bullet}$, $\mathbf{u}(\xi) = -\gamma\mathbf{x}, \;\gamma>0$. Since $(\mathcal{S}_{1})^{\bullet}\notin\mathcal{R}_b^k$, according to Remark \ref{remark:back_region_is_closed_connected_subset}, $-\gamma\mathbf{x}^{\intercal}(\mathbf{x} - \Pi(\mathbf{x}, \mathcal{O}_k)) \leq 0$. Hence, one has
\begin{equation}
    \forall \mathbf{x}\in(\mathcal{S}_{1})^{\bullet}, \mathbf{u}(\mathbf{x}, m, k)\in \mathbf{T}_{\mathcal{F}_m^k}(\mathbf{x}).\label{s1}
\end{equation}

For $\mathbf{x}\in(\mathcal{S}_{2}\backslash\mathcal{S}_3)^{\bullet}$, the tangent cone to the set $\mathcal{F}_m^k$ at $\mathbf{x}$ is given by
\begin{equation}
    \mathbf{T}_{\mathcal{F}_m^k}(\mathbf{x}) = \mathcal{P}_{\geq}(\mathbf{0}, \nu_{m}(\mathbf{n}_{r_a + \epsilon}^{(-m, k)})),\nonumber
\end{equation}
and $\mathbf{u}(\xi) = -\gamma\kappa(\xi)\mathbf{x} + \gamma[1 - \kappa(\xi)]\mathbf{v}(\xi), \gamma > 0$. For $\mathbf{x}\in(\mathcal{S}_2\backslash\mathcal{S}_3)^{\bullet}$, $\mathbf{x}^\intercal\nu_{m}(\mathbf{n}_{r_a + \epsilon}^{(-m, k)}) = 0$, hence $-\gamma\kappa(\xi)\mathbf{x}\in\mathcal{P}_{\geq}\big(\mathbf{0}, \nu_{m}(\mathbf{n}_{r_a + \epsilon}^{(-m, k)})\big)$. We know that $\gamma[1 - \kappa(\xi)]\geq 0$, and 
\begin{equation}
\begin{aligned}
    \mathbf{v}(\xi)^\intercal\nu_{m}(\mathbf{n}_{r_a + \epsilon}^{(-m, k)})&=\frac{\norm{\mathbf{x}}\nu_{m}(\mathbf{x} - \Pi(\mathbf{x}, \mathcal{O}_k))^\intercal\nu_{m}\big(\mathbf{n}_{r_a + \epsilon}^{(-m, k)}\big)}{\norm{\mathbf{x} - \Pi(\mathbf{x}, \mathcal{O}_k)}},\nonumber\\
    &= \frac{\norm{\mathbf{x}}(\mathbf{x} - \Pi(\mathbf{x}, \mathcal{O}_k))^\intercal\mathbf{n}_{r_a + \epsilon}^{(-m, k)}}{\norm{\mathbf{x} - \Pi(\mathbf{x}, \mathcal{O}_k)}}.\nonumber
\end{aligned}
\end{equation}
Since for $\mathbf{x}\in(\mathcal{S}_{2}\backslash\mathcal{S}_3)^{\bullet}$, the vectors $\mathbf{n}_{r_a + \epsilon}^{(-m, k)}$ and $\mathbf{x}$ point in the same direction and $(\mathcal{S}_{2}\backslash\mathcal{S}_{3})^{\bullet}\notin\mathcal{R}_b^k$, according to Remark \ref{remark:back_region_is_closed_connected_subset}, $\mathbf{x}^\intercal(\mathbf{x} - \Pi(\mathbf{x}, \mathcal{O}_k)) \geq 0$. Hence, for $\mathbf{x}\in(\mathcal{S}_{2}\backslash\mathcal{S}_{3})^{\bullet}$, $\gamma[1 - \kappa(\xi)]\mathbf{v}(\xi)\in\mathcal{P}_{\geq}\big(\mathbf{0}, \nu_{m}(\mathbf{n}_{r_a + \epsilon}^{(-m, k)})\big)$, and 
\begin{equation}
    \forall \mathbf{x}\in(\mathcal{S}_{2}\backslash\mathcal{S}_3)^{\bullet}, \mathbf{u}(\mathbf{x}, m, k)\in \mathbf{T}_{\mathcal{F}_m^k}(\mathbf{x}).\label{s2}
\end{equation}

For $\mathbf{x}\in(\mathcal{S}_4)^{\bullet}$, the tangent cone to the set $\mathcal{F}_m^k$ at $\mathbf{x}$ is given by
\begin{equation}
    \mathbf{T}_{\mathcal{F}_m^k}(\mathbf{x}) = \mathcal{P}_{\geq}(\mathbf{0}, \mathbf{x} - \Pi(\mathbf{x}, \mathcal{O}_k)),\nonumber
\end{equation}
and $\mathbf{u}(\xi) = \gamma\mathbf{v}(\xi)$. Since $\mathbf{v}(\xi)^{\intercal}(\mathbf{x} - \Pi(\mathbf{x}, \mathcal{O}_k)) = 0$, one has
\begin{equation}
    \forall\mathbf{x}\in(\mathcal{S}_4)^{\bullet}, \mathbf{u}(\mathbf{x}, m, k)\in\mathbf{T}_{\mathcal{F}_m^k}(\mathbf{x}).
\end{equation}

For $\mathbf{x}\in\mathcal{S}_{1}\cap\mathcal{S}_{2}$, the tangent cone to the set $\mathcal{F}_m^k$ at $\mathbf{x}$ is given as
\begin{equation}
    \mathbf{T}_{\mathcal{F}_m^k}(\mathbf{x}) = \mathcal{P}_{\leq}(\mathbf{0}, \mathbf{x} - \Pi(\mathbf{x}, \mathcal{O}_k))\cap\mathcal{P}_{\geq}(\mathbf{0}, \nu_{m}(\mathbf{n}_{r_a + \epsilon}^{(-m,k)})),\nonumber
\end{equation}
and $\mathbf{u}(\xi) = -\gamma\mathbf{x}, \; \gamma > 0$. Since $\mathcal{S}_{1}\cap\mathcal{S}_{2}\notin\mathcal{R}_b^k$ and the fact that the vectors $\mathbf{n}_{r_a + \epsilon}^{(-m,k)}$ and $-\gamma\mathbf{x}$ are collinear, ensures that
\begin{equation}
    \forall \mathbf{x}\in\mathcal{S}_{1}\cap\mathcal{S}_{2}, \mathbf{u}(\mathbf{x}, m, k)\in\mathbf{T}_{\mathcal{F}_m^k}(\mathbf{x}).\label{s1s2}
\end{equation}

Next, to show that the vector $\mathbf{u}(\mathbf{x}, m, k)$ evaluated at $\mathbf{x}\in\mathcal{S}_3$, for some $k\in\mathbb{I}$ and  $m\in\{-1, 1\}$, does not point outside the set $\mathcal{F}_m^k$, we require the following lemma.
%\sout{Next, to show that the if the solution $\mathbf{x}(t)$ to the flow-only system \eqref{flow_only_system} leaves the set $\mathcal{F}_m^k$, it cannot leave via the set $\mathcal{S}_3$, we require the following lemma.}

\begin{lemma}
\label{lemma:no_close_avoidance} Consider the following flow-only system: 
\begin{equation}
    \dot{\mathbf{x}} = \mathbf{v}(\mathbf{x}, m, k),\; \mathbf{x}\in\mathcal{D}_{r_a + \epsilon}(\mathcal{O}_k)\backslash(\mathcal{D}_{r_a}(\mathcal{O}_k))^{\circ},\label{flow_only_rotational_system}
\end{equation}
for some $k\in\mathbb{I}$ and $m\in\{-1, 1\}$. Let $d(\mathbf{x}(t_0), \mathcal{D}_{r_a}(\mathcal{O}_k)) = \beta\in[0, \epsilon]$ for some $t_0\geq 0$, then $\mathbf{x}(t) \in \partial\mathcal{D}_{r_a + \beta}(\mathcal{O}_k)$ for all $t\geq t_0$.
\end{lemma}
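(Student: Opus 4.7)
The plan is to introduce the scalar function $V(\mathbf{x}) := d(\mathbf{x}, \mathcal{O}_k)$ and show that $\dot{V} \equiv 0$ along every solution of \eqref{flow_only_rotational_system}. Since on the annular flow region one has the identity $V(\mathbf{x}) = d(\mathbf{x}, \mathcal{D}_{r_a}(\mathcal{O}_k)) + r_a$ (because, by Lemma \ref{lemma:normal_to_dilated_obstacle}, $\mathbf{x}$, $\Pi(\mathbf{x}, \mathcal{D}_{r_a}(\mathcal{O}_k))$ and $\Pi(\mathbf{x}, \mathcal{O}_k)$ are collinear and the two projections are separated by $r_a$), the preservation of $V$ translates directly into $d(\mathbf{x}(t), \mathcal{D}_{r_a}(\mathcal{O}_k)) \equiv \beta$, equivalently $\mathbf{x}(t) \in \partial \mathcal{D}_{r_a+\beta}(\mathcal{O}_k)$ for all $t \geq t_0$.

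To carry this out, first I would recall that since $\mathcal{O}_k$ is a closed convex set, the projection $\Pi(\cdot, \mathcal{O}_k)$ is single-valued and continuous, and the distance function to a closed convex set is continuously differentiable outside the set with
\begin{equation*}
\nabla V(\mathbf{x}) = \frac{\mathbf{x} - \Pi(\mathbf{x}, \mathcal{O}_k)}{\norm{\mathbf{x} - \Pi(\mathbf{x}, \mathcal{O}_k)}}.
\end{equation*}
On the annular region this gradient is well-defined since $\norm{\mathbf{x} - \Pi(\mathbf{x}, \mathcal{O}_k)} \geq r_a > 0$. Next, reading \eqref{definition_of_vim} I would observe that $\mathbf{v}(\mathbf{x}, m, k)$ is obtained by applying the orthogonal rotation $\nu_m$ to the unit normal $(\mathbf{x} - \Pi(\mathbf{x}, \mathcal{O}_k))/\norm{\mathbf{x} - \Pi(\mathbf{x}, \mathcal{O}_k)}$ and scaling by $\norm{\mathbf{x}}$. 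Hence $\mathbf{v}(\mathbf{x}, m, k)^{\intercal}\nabla V(\mathbf{x}) = 0$ throughout the flow region, and the chain rule yields
\begin{equation*}
\dot{V}(\mathbf{x}(t)) = \nabla V(\mathbf{x}(t))^{\intercal}\mathbf{v}(\mathbf{x}(t), m, k) = 0,
\end{equation*}
so $V(\mathbf{x}(t)) \equiv r_a + \beta$.

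To close, I would verify global existence of the solution on $[t_0, \infty)$: the vector field $\mathbf{v}(\cdot, m, k)$ is locally Lipschitz on the annular region (projection on a closed convex set is $1$-Lipschitz and the denominator is bounded below by $r_a$), and the invariance just established confines the trajectory to the compact level set $\partial \mathcal{D}_{r_a + \beta}(\mathcal{O}_k)$, ruling out finite-time escape. The main technical point is the $C^1$ differentiability of $V$ outside $\mathcal{O}_k$ together with the explicit formula for $\nabla V$; this is a standard fact about distance functions to convex sets but deserves explicit mention because $\partial \mathcal{O}_k$ need not be smooth. Everything else is an immediate orthogonality computation relying on Lemma \ref{lemma:normal_to_dilated_obstacle} and the structure of the rotational vector $\mathbf{v}$.
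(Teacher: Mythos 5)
Your proof is correct, and it reaches the conclusion by a slightly different formal route than the paper. The paper's proof works at the level of the set $\partial\mathcal{D}_{r_a+\beta}(\mathcal{O}_k)$: it invokes Lemma \ref{lemma:normal_to_dilated_obstacle} to identify the supporting hyperplane, declares the tangent cone to that boundary curve to be $\mathcal{P}(\mathbf{0}, \mathbf{x}-\Pi(\mathbf{x},\mathcal{O}_k))$, and concludes from $\mathbf{v}(\mathbf{x},m,k)^{\intercal}(\mathbf{x}-\Pi(\mathbf{x},\mathcal{O}_k))=0$ that the solution remains on the curve. You instead turn the same orthogonality into a conservation law: $V(\mathbf{x})=d(\mathbf{x},\mathcal{O}_k)$ is $C^1$ off the convex set with $\nabla V$ equal to the unit normal, so $\dot V\equiv 0$ and the level set $V=r_a+\beta$ is exactly $\partial\mathcal{D}_{r_a+\beta}(\mathcal{O}_k)$. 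The two arguments hinge on the identical geometric fact, but your version buys a bit of rigor: tangency of the vector field to a one-dimensional curve does not by itself yield invariance of that curve without appealing to a viability theorem for both the curve and its complement (Nagumo-type results give invariance of closed sets with nonempty "inside," not of lower-dimensional level sets), whereas $\dot V = 0$ along the flow gives the conclusion directly. Your explicit remarks on the $C^1$ regularity of the convex distance function (needed since $\partial\mathcal{O}_k$ may be non-smooth), on the lower bound $\norm{\mathbf{x}-\Pi(\mathbf{x},\mathcal{O}_k)}\geq r_a>0$, and on completeness of solutions via local Lipschitzness and compactness of the level set are all appropriate and address points the paper leaves implicit. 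The only cosmetic note is that the detour through $d(\mathbf{x},\mathcal{D}_{r_a}(\mathcal{O}_k))+r_a$ is not strictly needed once you observe that $\mathcal{D}_{r_a+\beta}(\mathcal{O}_k)$ is precisely the sublevel set $\{d(\cdot,\mathcal{O}_k)\leq r_a+\beta\}$.
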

\begin{proof}
According to Lemma \ref{lemma:normal_to_dilated_obstacle}, for $\mathbf{x}\in\partial\mathcal{D}_{r_a + \beta}(\mathcal{O}_k)$, $\mathcal{P}(\mathbf{x}, (\mathbf{x} - \Pi(\mathbf{x}, \mathcal{O}_k)))$ is a supporting hyperplane to the set $\mathcal{D}_{r_a + \beta}(\mathcal{O}_k)$ at $\mathbf{x}$. Hence, the tangent cone to the set $\partial\mathcal{D}_{r_a + \beta}(\mathcal{O}_k)$ at $\mathbf{x}\in\partial\mathcal{D}_{r_a + \beta}(\mathcal{O}_k)$ is given as $\mathbf{T}_{\partial\mathcal{D}_{r_a + \beta}(\mathcal{O}_k)}(\mathbf{x}) = \mathcal{P}(\mathbf{0}, (\mathbf{x} - \Pi(\mathbf{x}, \mathcal{O}_k))).$ Since $\mathbf{v}(\mathbf{x}, m, k)^\intercal(\mathbf{x} - \Pi(\mathbf{x}, \mathcal{O}_k)) = 0$, $\mathbf{v}(\mathbf{x}(t), m, k)\in\mathbf{T}_{\partial\mathcal{D}_{r_a + \beta}(\mathcal{O}_k)}(\mathbf{x}(t))$, for all $t\geq t_0$, which implies that the solution $\mathbf{x}(t)$ to the flow-only system \eqref{flow_only_rotational_system} belongs to the set $\partial\mathcal{D}_{r_a + \beta}(\mathcal{O}_k)$ for all $t \geq t_0$.
\end{proof}

According to Lemma \ref{lemma:no_close_avoidance}, the solution $\mathbf{x}(t)$ to the flow-only system \eqref{flow_only_rotational_system}, which belongs to the  obstacle-free workspace within the $(r_a + \epsilon)-$neighbourhood of an obstacle $\mathcal{O}_k, k\in\mathbb{I},$ at some time $t_0\geq 0$, will revolve around the obstacle $\mathcal{O}_k$ in the direction decided by the parameter $m\in\{-1, 1\}$, while maintaining the same proximity $d(\mathbf{x}(t), \mathcal{O}_k)$ for all $t\geq t_0$.

Now, for $\mathbf{x}\in\mathcal{S}_3$, according to \eqref{proposed_hybrid_controller_2}, $\mathbf{u}(\mathbf{x}, m, k) = \gamma\mathbf{v}(\mathbf{x}, m, k).$ Since $\mathcal{S}_3\in\mathcal{D}_{r_a + \epsilon}(\mathcal{O}_k)\backslash(\mathcal{D}_{r_a}(\mathcal{O}_k))^{\circ}$, according to Lemma \ref{lemma:no_close_avoidance}, for $\mathbf{x}\in\mathcal{S}_3$, the vector $\mathbf{u}(\mathbf{x}, m, k)$ points in the direction which is tangential to the curve $\partial\mathcal{D}_{d(\mathbf{x}, \mathcal{O}_k)}(\mathcal{O}_k)$ at $\mathbf{x}$. Also, for $\mathbf{x}\in\mathcal{S}_2\cap\mathcal{S}_3$, according to Lemma \ref{lemma:normal_to_dilated_obstacle}, and \eqref{back_region}, the tangent to the set $\partial\mathcal{D}_{d(\mathbf{x}, \mathcal{O}_k)}(\mathcal{O}_k)$ lies along the line $\mathcal{L}(\mathbf{0}, \mathbf{n}_{r_a + \epsilon}^{(-m, k)}$). Moreover, as $\mathcal{S}_3\subset\mathcal{G}_{-m}^k$, $\mathbf{v}(\mathbf{x}, m, k)$ evaluated for $\mathbf{x}\in\mathcal{S}_3$ points in the direction of the vector $\mathbf{x}$, radially outward from the origin. Hence, it is straightforward to notice that for $\mathbf{x}\in\mathcal{S}_3$, $\mathbf{u}(\mathbf{x}, m, k)$ does not point outside the set $\mathcal{F}_m^k$.

According to \eqref{s1}-\eqref{s1s2}, and Lemma \ref{lemma:no_close_avoidance}, if the solution $\mathbf{x}$ to the flow-only system \eqref{flow_only_system} leaves the set $\mathcal{F}_m^k$, it cannot leave from the region $\partial\mathcal{F}_m^k\backslash\mathcal{G}_m^k$. Next, we show that the solution $\mathbf{x}$ to the flow-only system \eqref{flow_only_system}, flowing in the set $\mathcal{F}_m^k$, away from the region $\mathcal{G}_m^k$, will always enter in the set $\mathcal{G}_m^k$ in finite time.%Next we show that, indeed the solution $\mathbf{x}$ to the flow-only system \eqref{flow_only_system} cannot stay in $\mathcal{F}_m^k$, indefinitely, which will imply that the solution will leave the set $\mathcal{F}_m^k$ only via the set $\mathcal{G}_m^k$.

We partition the set $\mathcal{F}_m^k, \; k\in\mathbb{I}$, into three subsets as follows:
\begin{equation}
    \mathcal{F}_m^k = \tilde{\mathcal{F}}_1(m, k) \cup \tilde{\mathcal{F}}_2(m, k)\cup\tilde{\mathcal{F}}_3(m, k),\label{partitions_of_the_flow_set}
    \end{equation}
    where
\begin{equation}
\begin{aligned} 
    \tilde{\mathcal{F}}_1(m, k) &= \left(\mathcal{F}_m^k\cap\mathcal{C}(\mathbf{n}_{r_a+\epsilon}^{(-m, k)}, \mathbf{n}_{r_a + \epsilon_s}^{(m, k)})\right)\cap(\mathcal{D}_{r_a + \epsilon_s}(\mathcal{O}_k))^{\circ},\\
    \tilde{\mathcal{F}}_2(m, k) &= \left(\mathcal{F}_m^k\cap\mathcal{C}(\mathbf{n}_{r_a+\epsilon}^{(-m, k)}, \mathbf{n}_{r_a + \epsilon_s}^{(m, k)})\right)\backslash(\mathcal{D}_{r_a + \epsilon_s}(\mathcal{O}_k))^{\circ},\\
    \tilde{\mathcal{F}}_3(m, k) &= \mathcal{F}_m^k\cap\mathcal{C}(\mathbf{n}_{r_a+\epsilon_s}^{(m, k)}, \mathbf{n}_{r_a + \epsilon_d}^{(m, k)}).
\end{aligned}\label{partitions_of_flow_set}
\end{equation}

For $\mathbf{x}\in\tilde{\mathcal{F}}_3(m, k)$, the control law $\mathbf{u}(\mathbf{x}, m, k)$, according to \eqref{fxim}, is given as $-\gamma\mathbf{x},\;\gamma > 0$, \textit{i.e.}, the solution $\mathbf{x}(t)$ will evolve along the line $\mathcal{L}(\mathbf{0}, \mathbf{x})$ towards the origin. Since $\mathbf{0}\notin\tilde{\mathcal{F}}_3(m, k)$, it is straightforward to verify that if for some $t_0 \geq 0$, $\mathbf{x}(t_0)\in\tilde{\mathcal{F}}_3(m, k)$, then there exists a finite time at which $\mathbf{x}(t)$ will leave the set $\tilde{\mathcal{F}}_3(m, k)$  via $\mathcal{G}_m^k$, see Fig. \ref{lemma_flow_set}.

According to \eqref{fxim}, for $\mathbf{x}\in\tilde{\mathcal{F}}_{2}(m, k)$, $\mathbf{u}(\mathbf{x}, m, k) = -\gamma\mathbf{x}, \gamma > 0$, hence in this region the solution will evolve towards the origin on a straight line $\mathcal{L}(\mathbf{0}, \mathbf{x})$ which implies that eventually it will enter $\tilde{\mathcal{F}}_{1}(m, k)$, see Fig. \ref{lemma_flow_set}.

Now, consider the case where $\mathbf{x} \in\tilde{\mathcal{F}}_{1}(m, k)$. We show that,
\begin{equation}
\forall \mathbf{x}\in\tilde{\mathcal{F}}_{1}(m,k)\backslash\mathcal{G}_m^k, \mathbf{u}(\mathbf{x}, m, k)\in \mathcal{P}_{>}(\mathbf{0}, \nu_m(\mathbf{x})).\label{to_show_11}
\end{equation}
The satisfaction of \eqref{to_show_11} implies that, if at some $t_0\geq 0$, $\mathbf{x}(t_0)\in\tilde{\mathcal{F}}_{1}(m, k)$, the solution $\mathbf{x}$ to the flow-only system \eqref{flow_only_system} cannot live indefinitely in the positive half-space $\mathcal{P}_{>}(\mathbf{0}, \nu_{m}(\mathbf{x}))$ since, in view of Lemma \ref{hybrid_basic_conditions}, the set $\mathcal{F}_m^k$ is closed and bounded as the obstacles are compact.

For all $\mathbf{x}\in\tilde{\mathcal{F}}_{1}(m, k)$, the control input vector $\mathbf{u}(\mathbf{x}, k, m)$ is given by
\begin{equation}
    \mathbf{u}(\mathbf{x}, m, k) = -\gamma\kappa({\xi})\mathbf{x} + \gamma[1 - \kappa(\xi)]\mathbf{v}(\mathbf{x}, m, k),\nonumber
\end{equation}
where $\gamma > 0.$ Since the vectors $-\gamma\kappa(\xi)\mathbf{x}$ and $\nu_{m}(\mathbf{x})$ are orthogonal, $-\gamma\kappa(\xi)\mathbf{x}\in\mathcal{P}(\mathbf{0}, \nu_{m}(\mathbf{x}))$. In order to check whether, $\mathbf{u}(\mathbf{x}, m, k)$  belongs to $\mathcal{P}_{>}(\mathbf{0}, \nu_m(\mathbf{x}))$, let us evaluate
\begin{equation}
    \nu_{m}(\mathbf{x})^\intercal\mathbf{u}(\mathbf{x}, m, k) = \gamma[1 - \kappa(\xi)]\nu_{m}(\mathbf{x})^\intercal\mathbf{v}(\mathbf{x}, m, k),\nonumber
\end{equation}
where $\forall \mathbf{x}\in\tilde{\mathcal{F}}_{1}(m,k)$,  $\gamma > 0, [1 - \kappa(\xi)] > 0.$ Hence, we consider
\begin{equation}
\begin{aligned}
\nu_{m}(\mathbf{x})^\intercal\mathbf{v}(\mathbf{x}, m, k)&=\nu_{-m}(\nu_{m}(\mathbf{x}))^\intercal\nu_{-m}(\mathbf{v}(\mathbf{x}, m, k)),\\
    &= \frac{\norm{\mathbf{x}}\mathbf{x}^{\intercal}(\mathbf{x} - \Pi(\mathbf{x}, \mathcal{O}_k))}{\norm{\mathbf{x} - \Pi(\mathbf{x}, \mathcal{O}_k))}}.\nonumber
\end{aligned}
\end{equation}
Since $\left(\tilde{\mathcal{F}}_{1}(m,k)\backslash\mathcal{G}_m^k\right)\cap\mathcal{R}_b^k=\emptyset$, according to Remark \ref{remark:back_region_is_closed_connected_subset}, $\forall \mathbf{x}\in\tilde{\mathcal{F}}_{1}(m, k)\backslash\mathcal{G}_m^k$, $\mathbf{x}^{\intercal}(\mathbf{x} - \Pi(\mathbf{x}, \mathcal{O}_k)) > 0$, hence \eqref{to_show_11} is satisfied.

\subsection{Proof of Theorem \label{proof:the_main_theorem}\ref{theorem:global_asymptotic_convergence}}

The forward invariance of the obstacle-free set $\mathcal{K}$ defined in \eqref{composite_state_vector}, for the hybrid closed-loop system \eqref{hybrid_closed_loop_system}, is immediate from Lemma \ref{forward_invariance_theorem_1}. We next prove stability of $\mathcal{A}$ using \cite[Definition 7.1]{goedel2012hybrid}. Since $\mathcal{W}_{r_a}$ is compact by construction and $\mathbf{0}\in\left(\mathcal{W}_{r_a}\right)^{\circ}$, $\exists\bar{\delta} > 0, $ such that $\mathcal{B}_{\bar{\delta}}(\mathbf{0})\cap\big(\mathcal{D}_{r_a}(\mathcal{O}_k)\big)^{\circ} = \emptyset, \;\forall k \in\mathbb{I}.$ It can be easily shown that for each $\delta\in[0, \bar{\delta}],$ the set $\mathcal{S}:= \mathcal{B}_{{\delta}}(\mathbf{0})\times\mathbb{M}\times\mathbb{I}$ is forward invariant because $\mathcal{B}_{\delta}(\mathbf{0})$ is disjoint from $\mathcal{J}_0$ as for all $k\in\mathbb{I}$, $\mathcal{D}_{r_a}(\mathcal{O}_k)$ is situated in between the set $\mathcal{J}_0^k$ and the target, see Fig. \ref{jump_and_flow_sets}. Hence, the component $\mathbf{x}\in\mathcal{B}_{\delta}(\mathbf{0})$ of solutions evolves, after at most one jump, in the \textit{move-to-target} mode $\mathbf{\dot{x}} = -\gamma\mathbf{x}, \;\gamma> 0$. Hence, similar to \cite[Appendix 2]{berkane2021arxiv}, the stability of $\mathcal{A}$ for the hybrid closed-loop system \eqref{hybrid_closed_loop_system} is immediate from \cite[Definition 7.1]{goedel2012hybrid}. Next, we proceed to establish almost global convergence properties of the set $\mathcal{A}$.
\begin{figure}
    \centering
    \includegraphics[width = 0.8\linewidth]{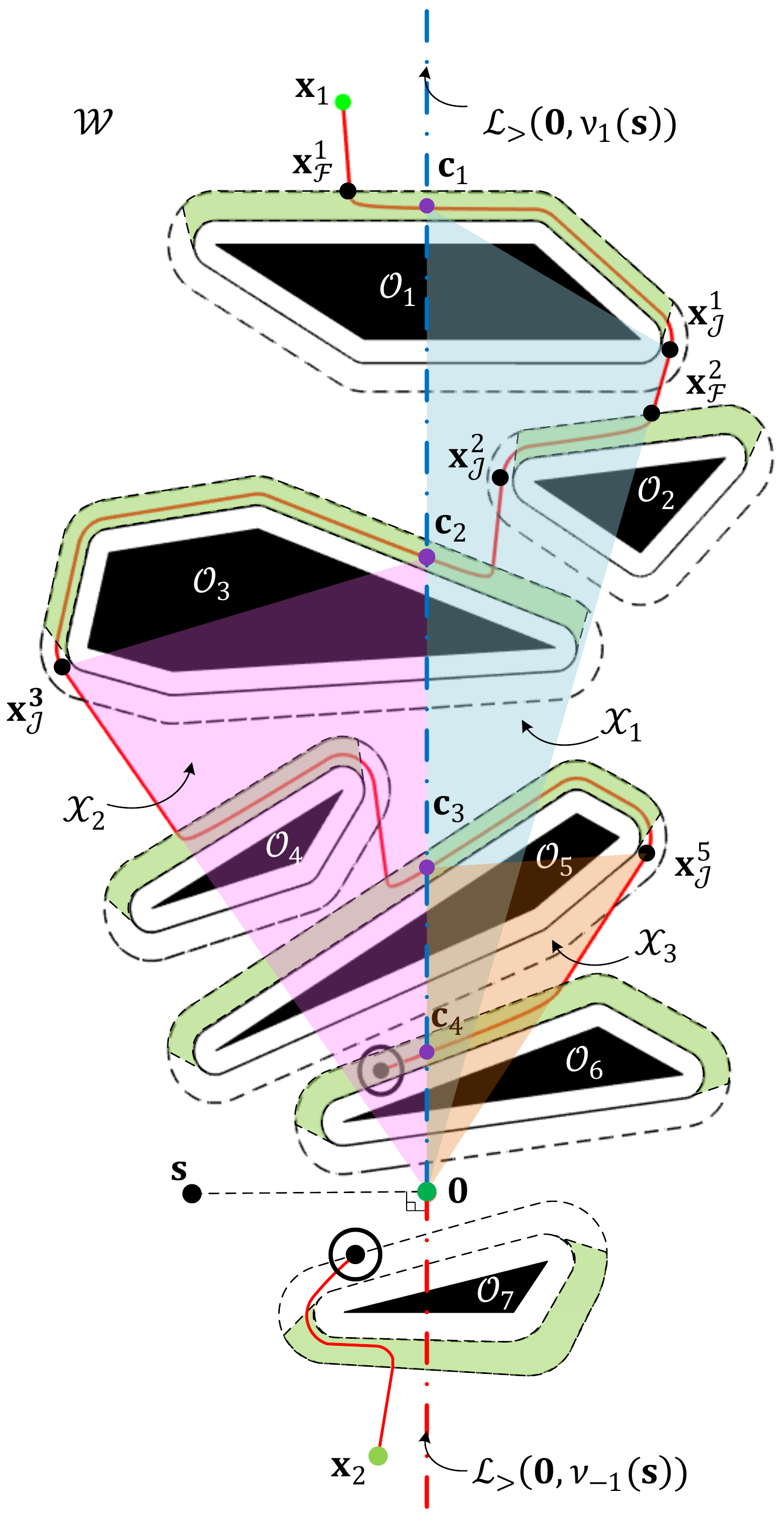}
    \caption{1)  Trajectory $\mathbf{x}(t)$ starting from $\mathbf{x}_1$, illustrates the property stated in \eqref{intersecting_consecutive_points} \textit{i.e.,} with consecutive intersections with the half-line $\mathcal{L}_{>}(\mathbf{0}, \nu_{1}(\mathbf{s}))$, the state $\mathbf{x}$ approaches towards the origin. 2) Trajectory $\mathbf{x}(t)$ starting from $\mathbf{x}_2$, does not intersect with the half-line $\mathcal{L}_{>}(\mathbf{0}, \nu_{-1}(\mathbf{s}))$, as $\mathbf{x}(0, 0)\notin\mathcal{M}_0$ and $m(0, 0)= 0$, as per Lemma \ref{no_revolution_around_the_target}.}
    \label{final_image}
\end{figure}

The next lemma aids in establishing the fact that the solution $\xi$ to the hybrid closed-loop system \eqref{hybrid_closed_loop_system} can enter the set $\mathcal{Z}$ only if it is initialized in the set $\mathcal{Z}_0$.

\begin{lemma}
Consider the hybrid closed-loop system \eqref{hybrid_closed_loop_system} and let Assumption \ref{assumption:robot_pass_through} hold. If $\xi(t_0, j_0) \in\mathcal{K}\backslash\mathcal{Z}_0$ for some $(t_0, j_0)\in\text{dom }\xi$, then $\xi(t, j)\notin\mathcal{Z}_0$ for all $(t, j)\succeq(t_0, j_0)$.\label{never_close_avoidance}
\end{lemma}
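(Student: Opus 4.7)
The plan is to track the evolution of the signed distance $\varrho(\mathbf{x}) = d(\mathbf{x}, \mathcal{O}_{\mathcal{W}}) - r_a$ along solutions of \eqref{hybrid_closed_loop_system}. Since $\mathcal{M}_0 \subset \bigcup_{i\in\mathbb{I}}\partial\mathcal{D}_{r_a}(\mathcal{O}_i)$, every element of $\mathcal{M}_0$ satisfies $\varrho = 0$, so it suffices to keep $\varrho$ strictly positive (or to show that $\varrho = 0$ occurs only at states outside $\mathcal{M}_0$). Jumps preserve the $\mathbf{x}$-component, so the argument reduces to analyzing flows in the two modes, with the $\epsilon_d$-neighborhoods of distinct obstacles being pairwise disjoint by Assumption \ref{assumption:robot_pass_through}.

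In the \textit{obstacle-avoidance} mode $(m\in\{-1,1\})$, the orthogonality $\mathbf{v}(\xi)^\top(\mathbf{x} - \Pi(\mathbf{x}, \mathcal{O}_k)) = 0$ used repeatedly in the earlier proofs, combined with the identity $\kappa(\mathbf{x}, m) = \eta(\varrho(\mathbf{x}))$ that holds for $m\neq 0$ by \eqref{kappa_function_formulation}, yields
\begin{equation*}
\dot{\varrho} = -\gamma\, \eta(\varrho)\, \frac{\mathbf{x}^\top(\mathbf{x} - \Pi(\mathbf{x}, \mathcal{O}_k))}{\norm{\mathbf{x} - \Pi(\mathbf{x}, \mathcal{O}_k)}}.
\end{equation*}
Remark \ref{remark:back_region_is_closed_connected_subset} ensures the geometric factor is nonnegative on $\mathcal{F}_m^k$, which is disjoint from $\mathcal{R}_b^k$. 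Whenever $\varrho \leq \epsilon$, $\eta(\varrho) = 0$ forces $\dot{\varrho} = 0$ (also recovered from Lemma \ref{lemma:no_close_avoidance}); whenever $\varrho \in (\epsilon, \epsilon_s]$, the linear profile of $\eta$ and the boundedness of the geometric factor on the compact flow set give a differential inequality of the form $\dot{\varrho} \geq -C(\varrho - \epsilon)$, which by a Gronwall argument implies $\varrho(t) > \epsilon > 0$ throughout the flow. In both sub-cases the avoidance flow cannot drive $\varrho$ down to zero from a strictly positive value.

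In the \textit{move-to-target} mode $(m=0)$, the flow $\mathbf{u} = -\gamma\mathbf{x}$ may decrease $\varrho$ near an obstacle $\mathcal{O}_i$, but the construction of $\mathcal{F}_0^i$ in \eqref{flowset_individual_obstacle} excludes $(\mathcal{D}_{r_a+\epsilon_s}(\mathcal{O}_i))^\circ \cap \mathcal{R}_f^i$. Consequently, any trajectory approaching the front portion of $\partial\mathcal{D}_{r_a}(\mathcal{O}_i)$ must first meet $\partial\mathcal{D}_{r_a+\epsilon_s}(\mathcal{O}_i) \cap \bar{\mathcal{R}}_f^i \subset \mathcal{J}_0^i$, where $\varrho = \epsilon_s$; the jump then transports the state into the avoidance mode at a strictly positive $\varrho$, and the previous paragraph applies. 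In the remaining edge case $\varrho(\mathbf{x}(t_0,j_0)) = 0$ with $\mathbf{x}(t_0,j_0) \notin \mathcal{M}_0$, the angular decomposition of $\partial\mathcal{D}_{r_a}(\mathcal{O}_i)$ into front, gates, and back pieces — together with the fact that $\mathcal{M}_0$ contains the gates $\mathbf{n}_{r_a}^{(\pm 1, i)}$ and the front portion — forces $\mathbf{x}(t_0,j_0)$ to lie in the strict interior of $\mathcal{R}_b^i$, where $\mathbf{x}^\top(\mathbf{x} - \Pi) < 0$ strictly. If $m=0$ one gets $\dot{\varrho} > 0$ instantly, lifting $\varrho$ above zero; if $m\in\{-1,1\}$ the state is already in $\mathcal{R}_b^i \subset \mathcal{J}_m^i$, so an immediate jump resets $m=0$ (preserving $\mathbf{x}$) and reduces to the previous subcase.

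The main obstacle I expect is the Gronwall-type estimate in the avoidance-mode argument: the Lipschitz vanishing of $\eta$ at $\varrho = \epsilon$ is precisely what prevents $\varrho$ from reaching $\epsilon$ (and a fortiori zero) in finite time from above, and making this rigorous requires uniformity of $C$ along the relevant compact pieces of the flow set. The secondary technicality is verifying that a point of $\partial\mathcal{D}_{r_a}(\mathcal{O}_i) \setminus \mathcal{M}_0$ lies strictly inside $\mathcal{R}_b^i$, which follows from the angular partition and the placement of the gate points $\mathbf{n}_{r_a}^{(\pm 1, i)}$ inside $\mathcal{M}_0$.
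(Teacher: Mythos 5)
Your proposal is correct and follows the same overall architecture as the paper's proof: reduce membership in $\mathcal{Z}_0$ to the condition $\varrho=0$, treat the two modes separately, use the vanishing of $\eta$ below the $\epsilon$-layer in the \textit{obstacle-avoidance} mode, and use the exclusion of $(\mathcal{D}_{r_a+\epsilon_s}(\mathcal{O}_i))^{\circ}\cap\mathcal{R}_f^i$ from $\mathcal{F}_0^i$ in the \textit{move-to-target} mode. Two remarks on where you diverge. First, the Gronwall estimate you flag as the main difficulty is not needed. Since $\mathbf{v}(\xi)^{\intercal}(\mathbf{x}-\Pi(\mathbf{x},\mathcal{O}_k))=0$ and the geometric factor is nonnegative on $\mathcal{F}_m^k$ (Remark \ref{remark:back_region_is_closed_connected_subset}), $\varrho$ is non-increasing along the avoidance flow; combined with your own observation that $\dot{\varrho}=0$ whenever $\varrho\leq\epsilon$, monotonicity alone gives $\varrho(t)\geq\min\{\epsilon,\varrho(t_0)\}>0$ for all later flow times, with no differential inequality required. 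This is exactly the paper's route: any continuous path to $\partial\mathcal{D}_{r_a}(\mathcal{O}_k)$ must cross a level set $\partial\mathcal{D}_{r_a+\beta_3}(\mathcal{O}_k)$ with $\beta_3<\epsilon$, on which the input is purely rotational and Lemma \ref{lemma:no_close_avoidance} freezes the distance there. Second, in the \textit{move-to-target} case the phrase ``the jump then transports the state into the avoidance mode'' is imprecise: the flow and jump sets overlap on $\partial\mathcal{D}_{r_a+\epsilon_s}(\mathcal{O}_i)\cap\bar{\mathcal{R}}_f^i$, and a jump is never mandatory where the state still belongs to $\mathcal{F}$. The operative mechanism is that the radial flow cannot be continued into $(\mathcal{D}_{r_a+\epsilon_s}(\mathcal{O}_i))^{\circ}\cap\mathcal{R}_f^i$ because that region is removed from $\mathcal{F}_0^i$ in \eqref{flowset_individual_obstacle}, so the flow interval must terminate at $\varrho=\epsilon_s$ before the front portion of $\partial\mathcal{D}_{r_a}(\mathcal{O}_i)$ can be reached; this is the content of the paper's terse ``$\mathcal{M}_0\subset\mathcal{J}_0$'' step. (Both your argument and the paper's leave the same residual edge case untreated, namely radial flows along the tangent rays $\mathcal{L}_{>}(\mathbf{0},\mathbf{n}_{r_a}^{(\pm1,i)})$, which avoid $\mathcal{R}_f^i$ entirely; your handling of the $\varrho=0$ initial condition via the strict back region is a correct addition that the paper folds into its assumption $\beta_1>0$.)
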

\begin{proof}
See Appendix \ref{sec:never_close_avoidance}.
\end{proof}

Lemma \ref{never_close_avoidance} indicated that the solution $\xi$ that does not belong to the set $\mathcal{Z}_0$ at some time $(t_0, j_0)\in\text{dom }\xi$ can never enter in the set $\mathcal{Z}_0$ for all $(t, j)\succeq(t_0, j_0)$. Since $\mathcal{M}\subset\mathcal{M}_0$ \eqref{initial_condition_for_zeno_behaviour}, it is straightforward to conclude that the solution $\xi$ can enter the set $\mathcal{Z}$ only if $\xi(0, 0)\in\mathcal{Z}_0.$

We proceed to prove that all solutions $\xi$ to the hybrid closed-loop system \eqref{hybrid_closed_loop_system} with $\xi(0, 0)\in\mathcal{K}\backslash\mathcal{Z}_0$ converge towards the set $\mathcal{A}$. Towards that end we require the following lemma.
%\sout{Next lemma shows that for the robot operating in the \textit{move-to-target} mode within the obstacle-free workspace, if its center is in the interior of one of the half-spaces generated by the hyperplane $\mathcal{P}(\mathbf{0}, \mathbf{s})$ away from the set $\mathcal{M}_0$, then it will either directly converge towards the target location or intersect the half-line $\mathcal{L}_{>}(\mathbf{0}, \nu_{1}(\mathbf{s}))$ in finite time, while flowing in the \textit{obstacle-avoidance} mode.}

\begin{lemma}
Consider the hybrid closed-loop system \eqref{hybrid_closed_loop_system} and let Assumption \ref{assumption:robot_pass_through} hold. If $\xi(t_0, j_0)\in\tilde{\mathcal{K}}_{m^*}(\{0\})\backslash\mathcal{Z}_0$ with some $m^*\in\{-1, 1\}$ for some $(t_0, j_0)\in\text{ dom }\xi$, then one of the following holds true:
\begin{enumerate}
    \item $\xi(t, j)\notin\tilde{\mathcal{K}}_{>}(\mathbb{M})$ for all $(t, j)\succeq (t_0, j_0)$, and $\underset{t\to\infty, j\to\infty}{\text{ lim }}\xi(t, j) \to \mathcal{A}.$
    \item there exists $(T, J)\succeq(t_0, j_0)$ such that $\xi(T, J) \in\tilde{\mathcal{K}}_{>}(\{-1, 1\})\backslash\mathcal{Z}_0$ where $T + J < +\infty$.
\end{enumerate}
\label{eventually_it_crosses_line}
\end{lemma}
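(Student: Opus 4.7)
The starting observation is that $\tilde{\mathcal{K}}_{m^*}(\{0\})$ lies in the open half-space $\mathcal{P}_{>}(\mathbf{0}, \mathbf{s})$ (if $m^*=1$) or $\mathcal{P}_{<}(\mathbf{0}, \mathbf{s})$ (if $m^*=-1$), and is therefore disjoint from $\mathcal{P}(\mathbf{0}, \mathbf{s})$ and, in particular, from $\mathcal{L}_{>}(\mathbf{0}, \nu_{-1}(\mathbf{s}))$. Applying Lemma \ref{no_revolution_around_the_target} together with Lemma \ref{never_close_avoidance} immediately yields $\xi(t,j) \notin \tilde{\mathcal{K}}_{<}(\mathbb{M}) \cup \mathcal{Z}_0$ for every $(t,j) \succeq (t_0,j_0)$. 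The argument then splits on whether the solution reaches $\tilde{\mathcal{K}}_{>}(\mathbb{M})$ in finite hybrid time.

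Suppose first that such a minimal $(T,J) \succeq (t_0,j_0)$ with $T+J < \infty$ exists. On any flow segment in the \textit{move-to-target} mode, the closed-loop equation $\dot{\mathbf{x}} = -\gamma\mathbf{x}$ gives $\mathbf{x}(t,j) = e^{-\gamma(t-t_\ast)}\mathbf{x}(t_\ast, j_\ast)$, so $\mathbf{x}$ is confined to an open ray from the origin through $\mathbf{x}(t_\ast,j_\ast) \in \mathcal{P}_{>}(\mathbf{0}, \mathbf{s}) \cup \mathcal{P}_{<}(\mathbf{0}, \mathbf{s})$; such a ray cannot intersect $\mathcal{L}_{>}(\mathbf{0}, \nu_{1}(\mathbf{s})) \subset \mathcal{P}(\mathbf{0}, \mathbf{s})$ at any point other than $\mathbf{0}$. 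Hence the mode component at $(T,J)$ must lie in $\{-1,1\}$, and together with $\xi(T,J) \notin \mathcal{Z}_0$ this yields case 2.

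Otherwise, the solution remains in $\tilde{\mathcal{K}}_{m^*}(\mathbb{M}) \cup \mathcal{A}$ for all $(t,j) \succeq (t_0,j_0)$, and convergence to $\mathcal{A}$ must be established. Here I would use $V(\xi) = \norm{\mathbf{x}}^2$ as a Lyapunov-like function: $\dot V = -2\gamma\norm{\mathbf{x}}^2$ along MTT flow, while $V$ is bounded on each compact \textit{obstacle-avoidance} flow set $\mathcal{F}_{m^*}^k$. Lemma \ref{eventually_exit_avoidance_mode} ensures every OA phase has finite duration and terminates in the gate $\mathcal{G}_{m^*}^k$. The crux is to exclude an infinite sequence of OA--MTT cycles; for this I would track the polar angle $\theta(\mathbf{x})$, which is preserved by MTT flow and, thanks to the inequality $\nu_{m^*}(\mathbf{x})^{\intercal}\mathbf{v}(\mathbf{x}, m^*, k) \geq 0$ established inside the proof of Lemma \ref{no_revolution_around_the_target}, advances monotonically toward the boundary half-line $\mathcal{L}_{>}(\mathbf{0}, \nu_{1}(\mathbf{s}))$ during OA flow. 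Under the case~1 assumption this boundary is never reached, which together with the compactness of $\mathcal{F}_{m^*}^k$ and the finiteness of the obstacle set forces only finitely many OA phases to occur. After the last one the solution stays in MTT and $\mathbf{x}(t,j) \to \mathbf{0}$ exponentially, giving $\xi \to \mathcal{A}$.

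The main difficulty is the last step: converting the qualitative angular monotonicity into a quantitative bound that rules out a Zeno-like accumulation of OA phases. A plain contraction on $\norm{\mathbf{x}}$ is inadequate because the norm can grow during OA. The plan is to combine (i) compactness of the OA flow sets, (ii) a strictly positive lower bound on $\nu_{m^*}(\mathbf{x})^{\intercal}\mathbf{v}(\mathbf{x}, m^*, k)$ away from the gate region $\mathcal{G}_{m^*}^k$ (available from Remark \ref{remark:back_region_is_closed_connected_subset} and the geometry of $\mathcal{F}_{m^*}^k$ used in Lemma \ref{eventually_exit_avoidance_mode}), and (iii) the finiteness of the obstacle set, to show that each OA--MTT cycle contributes a uniformly positive angular advance, so that at most finitely many cycles are possible before $\theta(\mathbf{x})$ would reach the boundary half-line---an outcome precluded in case~1, forcing eventual purely MTT evolution and thus exponential convergence to $\mathcal{A}$.
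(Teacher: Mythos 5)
Your proposal follows essentially the same route as the paper's proof: exclude $\tilde{\mathcal{K}}_{<}(\mathbb{M})$ and $\mathcal{Z}_0$ via Lemmas \ref{no_revolution_around_the_target} and \ref{never_close_avoidance}, observe that \textit{move-to-target} flow is radial (so any crossing of $\mathcal{L}_{>}(\mathbf{0},\nu_1(\mathbf{s}))$ must occur in \textit{obstacle-avoidance} mode), and drive the dichotomy with the monotone decrease of the angle to $\nu_1(\mathbf{s})$ during each avoidance phase (the half-space inclusion \eqref{to_show_11}) combined with the finite duration of those phases from Lemma \ref{eventually_exit_avoidance_mode}. The quantitative step you flag as the main difficulty---turning strict per-cycle angular decrease into the exclusion of infinitely many avoidance cycles that never reach the half-line---is precisely the point the paper itself leaves at the qualitative level, so your sketch is at least as complete as the published argument.
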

\begin{proof}
See Appendix \ref{sec:eventuallY_crosses_line}.
\end{proof}

%\sout{According to Lemma \ref{eventually_it_crosses_line}, if $\xi(t_0, j_0)\in\tilde{\mathcal{K}}_{m^*}(\{0\})\backslash\mathcal{Z}_0$ at some $(t_0, j_0)\in\text{ dom }\xi$ for some $m^*\in\{-1, 1\}$, then the solution $\xi$ will either directly converge to the set $\mathcal{A}$ or enter the set $\tilde{\mathcal{K}}_{>}(\mathbb{M})\cup\tilde{\mathcal{K}}_{-m^*}(\mathbb{M})$ only through $\tilde{\mathcal{K}}_{>}(\{-1, 1\})\backslash\mathcal{Z}_0$.}

Lemma \ref{eventually_it_crosses_line} shows that for the solution $\xi$ operating in the \textit{move-to-target} mode within the obstacle-free workspace, if the $\mathbf{x}$ component of the solution belongs to the interior of one of the half-spaces generated by the hyperplane $\mathcal{P}(\mathbf{0}, \mathbf{s})$ away from the set $\mathcal{M}_0$, then the solution $\xi$ will either directly converge towards the set $\mathcal{A}$ or enter the set $\tilde{\mathcal{K}}_{>}(\mathbb{M})\cup\tilde{\mathcal{K}}_{-m^*}(\mathbb{M})$ only through the set $\tilde{\mathcal{K}}(\{-1, 1\})\backslash\mathcal{Z}_0$ in finite time, while flowing in the \textit{obstacle-avoidance} mode. For the former case, it is straightforward to establish the convergence of the solution $\xi$ to the set $\mathcal{A}$. Therefore, focus on the latter case.

We show that for the solution $\xi$, initialized in the set $\mathcal{K}\backslash\mathcal{Z}_0$, if $\mathbf{x}$ component of the solution intersects the half-line $\mathcal{L}_{>}(\mathbf{0}, \nu_{1}(\mathbf{s}))$ more than once, then with each consecutive intersection with the set $\tilde{\mathcal{K}}_{>}(\{-1, 1\})\backslash\mathcal{Z}_0$, the solution $\xi$ moves closer to the set $\mathcal{A}.$
%\sout{We show that for the robot operating in the obstacle-free workspace, if its center does not belong to the set $\mathcal{M}_0$, and if it intersects the half-line $\mathcal{L}_{>}(\mathbf{0}, \nu_{1}(\mathbf{s}))$ more that once, then with each consecutive intersection the robot moves closer to the origin.}

Let $\mathbf{c}_l\in\mathcal{W}_{r_a},$ be the location where the solution $\xi$ enters in the set $\tilde{\mathcal{K}}_{>}(\{-1, 1\})\backslash\mathcal{Z}_0$, as shown in Fig. \ref{final_image}. The parameter $l\in\mathbb{N}$ indicates the instance of occurrence, for example $\mathbf{c}_1\in\mathcal{W}_{r_a}$ represents the location where the solution $\xi$, flowing in the \textit{obstacle-avoidance} mode, first entered in the set $\tilde{\mathcal{K}}_{>}(\{-1, 1\})\backslash\mathcal{Z}_0$. Let $(t_l^c, j_l^c)\in\text{ dom }\xi$ such that $\xi(t_l^c, j_l^c) \in \mathbf{c}_l\times \{-1, 1\}\times\mathbb{I}$.

We assume $\xi(t_0, j_0)\in\left(\tilde{\mathcal{K}}_{m^*}(\mathbb{M})\backslash\mathcal{Z}_0\right)\cap((\mathcal{F}_0^*\backslash\mathcal{J}_0^*)\times\{0\}\times\mathbb{I})$ for some $(t_0, j_0)\in\text{ dom }\xi$ and $m^*\in\{-1, 1\}$, which can always be the case by virtue of Lemma \ref{eventually_exit_avoidance_mode} and Lemma \ref{never_close_avoidance}. Furthermore, assume that for some $i\in\mathbb{I}$, $\exists(t_{\mathcal{F}}^i, j_{\mathcal{F}}^i)\succ(t_0, j_0)$ such that, according to \eqref{update_law_for_m},\eqref{direction_decision} and \eqref{update_law_for_k}, $\xi(t_{\mathcal{F}}^i, j_{\mathcal{F}}^i) = (\mathbf{x}_{\mathcal{F}}^i, m^*, i)\in\tilde{\mathcal{K}}_{m^*}(\mathbb{M})\backslash\mathcal{Z}_0$. Hence, according to Lemma \ref{eventually_exit_avoidance_mode} and Lemma \ref{never_close_avoidance}, $\exists(t_{\mathcal{J}}^i, j_{\mathcal{J}}^i)\succ(t_{\mathcal{F}}^i, j_{\mathcal{F}}^i)$ such that $\xi(t_{\mathcal{J}}^i, j_{\mathcal{J}}^i) = (\mathbf{x}_{\mathcal{J}}^i, m^*, i)\in\mathcal{K}\backslash\mathcal{Z}_0$. We assume $\xi(t_{\mathcal{J}}^i, j_{\mathcal{J}}^i)\in\tilde{\mathcal{K}}_{-m^*}(\mathbb{M})$ \textit{i.e.}, $\exists(t_l^c, j_l^c)\in((t_{\mathcal{F}}^i, t_{\mathcal{J}}^i]\times[j_{\mathcal{F}}^i, j_{\mathcal{J}}^i])$. This case is similar to the evolution of the robot position around obstacle $\mathcal{O}_1$ in Fig. \ref{final_image}. After $(t_{\mathcal{J}}^i, j_{\mathcal{J}}^i)$, according to Lemma \ref{eventually_it_crosses_line}, the solution will either directly converge to the set $\mathcal{A}$, or will again enter the set $\tilde{\mathcal{K}}_{>}(\{-1, 1\})\backslash\mathcal{Z}_0$.  We assume $\exists(t_{l + 1}^c, j_{l + 1}^c)\succ(t_{\mathcal{J}}^i, j_{\mathcal{J}}^i)$, then according to \eqref{update_law_for_m},\eqref{direction_decision} and \eqref{update_law_for_k}, $\xi(t_{l+ 1}^c, j_{l+1}^c) = (\mathbf{c}_{l + 1}, -m^*, i^{\prime}), i^{\prime}\in\mathbb{I}_l^{l+1}\backslash\{i\}$, where the set $\mathbb{I}_{l}^{l + 1}\subset\mathbb{I}$ consists of the indices of the obstacles encountered by the solution $\xi(t, j)$, for all $(t, j) \in ([t_l^c, t_{l+1}^c]\times[j_{l}^c, j_{l + 1}^c])$. 

We show that $\exists\sigma^{c_l}>0$ such that
\begin{align}
    \norm{\mathbf{c}_{l+1}(t_{l+1}^c, j_{l+1}^c)}\leq \norm{\mathbf{c}_{l}(t_{l}^c, j_{l}^c)} - \sigma^{c_l}.\label{intersecting_consecutive_points}
\end{align}
The satisfaction of the \eqref{intersecting_consecutive_points} ensures that if the component $\mathbf{x}$ of the solution $\xi$ crosses the half-line $\mathcal{L}_{>}(\mathbf{0}, \nu_{1}(\mathbf{s}))$ more than once while operating in the \textit{obstacle-avoidance} mode, then each consecutive intersection with the half-line $\mathcal{L}_{>}(\mathbf{0}, \nu_{1}(\mathbf{s}))$ is closer to the origin than the previous one. Then, by virtue of Lemma \ref{eventually_it_crosses_line} and the satisfaction of \eqref{intersecting_consecutive_points}, it is straightforward to show that any solution $\xi$ which belongs to $\mathcal{K}\backslash\mathcal{Z}_0$, at some instant of time will converge towards the target set $\mathcal{A}$. 

For $\xi(t_l^c, j_l^c) = (\mathbf{c}_l, m^*, i)$, we define a set $\mathcal{KT}_l:=  \mathcal{X}_l\times\mathbb{M}\times\mathbb{I},$ where
%as
%\begin{equation}
%    \mathcal{KT}_l := %\mathcal{X}_l\times\mathbb{M}\%times\mathbb{I},
%    \end{equation}
%where 
the set $\mathcal{X}_l$ is defined as
    \begin{equation*}
    \mathcal{X}_l := \mathcal{C}(\nu_{1}(\mathbf{s}), \mathbf{x}_{\mathcal{J}}^i)\cap\mathcal{P}_{\geq}(\mathbf{c}_l, \nu_{m^*}(\mathbf{x}_{\mathcal{J}}^{i} - \mathbf{c}_l)), \label{the_domain_where_solution_evolves}
\end{equation*}
and show that $\xi(t, j)\in\mathcal{KT}_l, \;\forall(t, j)\in([t_{\mathcal{J}}^i, t_{l+1}^c]\times [j_{\mathcal{J}}^i, j_{l+1}^c]).$ This would imply that for all $(t, j)\in([t_{\mathcal{J}}^i, t_{l+1}^c]\times [j_{\mathcal{J}}^i, j_{l+1}^c])$, the solution $\xi$ evolves in the set $\mathcal{KT}_l$ until it enters $\tilde{\mathcal{K}}_{>}(\{-1, 1\})\backslash\mathcal{Z}_0$ according to Lemma \ref{eventually_it_crosses_line}, and \eqref{intersecting_consecutive_points} holds.

First, we consider a special scenario wherein $\mathbf{x}_{\mathcal{J}}^i = \mathbf{c}_l$. In this case, the set $\mathcal{X}_l$ can be represented by a line segment $\mathcal{L}_s(\mathbf{0}, \mathbf{x}_{\mathcal{J}}^i)$. At $(t_{\mathcal{J}}^i, j_{\mathcal{J}}^i + 1)$ the robot enters in the \textit{move-to-target} flow set and moves along the line segment $\mathcal{L}_s(\mathbf{0}, \mathbf{x}_{\mathcal{J}}^i)$ towards the origin. Then it is straightforward to verify \eqref{intersecting_consecutive_points}. Next, we consider the case wherein $\mathbf{x}_{\mathcal{J}}^i \ne \mathbf{c}_l$.

Since $\xi(t_0, j_0)\in\left(\tilde{\mathcal{K}}_{m^*}(\mathbb{M})\backslash\mathcal{Z}_0\right)\cap((\mathcal{F}_0^*\backslash\mathcal{J}_0^*)\times\{0\}\times\mathbb{I})$, according to Lemma \ref{never_close_avoidance}, the solution $\xi(t, j)\notin\mathcal{Z}_0, \forall(t, j)\succeq(t_0, j_0).$ The boundary of the set $\mathcal{X}_l$ is defined as
\begin{equation*}
    \partial\mathcal{X}_l = \mathcal{L}_s(\mathbf{0}, \mathbf{c}_l)\cup\mathcal{L}_s(\mathbf{0}, \mathbf{x}_{\mathcal{J}}^i)\cup\mathcal{L}_s(\mathbf{c}_l, \mathbf{x}_{\mathcal{J}}^i),
\end{equation*}
where $\mathcal{L}_s(\mathbf{c}_l, \mathbf{x}_{\mathcal{J}}^i)\subset\mathcal{D}_{r_a + \epsilon_d}(\mathcal{O}_i)$. According to Assumption \ref{assumption:robot_pass_through} and \eqref{avoidance_final_set}, $\mathcal{F}_{m}^q\cap\mathcal{F}_{m}^s = \emptyset, \forall q, s\in\mathbb{I}, q\ne s,$ for some $m\in\{-1, 1\}$. Hence, at $(t_{\mathcal{J}}^i, j_{\mathcal{J}}^i + 1)$, the solution $\xi$ starts to flow in the \textit{move-to-target} mode and $\mathbf{x}$ component of the solution evolves along the line segment $\mathcal{L}_s(\mathbf{x}_{\mathcal{J}}^i, \mathbf{0})$. Since $\mathcal{L}_s(\mathbf{c}_l, \mathbf{x}_{\mathcal{J}}^i)\cap\mathcal{L}_s(\mathbf{x}_{\mathcal{J}}^i, \mathbf{0}) = \mathbf{x}_{\mathcal{J}}^i$, the $\mathbf{x}$ component of the solution cannot enter the line segment $\mathcal{L}_s(\mathbf{c}_l, \mathbf{x}_{\mathcal{J}}^i)$ from within the set $\mathcal{KT}_l$ using the stabilizing feedback $-\gamma\mathbf{x}, \;\gamma> 0$.

At $(t_{\mathcal{J}}^i, j_{\mathcal{J}}^i + 1)$, the solution $\xi$ starts to evolve towards the origin along the line $\mathcal{L}(\mathbf{0}, \mathbf{x}_{\mathcal{J}}^i)$ in the \textit{move-to-target} mode. This solution (\textit{i.e.,} flowing in the \textit{move-to-target} mode) cannot leave the line segment $\mathcal{L}_s(\mathbf{0}, \mathbf{x}_{\mathcal{J}}^i)$ unless it encounters $\mathcal{J}_0\times\{0\}\times\mathbb{I}.$ Let us assume $\exists(t_{\mathcal{F}}^{i+1}, j_{\mathcal{F}}^{i+1})\succ(t_{\mathcal{J}}^i, j_{\mathcal{J}}^i)$ for some $i+1\in\mathbb{I}_l^{l+1}\backslash\{i, i^{\prime}\}$, hence, according to \eqref{update_law_for_m}, \eqref{direction_decision} and \eqref{update_law_for_k}, $\xi_{\mathcal{F}}^{i+1} = (\mathbf{x}_{\mathcal{F}}^{i+1}, -m^*, i+1)\in\mathcal{F}_{-m^*}^{i +1}\times\{-1, 1\}\times\mathbb{I}$. At this instance, since $\mathbf{x}_{\mathcal{F}}^{i+1}\in\overline{\tilde{\mathcal{F}}_1(-m^*, i+1)}$, defined in \eqref{partitions_of_the_flow_set}, \eqref{partitions_of_flow_set}, according to \eqref{to_show_11}, $\mathbf{x}(t, j)\in\mathcal{P}_{>}(\mathbf{0}, \nu_{-m^*}(\mathbf{x}_{\mathcal{F}}^{i+1})), \forall(t, j)\in((t_{\mathcal{F}}^{i+1}, t_{\mathcal{J}}^{i+1}], [j_{\mathcal{F}}^{i+1}, j_{\mathcal{J}}^{i+1}])$, and the $\mathbf{x}$ component of the solution $\xi$ enters in the interior of the set $\mathcal{X}_l$. The solution $\xi$ does not enter the set $\mathcal{L}_s(\mathbf{0}, \mathbf{x}_{\mathcal{J}}^i)\times\mathbb{M}\times\mathbb{I}, \forall(t, j)\in((t_{\mathcal{F}}^{i+1}, t_{\mathcal{J}}^{i+1}], [j_{\mathcal{F}}^{i+1}, j_{\mathcal{J}}^{i+1}])$.

Now, if $\mathbb{I}_{l}^{l+1}\backslash\{i, i + 1\}\ne \emptyset$, then for each $i^{\prime\prime}\in\mathbb{I}_{l}^{l+1}\backslash\{i, i + 1\}$, according to \eqref{direction_decision}, $m(t, j) = -m^*, \forall(t, j)\in([t_{\mathcal{F}}^{i^{\prime\prime}}, t_{\mathcal{J}}^{i^{\prime\prime}}]\times[j_{\mathcal{F}}^{i^{\prime\prime}}, j_{\mathcal{J}}^{i^{\prime\prime}}])$. Hence, if the solution $\xi(t, j)$ enters in the \textit{obstacle-avoidance} mode at any $(t, j)\in([t_{\mathcal{J}}^{i+1}, t_{l+1}^c]\times[j_{\mathcal{J}}^{i+1}, j_{l+1}^c])$, it will evolve in the interior of the set $\mathcal{KL}_l$. Hence, as per Lemma \ref{eventually_it_crosses_line}, it follows that the solution can only leave the set $\mathcal{KT}_l$ through $\mathcal{L}_s(\mathbf{0}, \mathbf{c}_l)\backslash\{\mathbf{c}_l\}$, which ensures that there exists some $\sigma^{c_l} > 0$, such that \eqref{intersecting_consecutive_points} is satisfied. Hence, every solution starting in  $\mathcal{K}\backslash\mathcal{Z}_0$ will converge to $\mathcal{A}$.

Finally, if we remove the jump set $\mathcal{J}$ from the flow set $\mathcal{F}$ to obtain the hybrid system with data $(\mathcal{F}\setminus\mathcal{J},\mathbf{F},\mathcal{J},\mathbf{J})$, and thus forcing the flows over jumps \cite{SANFELICE2010239}, the Zeno solution starting from $\mathcal{Z}_0$ is no longer a valid solution for the closed-loop system with these new data. In fact, for $\xi(0,0)\in\mathcal{Z}_0$, the solution will flow with the \textit{obstacle-avoidance} mode until it reaches $\mathcal{Z}$ and then flows with the \textit{move-to-target} mode afterwards. 

\subsection{Proof of lemma \ref{never_close_avoidance}}
\label{sec:never_close_avoidance}

According to \eqref{stabilization_mode_jumpflow_set_final} and \eqref{initial_condition_for_zeno_behaviour}, as $\mathcal{M}_0\subset\mathcal{J}_0$, the solution $\xi$ with $\xi(t_0, j_0)\in\mathcal{K}\backslash\mathcal{Z}_0$, cannot enter the set $\mathcal{Z}_0$ $\forall(t, j)\succeq(t_0, j_0)$, while flowing in the \textit{move-to-target} mode.

We consider the flow-only system \eqref{flow_only_system}, where $\mathbf{x}\in\mathcal{F}_m^k, m\in\{-1, 1\}, k\in\mathbb{I}$ \textit{i.e.}, the case wherein the solution is flowing in the \textit{obstacle-avoidance} mode, in the vicinity of an obstacle $\mathcal{O}_k, k\in\mathbb{I}$, and show that if $\mathbf{x}(t_0)\in\mathcal{F}_m^k\backslash\mathcal{M}_0$, then $\mathbf{x}(t)\notin\mathcal{M}_0, \forall t\geq t_0$. Since, according to Assumption \ref{assumption:robot_pass_through}, \eqref{individual_jump_set_avoidance_mode} and \eqref{avoidance_final_set}, the flow sets of the \textit{obstacle-avoidance} modes for the state $\mathbf{x}$, related to different obstacles, are disjoint \textit{i.e.}, $\mathcal{F}_m^p\cap\mathcal{F}_m^q = \emptyset, p, q\in\mathbb{I}, p\ne q$, one can repeat the analysis for the solution evolving in the flow set \textit{obstacle-avoidance} mode, related to the remaining obstacles.

Assume that $\mathbf{x}(t_0)\in\mathcal{F}_m^k$ such that $d(\mathbf{x}(t_0), \mathcal{D}_{r_a}(\mathcal{O}_k)) = \beta_1 \in(0, \epsilon_d]$. Let $\beta_2 = \min\{\epsilon, \beta_1\}.$ According to \eqref{avoidance_flow_set_individual} and \eqref{initial_condition_for_zeno_behaviour}, it is easy to see that the solution can enter the set $\mathcal{M}_0$ only via $\partial\mathcal{D}_{r_a + \beta_3}(\mathcal{O}_k)\cap\mathcal{F}_m^k$, where $\beta_3\in(0, \beta_2)$. For all $\mathbf{x}\in\partial\mathcal{D}_{r_a + \beta_3}(\mathcal{O}_k)\cap\mathcal{F}_m^k$, according to \eqref{proposed_hybrid_controller_2}, the control input $\mathbf{u}(\mathbf{x}, m, k) = \gamma\mathbf{v}(\mathbf{x}, m, k), \gamma > 0$. Hence, according to Lemma \ref{lemma:no_close_avoidance}, the solution $\mathbf{x}(t)$ to the flow-only system \eqref{flow_only_system} cannot enter the set $\mathcal{M}_0$ for all $t \geq t_0$, and according to Lemma \ref{eventually_exit_avoidance_mode}, it will ultimately enter in the \textit{move-to-target} in finite time.

\subsection{Proof of lemma \ref{eventually_it_crosses_line}}
\label{sec:eventuallY_crosses_line}

We consider $\xi(t_0, j_0)\in\left(\tilde{\mathcal{K}}_{m^*}(\mathbb{M})\backslash\mathcal{Z}_0\right)\cap((\mathcal{F}_0^*\backslash\mathcal{J}_0^*)\times\{0\}\times\mathbb{I})$ where $m^*\in\{-1, 1\}$ for some $(t_0, j_0)\in\text{ dom }\xi$, which can always be the case by virtue of Lemma \ref{eventually_exit_avoidance_mode} and Lemma \ref{never_close_avoidance}. Also, according to Lemma \ref{never_close_avoidance}, $\xi(t, j)\notin\mathcal{Z}_0, \forall(t, j)\succeq(t_0, j_0)$. Then at $(t_0, j_0)$, the state $\mathbf{x}$ will evolve along the line $\mathcal{L}(\mathbf{0}, \mathbf{x}(t_0, j_0))$ in the \textit{move-to-target} mode. Since $\mathcal{L}(\mathbf{0}, \mathbf{x}(t_0, j_0))\cap(\mathcal{P}(\mathbf{0}, \mathbf{s})\backslash\mathbf{0}) = \emptyset$, the solution cannot enter the set $\tilde{\mathcal{K}}_0(\mathbb{M})$ from $\tilde{\mathcal{K}}_{m^*}(\mathbb{M})$ while operating in the \textit{move-to-target} mode. Moreover, according to Lemma \ref{no_revolution_around_the_target}, the solution will never enter the set $\tilde{\mathcal{K}}_{<}(\mathbb{M})$ for all $(t, j)\succeq(t_0, j_0).$ Hence, according to Lemma \ref{eventually_it_crosses_line}, the only remaining possibilities, which we need to prove, are that the solution $\xi$ with $\xi(t_0, j_0)\in\tilde{\mathcal{K}}_{m^*}(\mathbb{M})$ will either enter the set $\tilde{\mathcal{K}}_{>}(\{-1, 1\})\backslash\mathcal{Z}_0$ while flowing in the \textit{obstacle-avoidance} mode or directly converge towards the set $\mathcal{A}$ while operating with the \textit{move-to-target} mode without entering the set $\tilde{\mathcal{K}}_{>}(\mathbb{M})$.

Let $\alpha_{-1}(\mathbf{a}, \mathbf{b})$ and $\alpha_{1}(\mathbf{a}, \mathbf{b})$ denote the absolute values of an angle measured from vector $\mathbf{a}$ to vector $\mathbf{b}$ in the counter-clockwise and clockwise directions, respectively. If $\mathbf{\xi}(t, j)\notin \mathcal{J}_0\times\{0\}\times\mathbb{I}, \;\forall (t, j)\succeq(t_0, j_0)$, then the $\mathbf{x}$ component of the solution $\xi$, under the influence of the stabilizing vector $-\gamma\mathbf{x}, \gamma > 0$, will asymptotically converge towards the origin.

On the other hand, assume that the solution $\xi$ encounters the jump set $\mathcal{J}_0^i\times\{0\}\times\mathbb{I}$ for some $i\in\mathbb{I}$ \textit{i.e.},  $\exists(t_{\mathcal{F}}^{i}, j_{\mathcal{F}}^i)\in\text{ dom }\xi, (t_{\mathcal{F}}^{i}, j_{\mathcal{F}}^i)\succ(t_0, j_0)$. According to \eqref{update_law_for_m},\eqref{direction_decision} and \eqref{update_law_for_k}, $\xi_{\mathcal{F}}^i = (\mathbf{x}_{\mathcal{F}}^i, m^*, i)$. Then according to Lemma \ref{eventually_exit_avoidance_mode}, $\exists(t_{\mathcal{J}}^i, j_{\mathcal{J}}^i )\succ(t_{\mathcal{F}}^i, j_{\mathcal{F}}^i)$ such that $\xi_{\mathcal{J}}^i = (\mathbf{x}_{\mathcal{J}}^i, m^*, i)$. According to Lemma \ref{never_close_avoidance}, the solution cannot enter the set $\mathcal{Z}_0$, hence the locations $\mathbf{x}_{\mathcal{F}}^i$ and $\mathbf{x}_{\mathcal{J}}^i$ belong to the set $(\overline{\tilde{\mathcal{F}}_{1}(m^*, i)}\backslash\mathcal{M}_0)\subset\mathcal{F}_{m^*}^i$, defined in \eqref{partitions_of_the_flow_set}, \eqref{partitions_of_flow_set}. Then, according to  \eqref{to_show_11}, $\mathbf{x}_{\mathcal{J}}^i\in\mathcal{P}_{>}(\mathbf{0}, \nu_{m^*}(\mathbf{x}_{\mathcal{F}}^i))$. Assuming $\xi(t_{\mathcal{J}}^i, j_{\mathcal{J}}^i)\in\tilde{\mathcal{K}}_{m^*}(\mathbb{M})$, one has
\begin{equation}
    \alpha_{m^*}(\mathbf{x}_{\mathcal{F}}^i, \nu_{1}(\mathbf{s})) > \alpha_{m^*}(\mathbf{x}_{\mathcal{J}}^i, \nu_{1}(\mathbf{s})) > 0.\label{moving_closer_to_the_line_1}
\end{equation}

Hence, for any solution $\xi$ to the hybrid closed-loop system \eqref{hybrid_closed_loop_system} with $\xi(t_0, j_0)\in\left(\tilde{\mathcal{K}}_{m^*}(\mathbb{M})\backslash\mathcal{Z}_0\right)\cap(\mathcal{F}_0^*\backslash\mathcal{J}_0^*)\times\{0\}\times\mathbb{I}$ for some $(t_0, j_0)\in\text{ dom }\xi$, if there exists $(t_{\mathcal{F}}^i, j_{\mathcal{F}}^i)\succ(t_0, j_0)$ for some $i\in\mathbb{I}$ with $\xi_{\mathcal{J}}^i\in\tilde{\mathcal{K}}_{m^*}(\mathbb{M})$, then the angle between the vectors $\mathbf{x}$ and $\nu_1(\mathbf{s})$ \textit{i.e.}, $\alpha_{m^*}(\mathbf{x}(t, j), \nu_{1}(\mathbf{s}))$ reduces, otherwise, if there does not exist $(t_{\mathcal{F}}^i, j_{\mathcal{F}}^i)\succ(t_0, j_0)$ for any $i\in\mathbb{I}$, \textit{i.e.}, if the solution does not encounter the \textit{move-to-target} mode jump set after $(t_0, j_0)$, then it will asymptotically converge to the target set $\mathcal{A}$, under the influence of the stabilizing control input $-\gamma\mathbf{x}, \gamma> 0$.

Next, assume that the solution $\xi$ again encounters the jump set $\mathcal{J}_0\times\{0\}\times\mathbb{I}$ for $i+1\in\mathbb{I}\backslash\{i\}$. Hence, $\exists(t_{\mathcal{F}}^{i+1}, j_{\mathcal{F}}^{i +1})\succ(t_{\mathcal{J}}^{i}, j_{\mathcal{J}}^i)$ and according to Lemma \ref{eventually_exit_avoidance_mode}, $\exists(t_{\mathcal{J}}^{i+1}, j_{\mathcal{J}}^{i +1})\succ(t_{\mathcal{F}}^{i + 1}, j_{\mathcal{F}}^{i + 1})$ such that $\xi_{\mathcal{F}}^{i+1} = (\mathbf{x}_{\mathcal{F}}^{i+1}, i+1, m^*)$ and $\xi_{\mathcal{J}}^{i + 1} = (\mathbf{x}_{\mathcal{J}}^{i + 1}, {i + 1}, m^*)$. Again assume $\xi_{\mathcal{J}}^{i + 1}\in\tilde{\mathcal{K}}_{m^*}(\mathbb{M})$.  Hence, similar to the previous case,
\begin{equation}
        \alpha_{m^*}(\mathbf{x}_{\mathcal{F}}^{i + 1}, \nu_{1}(\mathbf{s})) > \alpha_{m^*}(\mathbf{x}_{\mathcal{J}}^{i + 1}, \nu_{1}(\mathbf{s})) > 0.\label{moving_closer_to_the_line_2}
\end{equation}
Also, as the $\mathbf{x}$ component of the solution $\xi$ evolved on a straight line towards the origin while operating in the \textit{move-to-target} mode, $\forall (t, j)\in([t_{\mathcal{J}}^i, t_{\mathcal{F}}^{i+1}]\times[j_{\mathcal{J}}^i, j_{\mathcal{F}}^{i+1}])$, $\alpha_{m^*}( \mathbf{x}(t, j), \nu_1(\mathbf{s})) = \alpha_{m^*}( \mathbf{x}_{\mathcal{J}}^i, \nu_1(\mathbf{s}))$. As a result, one has
\begin{equation*}
        \alpha_{m^*}(\mathbf{x}_{\mathcal{F}}^{i}, \nu_{1}( \mathbf{s})) > \alpha_{m^*}(\mathbf{x}_{\mathcal{F}}^{i + 1}, \nu_{1}(\mathbf{s}))> \alpha_{m^*}(\mathbf{x}_{\mathcal{J}}^{i + 1}, \nu_{1}(\mathbf{s})) > 0
\end{equation*}
The angle $\alpha_{m^*}(\nu_{1}(\mathbf{s}), \mathbf{x}) = 0$ implies that the state $\xi\in\tilde{\mathcal{K}}_{0}(\mathbb{\mathbb{M}})\backslash\tilde{\mathcal{K}}_{<}(\mathbb{M}).$ 

This implies that with each hybrid sequence of jumps from the \textit{move-to-target} mode to the \textit{obstacle-avoidance} and \textit{vice versa}, the solution $\xi$, operating in the set $\tilde{\mathcal{K}}_{m^*}(\{0\})\backslash\mathcal{Z}_0$, evolve towards the set $\tilde{\mathcal{K}}(\mathbb{M})\cup\tilde{\mathcal{K}}_{-m^*}(\mathbb{M})$, in the sense that the angle between the vectors $\mathbf{x}$ and $\nu_{1}(\mathbf{s})$ \textit{i.e.}, $\alpha_{m^*}(\mathbf{x}, \nu_1(\mathbf{s}))$ decreases. Also, according to Lemma \ref{eventually_exit_avoidance_mode}, the solution $\xi$, which is operating in the \textit{obstacle-avoidance} mode, always enter in the \textit{move-to-target} mode in finite time. Hence, it can be concluded that the solution, which belongs to the set $\tilde{\mathcal{K}}_{m^*}(\{0\}), m^*\in\{-1, 1\},$ at some time, either directly converges towards the set $\mathcal{A}$ or intersects the set $\tilde{\mathcal{K}}_{>}(\{-1, 1\})$ in finite time. 

\subsection{Proof of Proposition \ref{proposition:continuous_control}}\label{proof:proposition_continuous}
According to Lemma \ref{hybrid_basic_conditions}, the control input $\mathbf{u}(\mathbf{x}, m, k)$ in \eqref{proposed_hybrid_controller_2} is continuous while the robot is operating not only in the \textit{move-to-target} mode \textit{i.e.}, when $(\mathbf{x}, m, k)\in\mathcal{F}_0\times\{0\}\times\mathbb{I}$ but also in the \textit{obstacle-avoidance} mode \textit{i.e.}, when $(\mathbf{x}, m, k)\in\mathcal{F}_{z}\times\{z\}\times\mathbb{I}, \;z\in\{-1, 1\}.$ We only need to verify the continuity of the control input $\mathbf{u}(\xi)$ at instances when the solution $\xi$ to the hybrid closed-loop system \eqref{hybrid_closed_loop_system} leaves the \textit{move-to-target} mode and enters the \textit{obstacle-avoidance} mode, and \textit{vice versa}.

Note that since $\xi(t_0, j_0)\in\mathcal{K}\backslash\mathcal{Z}_0$ for some $(t_0, j_0)\in\text{dom }\xi$, according to Lemma \ref{never_close_avoidance}, the solution $\xi$ cannot enter the set $\mathcal{Z}_0$ for all $(t, j)\succeq(t_0, j_0)$, and hence cannot get stuck in the Zeno behaviour for all future times.% which implies that the solution with initial conditions in $\mathcal{K}\backslash\mathcal{Z}_0$ can never get stuck in the Zeno behaviour, see Remark \ref{dwell_time_remark}.

During the \textit{move-to-target} mode, the state $\mathbf{x}$ evolves along the line joining the center of the robot and the origin. Hence, as can be observed from Fig. \ref{jump_and_flow_sets}, for the robot operating in the \textit{move-to-target} mode, a solution $\xi$ can enter in the jump set of the \textit{move-to-target} mode for some obstacle $\mathcal{O}_i, \;i\in\mathbb{I}$, only via the region $\left(\partial\mathcal{D}_{r_a + \epsilon_s}(\mathcal{O}_i)\cap \mathcal{J}_0^i\right)\times\{0\}\times\{i\}$. Let $(t_0, j_0)\in\text{ dom }\xi$ such that $\xi(t_0, j_0) \in \big(\partial\mathcal{D}_{r_a + \epsilon_s}(\mathcal{O}_i)\cap \mathcal{J}_0^i\big)\times\{0\}\times\{i\}$. Hence, according to \eqref{proposed_hybrid_controller_2}, the control input vector at $(t_0, j_0)$ is given as
\begin{equation}
    \mathbf{u}(\xi(t_0, j_0)) = -\gamma\mathbf{x}(t_0, j_0).
\end{equation}
According to \eqref{update_law_for_m}, $\xi(t_0, j_0 + 1) \in \big(\partial\mathcal{D}_{r_a + \epsilon_s}(\mathcal{O}_i)\cap \mathcal{J}_0^i\big)\times\{-1, 1\}\times\{i\}$, and the control input $\mathbf{u}(\xi(t_0, j_0 + 1))$, according to \eqref{proposed_hybrid_controller_2}-\eqref{beta_function_definition}, is given as
\begin{equation}
    \mathbf{u}(\xi(t_0, j_0 + 1)) = -\gamma\mathbf{x}(t_0, j_0 + 1).
\end{equation}
Since, according to \eqref{hybrid_closed_loop_system}, $\mathbf{x}(t_0, j_0 + 1) = \mathbf{x}(t_0, j_0)$, when the solution leaves the \textit{move-to-target} mode and enters in the \textit{obstacle-avoidance} mode, the control vector trajectories remain continuous.

Next, we consider the case where the robot operating in the \textit{obstacle-avoidance} mode enters in the \textit{move-to-target} mode. According to Lemma \ref{eventually_exit_avoidance_mode}, the component $\mathbf{x}$ of the solutions, evolving in the \textit{obstacle-avoidance} mode in the flow set $\mathcal{F}_z^i, i\in\mathbb{I}$, for some $z\in\{-1, 1\}$, will eventually leave the \textit{obstacle-avoidance} mode via the gate region $\mathcal{G}_z^i$. Let $\xi(t_1, j_1)\in\mathcal{F}_z\times\{z\}\times\mathbb{I}, \;z\in\{-1,1\}$ for some $(t_1, j_1)\in\text{ dom }\xi$, then according to Lemma \ref{eventually_exit_avoidance_mode}, $\exists(t_2, j_1)\succeq(t_1, j_1)$ such that $\mathbf{x}(t_2, j_1)\in\mathcal{G}_z^i$. Then according to \eqref{individual_jump_set_avoidance_mode} and \eqref{update_law_for_m}, at $(t_2, j_1 + 1)$ the solution enters in the \textit{move-to-target} mode flow set \textit{i.e.}, $\xi(t_2, j_1 + 1)\in\mathcal{F}_0\times\{0\}\times\mathbb{I}$. Hence, at $(t_2, j_1)$, the control input vector $\mathbf{u}(\xi(t_2, j_1))$ is evaluated as
\begin{equation}
\begin{aligned}
    \mathbf{u}(\xi(t_2, &j_1)) = -\gamma\kappa(\xi(t_2, j_1))\mathbf{x}(t_2, j_1) \\&+ \gamma[1 - \kappa(\xi(t_2, j_1))]\mathbf{v}(\xi(t_2, j_1)).
    \end{aligned}
\end{equation}
According to the definition of the vector $\mathbf{v}(\xi)$ in \eqref{definition_of_vim} and the gate region $\mathcal{G}_z^i$ in \eqref{gate_region}, it is evident that at $(t_2, j_1)$ the vectors $-\mathbf{x}(t_2, j_1)$ and $\mathbf{v}(\xi(t_2, j_1))$ are equal. Hence, $\mathbf{u}(\xi(t_2, j_1))$ can equivalently be expressed as
\begin{equation}
    \mathbf{u}(\xi(t_2, j_1)) = -\gamma\mathbf{x}(t_2, j_1).
\end{equation}
At $(t_2, j_1 + 1)$, according to \eqref{update_law_for_m}, $\xi(t_2, j_1 + 1)\in\mathcal{F}_0\times\{0\}\times\mathbb{I}$. Hence, the control input vector $\mathbf{u}(\xi(t_2, j_1 + 1))$ is given as
\begin{equation}
    \mathbf{u}(\xi(t_2, j_1 + 1)) = -\gamma\mathbf{x}(t_2 , j_1 + 1).
\end{equation}
Since, according to \eqref{hybrid_closed_loop_system}, $\mathbf{x}(t_2, j_1) = \mathbf{x}(t_2, j_1 + 1)$, $\mathbf{u}(\xi(t_2, j_1)) = \mathbf{u}(\xi(t_2, j_1 + 1))$. As a result, when the solution flowing in the \textit{obstacle-avoidance} mode, enters the \textit{move-to-target} mode, the control vector trajectories remain continuous.
\end{appendix}

\bibliographystyle{IEEEtran}
\bibliography{reference}

\end{document}